\documentclass{article}

\usepackage{microtype}
\usepackage{graphicx}
\usepackage{subcaption}
\usepackage{booktabs} %

\usepackage{hyperref}

\usepackage[accepted]{icml2026}

\usepackage{amsmath}
\usepackage{amssymb}
\usepackage{mathtools}
\usepackage{amsthm}

\usepackage[capitalize,noabbrev]{cleveref}

\theoremstyle{plain}
\newtheorem{theorem}{Theorem}[section]
\newtheorem{proposition}[theorem]{Proposition}
\newtheorem{lemma}[theorem]{Lemma}

\theoremstyle{definition}

\theoremstyle{remark}

\usepackage[textsize=tiny]{todonotes}

\usepackage{comment}
\usepackage[dvipsnames]{xcolor}
\definecolor{mydarkred}{rgb}{0.6,0,0}
\definecolor{myblue}{HTML}{268BD2}
\definecolor{mygreen}{HTML}{658354}
\definecolor{orangeinplot}{HTML}{e29c7a}
\definecolor{purpleinplot}{HTML}{7676a4}
\definecolor{greeninplot}{HTML}{288308}

\usepackage{mathrsfs}
\usepackage{url}            %
\usepackage{booktabs}       %
\usepackage{amsfonts}       %
\usepackage{nicefrac}       %
\usepackage{microtype}      %
\usepackage{makecell}
\usepackage{multicol}
\usepackage{multirow}
\usepackage{verbatim}
\newcommand{\indep}{\perp \!\!\! \perp}
\usepackage{wrapfig}
\usepackage{amsmath}
\usepackage{amssymb}
\usepackage{mathtools}
\usepackage{amsthm}
\usepackage[most]{tcolorbox}
\usepackage{soul}
\usepackage{enumitem}

\usepackage{fontawesome5}

\theoremstyle{plain}

\usepackage{mdframed}
\definecolor{propbg}{HTML}{F2F2F9}
\definecolor{propfr}{HTML}{00007B}

\icmltitlerunning{\textsc{Tide}: Graph Out-of-Distribution Detection via Tri-Component Information Decomposition}

\begin{document}

\twocolumn[
  \icmltitle{What Information Matters? Graph Out-of-Distribution Detection via Tri-Component Information Decomposition}

  \icmlsetsymbol{equal}{*}

  \begin{icmlauthorlist}
    \icmlauthor{Danny Wang}{sch}
    \icmlauthor{Ruihong Qiu}{sch}
    \icmlauthor{Zi Huang}{sch}
  \end{icmlauthorlist}

  \icmlaffiliation{sch}{The University of Queensland, Australia}

  \icmlcorrespondingauthor{Danny Wang}{danny.wang@uq.edu.au}

  \icmlkeywords{Machine Learning, ICML}

  \vskip 0.3in
]

\printAffiliationsAndNotice{}  %

\begin{abstract}
Graph neural networks are widely used for node classification, but they remain vulnerable to out-of-distribution (OOD) shifts in node features and graph structure. Prior work established that methods trained with standard supervised learning (SL) objectives tend to capture spurious signals from either features and/or structure, leaving the model fragile under distributional changes. To address this, we propose \textsc{Tide}, a \textbf{novel and effective \underline{T}ri-Component \underline{I}nformation \underline{De}composition} framework that \textbf{explicitly decomposes information} into \textit{feature-specific, structure-specific and joint} components. \textsc{Tide} aims to \textbf{preserve only }the label-relevant part of the joint information while \textbf{filtering out} spurious feature- and structure-specific information, thereby enhancing the separation between in-distribution (ID) and OOD nodes. 
Beyond the framework, we provide theoretical and empirical analyses showing that an information bottleneck objective is preferable to standard SL for graph OOD detection, with higher ID confidence and a greater entropy gap between ID and OOD data. Extensive experiments across seven datasets confirm the efficacy of \textsc{Tide}, achieving up to a $34\%$ improvement in FPR95 over strong baselines while maintaining competitive ID accuracy. Code is available at \href{https://github.com/DannyW618/TIDE}{\faGithub}.
\end{abstract}

\vspace{-0.4cm}
\section{Introduction}

\begin{figure}[t!]
    \centering
        \includegraphics[width=0.48\textwidth]{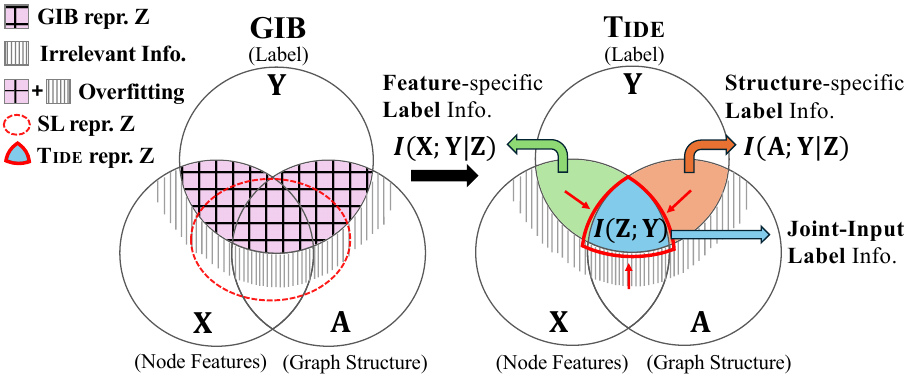}
    \caption{
    \textbf{Comparison of information captured by standard supervised learning (SL), GIB \citep{GIB}, and \textsc{Tide}.} The \textcolor{red}{dashed red circle} shows that SL mixes label-relevant and irrelevant feature and structure signals, causing spurious correlations that make OOD nodes appear ID. The \textcolor{Orchid}{gridded region} indicates that GIB suppresses label-irrelevant noise but still keeps input-specific spurious cues. Contrary, \textsc{Tide} (\textcolor{red}{solid red triangle}) isolates the joint label-relevant information by decomposing spurious feature-/structure-specific components, thereby improving OOD detection.
    }
    \label{fig:mutual_info}
    \vspace{-0.5cm}
\end{figure}

Graph neural networks (GNNs) have become a dominant approach for node classification across real-world domains; however, they remain vulnerable to out-of-distribution (OOD) shifts in node features and/or graph structure, limiting reliable deployment in real-world applications~\citep{GOODSurvey, GNNSafe, GOOD, MoleOOD, Graph_molecule_ood, HealthOOD}. 
In biomedical graphs, node features like biomarkers may correlate with health outcomes in one hospital but shift in another, degrading performance despite stable patient-symptom links (\textbf{feature shift}). In e-commerce graphs, co-purchase links can change as shopping patterns evolve, even when product attributes remain stable (\textbf{structural shift}). In social networks, new communities may alter both posting behaviour and friendship links, causing simultaneous changes in features and structure signals (\textbf{joint shift})~\citep{TNTOOD}. 

Given these challenges, OOD detection is critical for identifying nodes beyond the in-distribution (ID) training data~\citep{NLPOODD, OODLink, GOODD-uncertainty}. Existing graph OOD methods typically handle shifts by learning invariant node representations, using topology-aware metrics to capture node irregularities~\citep{GraphDE, OODGAT, TopoOOD}, or applying post-hoc scoring functions to separate ID from OOD nodes~\citep{Wang21energy, energy, GNNSafe, MSP, MaxLogit, Mahalanobis}. Others incorporate contrastive learning or OOD exposure when auxiliary OOD data is available~\citep{OE, NODESAFE}.

While effective, many existing approaches rely on a \textbf{mixed representation of a graph's features and structure}, without explicitly modelling the separate contributions of features $\mathbf{X}$ and structure $\mathbf{A}$~\citep{GNNSafe}. This mixed representation corresponds to the node embeddings $\mathbf{Z} = f(\mathbf{X}, \mathbf{A})$ learned by a GNN $f$ under a standard supervised-learning (SL) objective. As shown in Figure~\ref{fig:mutual_info}, this representation (\textcolor{red}{red dotted circle}) mixes label-irrelevant information (shaded area) with three label-related signals: the \textcolor{ForestGreen}{feature-specific} signal that depends only on $\mathbf{X}$, the \textcolor{Orange}{structure-specific} signal from $\mathbf{A}$, and the \textcolor{RoyalBlue}{joint-input} signal arising from features-structure interactions. Although all three signals can be predictive in-distribution, prior work has shown that feature-only and structure-only signals often reflect spurious correlations with the label, which become problematic under distribution shifts~\citep{spurious_corre2,spurious_corre1,spurious_corre3}. For example, under a feature shift with ID-like structure $(\mathbf{X}_\text{OOD}, \mathbf{A}_\text{ID})$, a model that exploits spurious structure-label correlations may confidently treat OOD nodes as ID. Similarly, under a structure shift with ID-like features $(\mathbf{X}_\text{ID}, \mathbf{A}_\text{OOD})$, reliance on feature-only correlations can again yield overconfident predictions that evade detection. In contrast, the joint-input signal is more shift-indicative because mismatches between features and structure are exposed during encoding. Thus, OOD detection can fail under SL when the model relies on spurious single-input correlations (from $\mathbf{X}$ or $\mathbf{A}$) rather than on joint-input information. A typical example with distribution visualisations is shown in Figure~\ref{fig:dist_viz}. When the dataset contains samples with ID features but OOD structure $(\mathbf{X}_\text{ID}, \mathbf{A}_\text{OOD})$, a standard SL-trained OOD detector (top left) fails to clearly separate OOD data from ID. Yet an expected behaviour from a successful detector should be a well separation given the structural shift as in bottom left or top right in Figure~\ref{fig:dist_viz}.

Ideally, to handle real-world distribution shifts, \textbf{an effective graph OOD detector should prioritise the label-relevant joint-input information while filtering out feature- and structure-specific spurious correlations}. Figure~\ref{fig:mutual_info} illustrates this motivation: SL mixes feature-only, structure-only, and joint signals together with label-irrelevant information (\textcolor{red}{red dotted circle}); in contrast, compressing the learned representation toward the triangle-like overlap (\textcolor{red}{red solid triangle}) preserves primarily joint-input signals, which improves ID-OOD separation. 
To achieve this, we propose \textsc{Tide}, a Tri-Component Information Decomposition framework. Unlike prior OOD detection methods that treat feature and structure information as a single unified signal, \textsc{Tide} guides learning toward the desired joint-input region by explicitly decomposing predictive label information $I(\mathbf{X},\mathbf{A};\mathbf{Y})$ into: \textcolor{ForestGreen}{feature-specific information} $I(\mathbf{X}; \mathbf{Y}|\mathbf{Z})$, \textcolor{Orange}{structure-specific information} $I(\mathbf{A}; \mathbf{Y}|\mathbf{Z})$, and \textcolor{RoyalBlue}{joint-input information} $I(\mathbf{Z}; \mathbf{Y})$. 
\textsc{Tide} \textbf{preserves} the joint-input signal while suppressing individual-input components (indicated by the left and right arrows \textcolor{red}{$\rightarrow$}). Moreover, we employ an information bottleneck (IB) objective to filter out label-irrelevant noise (indicated by the upward arrow \textcolor{red}{$\uparrow$}), producing a compact and shift-indicative representation. We provide theoretical insights showing that, compared to SL, IB increases ID confidence and improves ID-OOD separation for logit-based detection. Thus, our \textbf{contributions} are: 
\begin{figure}[t!]
    \centering
    \begin{subfigure}[b]{0.23\textwidth}
        \includegraphics[width=1\textwidth]{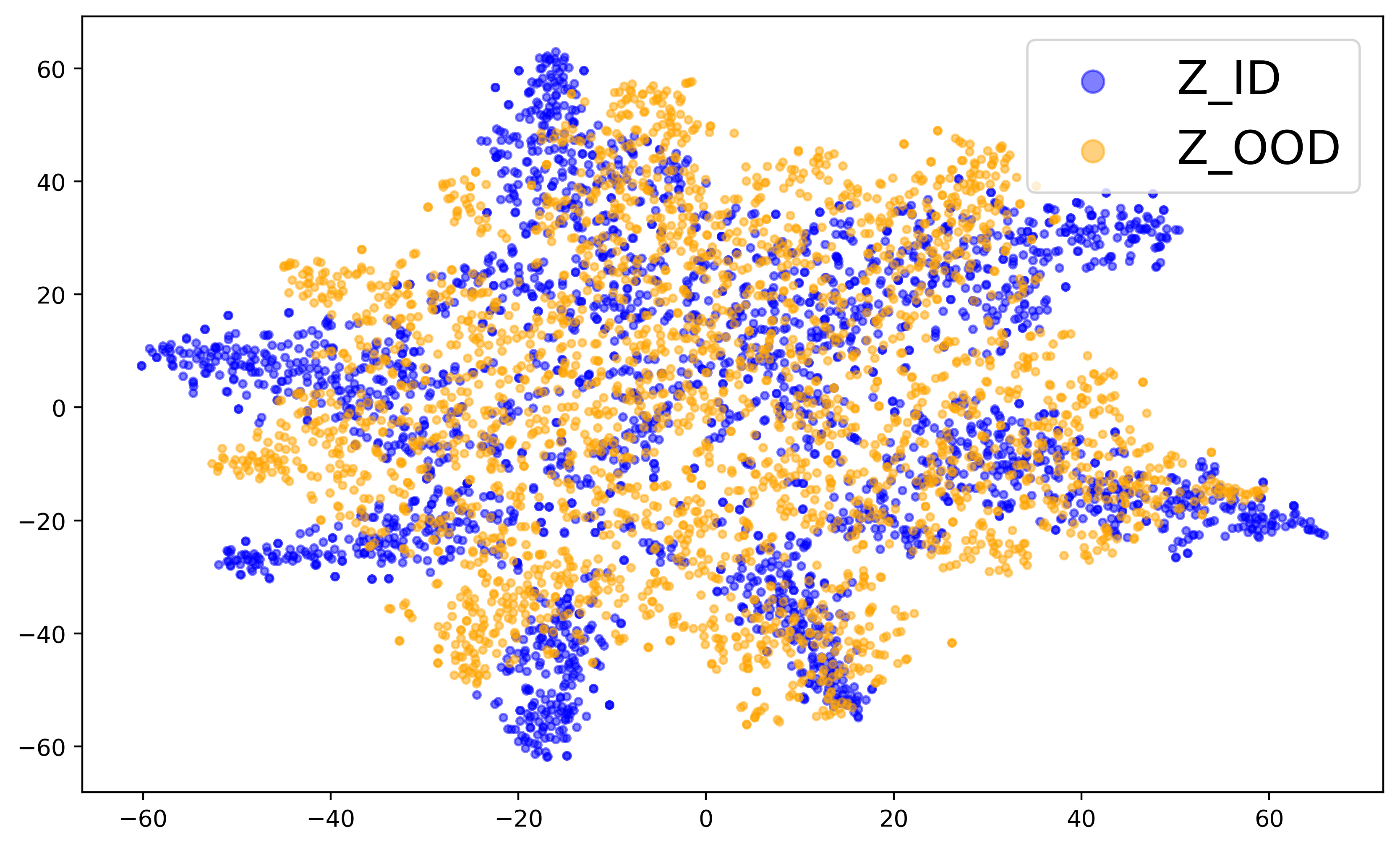}
        \caption{Standard SL Detector}
        \label{fig:fig1}
    \end{subfigure}
    \hfill
    \begin{subfigure}[b]{0.23\textwidth}
        \includegraphics[width=1\textwidth]{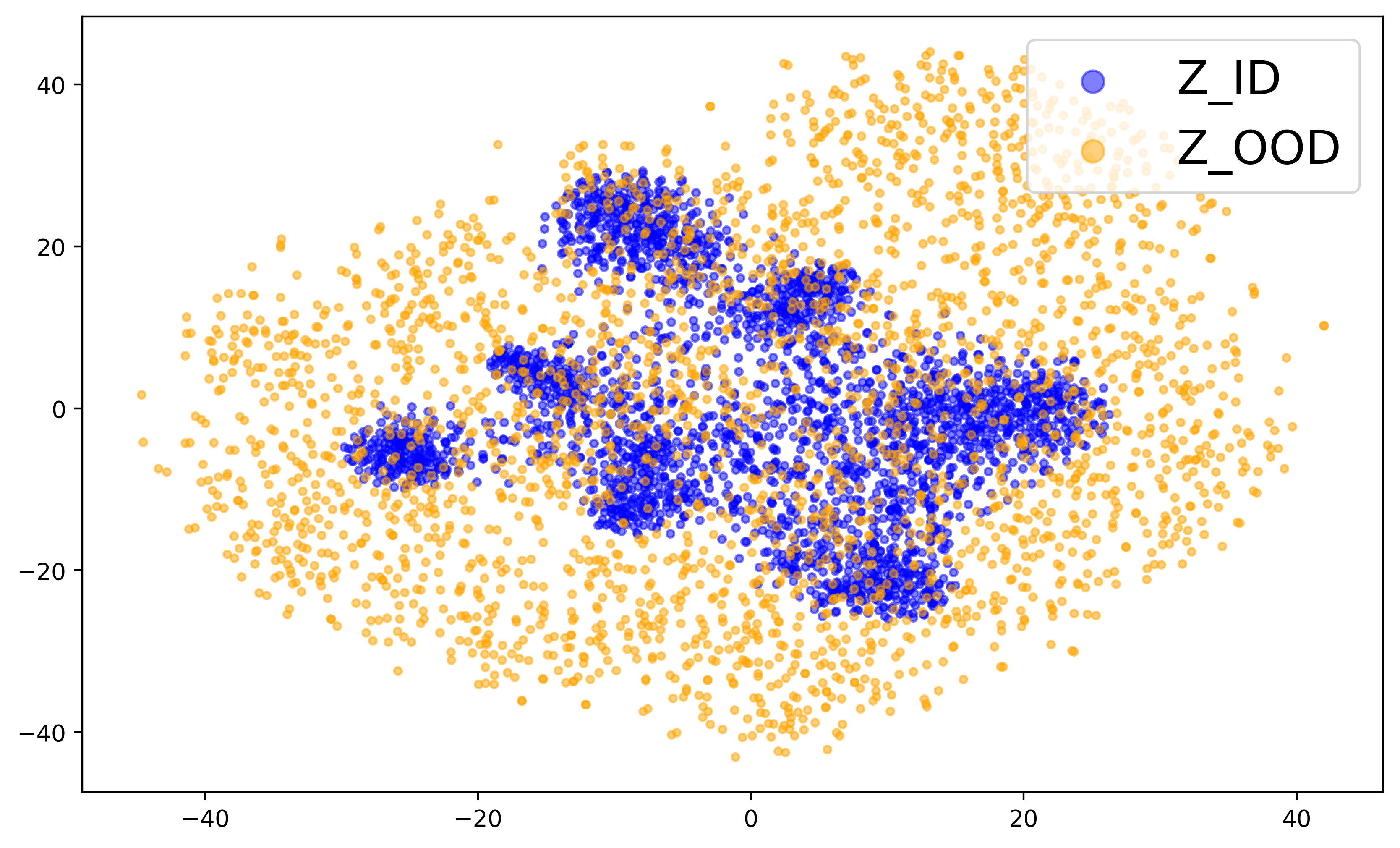}
        \caption{Joint Cls. Network}
        \label{fig:fig2}
    \end{subfigure}
    \hfill
    \begin{subfigure}[b]{0.23\textwidth}
        \includegraphics[width=1\textwidth]{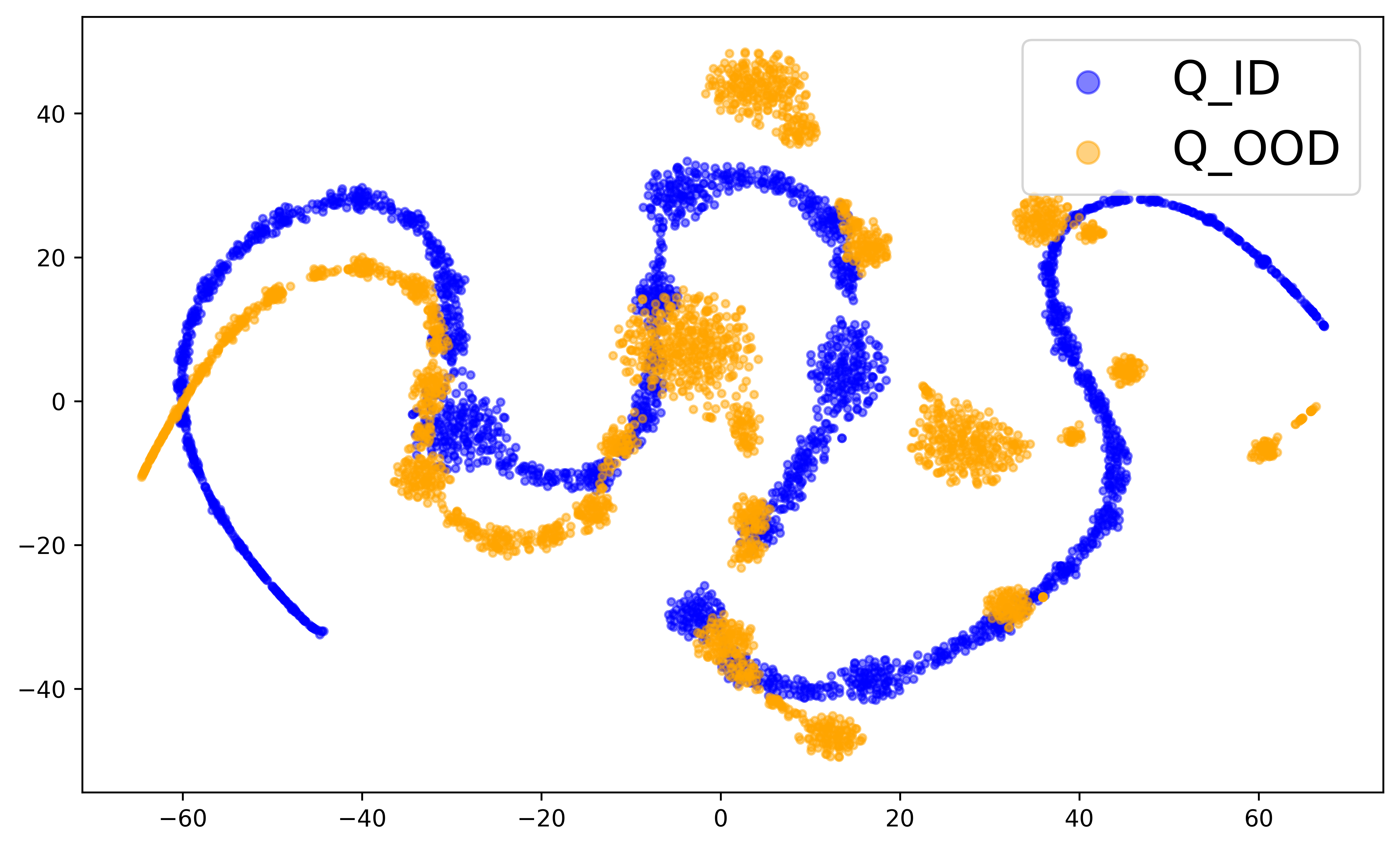}
        \caption{Struc. Network}
        \label{fig:fig3}
    \end{subfigure}
    \hfill
    \begin{subfigure}[b]{0.23\textwidth}
        \includegraphics[width=1\textwidth]{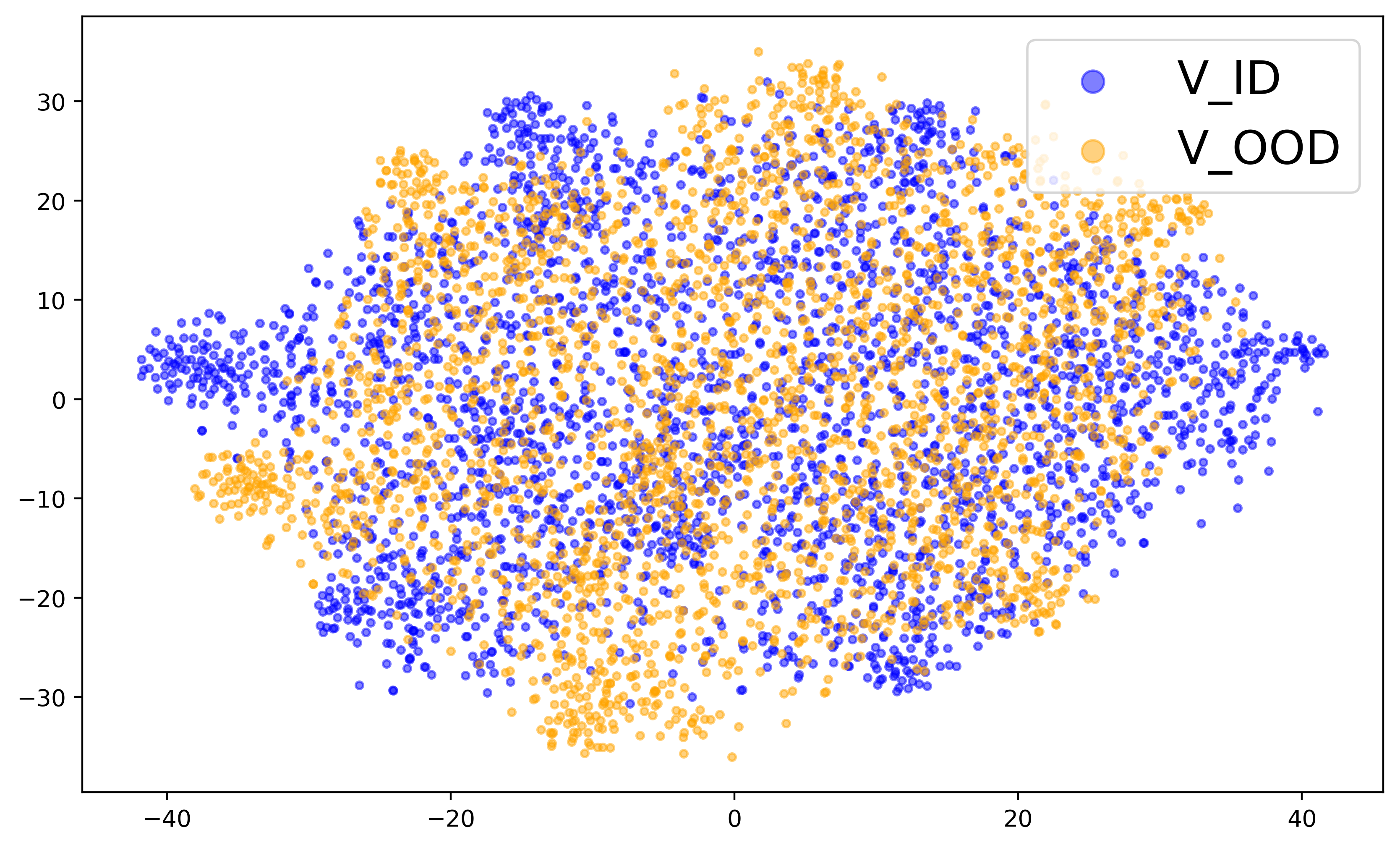}
        \caption{Feat. Network}
        \label{fig:fig4}
    \end{subfigure}
    \caption{ID-OOD representation distributions for standard SL trained detector and \textsc{Tide}'s joint classification (Cls.), structure (Struc.) and feature (Feat.) networks on \texttt{Cora-Structure} with a structure shift and ID-like features $(\mathbf{X}_\text{ID}, \mathbf{A}_\text{OOD})$.}
    \label{fig:dist_viz}
    \vspace{-0.4cm}
\end{figure}
\vspace{-0.8cm}
\begin{enumerate}[leftmargin=*,labelsep=0.0em]
    \item \textbf{Methodological:} We propose \textsc{Tide}, a novel \textbf{tri-component information decomposition framework} for graph OOD detection that decomposes predictive information into \textbf{feature-specific}, \textbf{structure-specific}, and \textbf{joint} components, and introduces 
    novel regularisers to suppress label-irrelevant and spurious correlations.
    \vspace{-0.25cm}
    \item \textbf{Theoretical:} We show that IB increases ID confidence and widens the entropy gap between ID and OOD relative to standard SL, improving logit-based OOD detection.
    \vspace{-0.25cm}
    \item \textbf{Empirical:} On seven real-world and synthetic datasets, \textsc{Tide} achieves up to \textbf{34\% improvement in FPR95} over baselines while maintaining competitive ID accuracy.
\end{enumerate}
\vspace{-0.6cm}
\section{Related Work}
\label{Sec:Related work}
Graph OOD detection builds on several lines of research.  
\textbf{1) Scoring-based methods} rely on ID data to design OOD scores~\citep{ConfOOD, GenOE, SGOOD, grasp, GOOD-D, MSP}.  
\textbf{2) OOD exposure approaches} incorporate auxiliary OOD data during training for improved detection~\citep{OE, energy, Textual-OODExposure, DivOE, NODESAFE, TopoOOD}.  
\textbf{3) IB principle} has been applied to image and graph representation learning~\citep{GIB, VIB, IB}, but its role in OOD detection has mainly been studied in Euclidean domains~\citep{DRL, Communication_OODD_IB, ObjDetec}. Prior work analysed overconfidence only under the standard supervised objective for ID ($\max I(\mathbf{Z};\mathbf{Y})$)~\citep{OOD_Info_theoretical_view}, whereas our \textbf{theoretical contribution} (Sec.~\ref{sec:Theoretical_insights}) characterises how the full IB objective improves ID confidence and logit-based OOD detection -- a gap unaddressed in earlier studies~\citep{OOD_Info_theoretical_view, Unc_VIB_ood}.  
\textbf{4) Graph-specific methods} include \textsc{GNNSafe/++} with propagation-based detection~\citep{GNNSafe}, \textsc{NODESafe/++} with constrained energy scores~\citep{NODESAFE}, DeGEM with multi-hop energy-based modelling for heterophilic graphs~\citep{DEGEM}, and GOLD with pseudo-OOD embedding synthesis~\citep{GOLD}. Detailed review is in Appendix~\ref{Appendix:related_work}.
\vspace{-0.3cm}
\section{Preliminary}
\label{Sec:Preliminary}
\textbf{Node Classification and Graph Representation.} For node classification, a graph is  represented as $\mathcal{G} = (\mathbf{X}, \mathbf{A})$, where $\mathbf{X} \in \mathbb{R}^{n \times d}$ is the node feature matrix and $\mathbf{A} \in \mathbb{R}^{n \times n}$ is the adjacency matrix, with number of nodes $n$ and feature dimension $d$. 
Given $C$ classes, each node $i$ has a label $y_i \in \{1, 2, \dots, C\}$. In this paper, we consider two tasks:

\textbf{Task 1: In-distribution Classification.}  
Given test nodes from the same distribution as training ($P_{train}(\mathbf{X,A}) = P_{test}(\mathbf{X,A})$ and $P_{train}(\mathbf{y}|\mathbf{X,A}) = P_{test}(\mathbf{y}|\mathbf{X,A})$), the goal is to train an $L$-layer GNN to predict node labels $ \hat{\mathbf{y}} \in \mathbb{R}^n $ (refer to Appendix~\ref{Appendix:GNN} for further details on GNN):
\[
\hat{\mathbf{y}} = \text{Softmax}(\text{GNN}(\mathbf{X}, \mathbf{A})).
\]
\textbf{Task 2: Out-of-distribution Detection.}  
Here, the objective is to identify test nodes from a different distribution ($P_{train}(\mathbf{X,A}) \neq P_{test}(\mathbf{X,A})$ or $P_{train}(\mathbf{y}|\mathbf{X,A}) \neq P_{test}(\mathbf{y}|\mathbf{X,A})$). This is formulated by an OOD detector $F$ with scoring function $S$ and threshold $\tau$:
\begin{equation}  
F(\mathbf{x}, \mathbf{A}; \text{GNN})=  
\begin{cases}  
\text{OOD}, & S(\mathbf{x},\mathbf{A};\text{GNN}) \ge \tau, \\  
\text{ID}, & S(\mathbf{x},\mathbf{A};\text{GNN}) < \tau.  
\end{cases}  
\label{eq:ood_criteria}  
\end{equation}  
Extended definitions for structure, feature, joint, and label shifts are in Appendix~\ref{Appendix:OOD_definitions}~\citep{GOOD}.

\textbf{Energy-Based OOD Detection.}  
The energy score is an effective scoring function for distinguishing OOD from ID samples~\citep{energy}. For a node $i$, the energy score is:  
\begin{equation}  
    S(\mathbf{x}_i,\mathbf{A};\text{GNN}) = e_i = - \log \sum\nolimits_{c=0}^{C-1}{\exp(\ell_{i,c})},  
    \label{eq: energy_gcn}  
\end{equation}   
where $e_i \in \mathbb{R}$ indicates the energy score, $\ell_i \in \mathbb{R}^C$ are the logits from $\text{GNN}(\mathbf{X},\mathbf{A})$. \textsc{GNNSafe}~\citep{GNNSafe} adapts this to graphs via energy propagation:  
\begin{equation}  
    \mathbf{e}^{(k)} = \alpha \mathbf{e}^{(k-1)} + (1-\alpha)\mathbf{D}^{-1}\mathbf{A}\mathbf{e}^{(k-1)},  
    \label{eq:eprop}  
\end{equation}  
where $\alpha$ controls propagation and $\mathbf{D}$ is the degree matrix. For OOD exposure,~\citep{energy} further introduces energy regularisation to separate ID and OOD scores with thresholds $t_\text{ID}$ and $t_\text{OOD}$:  
\begin{equation}  
\begin{split}
    \max\mathcal{L}_\text{EReg}, \text{where }  \mathcal{L}_\text{EReg} &=  
    \mathbb{E}_{i\sim P_\text{ID}}\!\left[\max(0, t_{\text {ID}}-e_i)\right]^2 \\
    &+ 
    \mathbb{E}_{j\sim P_\text{OOD}}\!\left[\max(0, e_j-t_{\text{OOD}})\right]^2.  
    \label{eq: ereg}  
    \raisetag{30pt}
\end{split}
\end{equation} 
\vspace{-0.7cm}
\section{Method}
\label{Sec:Tribe}
\textbf{Motivation.} Graph data is inherently multi-modal with node features $\mathbf{X}$ and graph structure $\mathbf{A}$. However, the typical SL objective does not distinguish whether predictive signals arise jointly from $(\mathbf{X},\mathbf{A})$ or from individual inputs. Shown in Figure~\ref{fig:mutual_info}, this mixing of information (\textcolor{red}{dotted circle}) allows SL-trained models to rely on feature- or structure-specific spurious correlations that may be predictive under ID but misleading under shifts~\citep{OOD-GNN}. %
Thus, our aim is to learn the desired label-relevant joint signal while filtering out the label-irrelevant and spurious feature-/structure-only cues, as shown in Figure~\ref{fig:mutual_info}'s \textcolor{red}{solid triangle} region.

In light of this, we propose \textsc{Tide}, a novel graph-tailored tri-component decomposition that separates label-relevant information into joint, feature-specific, and structure-specific components (Figure~\ref{fig:Framework}). The key idea is for the joint representation $\mathbf{Z}$ to capture stable feature-structure interactions, while auxiliary networks isolate spurious feature-only $\mathbf{V}$ and structure-only $\mathbf{Q}$ signals.
In the following sections, we present our \textbf{novel tri-component information decomposition paradigm} (Section~\ref{subsec:information_decomposition_motivation}) and propose the \textbf{\textsc{Tide} framework} (Section~\ref{subsec:tribe}) for \textbf{practical optimisation}.

\begin{figure*}[t!]
    \centering
    \vspace{-0.10cm}
    \includegraphics[width=0.8\textwidth]{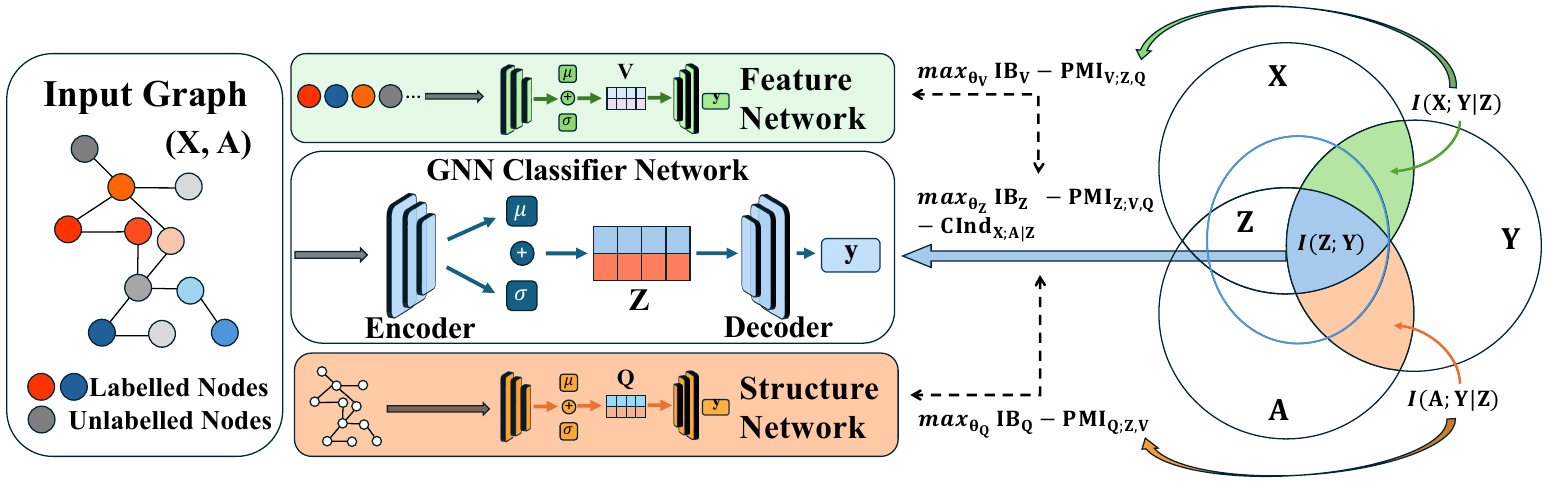}
    \vspace{-0.3cm}
    \caption{Framework of \textsc{Tide}: an information decomposition approach that preserves joint label-relevant information in $\mathbf{Z}$ while suppressing spurious feature-only ($\mathbf{V}$) and structure-only ($\mathbf{Q}$) signals.}
    \label{fig:Framework}
    \vspace{-0.5cm}
\end{figure*}

\subsection{Information Based Decomposition Paradigm}
\label{subsec:information_decomposition_motivation}

To begin, we formalise the relationship between inputs, encoded representation, and labels as follows.

\begin{proposition}
\label{Proposition_Information_contained}
\textit{Given a representation $\mathbf{Z}$ encoded from inputs $(\mathbf{X,A})$ by a network $f$ parametrised by $\theta$, and is maximised for predicting label $\mathbf{Y}$, we have $I(\mathbf{Z; Y|X, A}) = 0$ and $I(\mathbf{X, A; Y}) = I(\mathbf{X, A, Z; Y})$.}
\end{proposition}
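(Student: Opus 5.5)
The plan is to treat the encoder as a (possibly stochastic) channel from $(\mathbf{X},\mathbf{A})$ to $\mathbf{Z}$ whose randomness does not depend on $\mathbf{Y}$. Then $\mathbf{Y} \to (\mathbf{X},\mathbf{A}) \to \mathbf{Z}$ forms a Markov chain, and both claims follow from the chain rule for mutual information. The key modelling step, which I will state explicitly, is this: $\mathbf{Z} = f_\theta(\mathbf{X},\mathbf{A},\boldsymbol{\epsilon})$, where any noise $\boldsymbol{\epsilon}$ (e.g.\ from a variational IB encoder) is independent of $\mathbf{Y}$ given the inputs. The clause ``maximised for predicting $\mathbf{Y}$'' enters only through the learned $\theta$. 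Since $\theta$ is fixed once training is done, the label plays no role in computing $\mathbf{Z}$ at inference, so training does not break the Markov structure.

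\textbf{Step 1.} I will show $I(\mathbf{Z};\mathbf{Y}\mid\mathbf{X},\mathbf{A})=0$. By the Markov property, $p(\mathbf{z}\mid\mathbf{x},\mathbf{a},\mathbf{y}) = p(\mathbf{z}\mid \mathbf{x},\mathbf{a})$. Hence
\[
I(\mathbf{Z};\mathbf{Y}\mid\mathbf{X},\mathbf{A}) = \mathbb{E}\left[\log \frac{p(\mathbf{z}\mid\mathbf{x},\mathbf{a},\mathbf{y})}{p(\mathbf{z}\mid\mathbf{x},\mathbf{a})}\right] = 0.
\]
In the deterministic case one can argue even more directly: $H(\mathbf{Z}\mid\mathbf{X},\mathbf{A})=0$, so $0\le I(\mathbf{Z};\mathbf{Y}\mid\mathbf{X},\mathbf{A})\le H(\mathbf{Z}\mid\mathbf{X},\mathbf{A})=0$.

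\textbf{Step 2.} I will apply the chain rule:
\[
I(\mathbf{X},\mathbf{A},\mathbf{Z};\mathbf{Y}) = I(\mathbf{X},\mathbf{A};\mathbf{Y}) + I(\mathbf{Z};\mathbf{Y}\mid\mathbf{X},\mathbf{A}).
\]
Substituting Step 1 gives the equality $I(\mathbf{X},\mathbf{A};\mathbf{Y}) = I(\mathbf{X},\mathbf{A},\mathbf{Z};\mathbf{Y})$. As a by-product I will note that expanding the same quantity in the other order yields
\[
I(\mathbf{X},\mathbf{A};\mathbf{Y}) = I(\mathbf{Z};\mathbf{Y}) + I(\mathbf{X},\mathbf{A};\mathbf{Y}\mid\mathbf{Z}).
\]
This is the decomposition the paper appears to build on next.

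\textbf{Main obstacle.} The hard part is not the algebra but making the Markov assumption rigorous in the transductive graph setting. There the encoder is trained using the labels of training nodes, so $\theta$ itself depends on $\mathbf{Y}$. I would resolve this by conditioning on the trained parameters, treating $\theta$ as fixed and $(\mathbf{X},\mathbf{A},\mathbf{Y})$ as drawn from the data distribution at which $\mathbf{Z}$ is evaluated. I would also note that for continuous $\mathbf{Z}$ all quantities are understood as differential/KL-based mutual informations, for which the chain rule still holds.
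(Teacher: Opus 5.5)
Your proposal is correct and is essentially the paper's proof: the encoder induces the Markov chain $\mathbf{Y}\to(\mathbf{X},\mathbf{A})\to\mathbf{Z}$, so $p(z\mid x,a,y)=p(z\mid x,a)$ gives $I(\mathbf{Z};\mathbf{Y}\mid\mathbf{X},\mathbf{A})=0$, and the chain rule $I(\mathbf{X},\mathbf{A},\mathbf{Z};\mathbf{Y})=I(\mathbf{X},\mathbf{A};\mathbf{Y})+I(\mathbf{Z};\mathbf{Y}\mid\mathbf{X},\mathbf{A})$ finishes it. Your remarks on stochastic encoders and on conditioning on the trained $\theta$ are sensible refinements that the paper leaves implicit; note, though, that your deterministic-case shortcut $I(\mathbf{Z};\mathbf{Y}\mid\mathbf{X},\mathbf{A})\le H(\mathbf{Z}\mid\mathbf{X},\mathbf{A})=0$ is valid only for discrete $\mathbf{Z}$, so for continuous $\mathbf{Z}$ you should rely on the Markov/KL argument instead.
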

\vspace{-0.05cm}
Here, $I(\mathbf{X, A; Y})$ represents the maximum information that we have about the prediction of $\mathbf{Y}$ given inputs $(\mathbf{X, A})$. Proposition~\ref{Proposition_Information_contained} thus states that $\mathbf{Z}$ cannot contain more information about $\mathbf{Y}$ than is already present in $(\mathbf{X},\mathbf{A})$.
The proof is in Appendix \ref{Appendix:Proof_proposition_information_contained}. This leads to the decomposition:
\begin{equation} \label{eq:max_information}
    \begin{split}
    I(\mathbf{X, A}; \mathbf{Y}) &= I(\mathbf{X, A, Z}; \mathbf{Y})   \overset{\text{Chain rule for mutual information}}{= I(\mathbf{Z}; \mathbf{Y}) + \underbrace{I(\mathbf{X, A}; \mathbf{Y | Z})}_{\text{\textcolor{red}{residual}}}},
    \end{split}
    \raisetag{43pt}
\end{equation}
where the last term captures label information not encoded by $\mathbf{Z}$. To isolate input-specific signals, this residual in Eq.~\ref{eq:max_information} is decomposed into feature and structure components.
\begin{equation} \label{eq:decomposed_residual}
\begin{split}
    I(\mathbf{X, A}; \mathbf{Y})
    &= I(\mathbf{Z}; \mathbf{Y}) + \underbrace{I(\mathbf{X}; \mathbf{Y | Z}) + I(\mathbf{A}; \mathbf{Y | Z, X})}_{\text{\textcolor{red}{residual}}}.
\end{split}
\raisetag{40pt}
\end{equation}
\vspace{-0.5cm}

To effectively decouple $\mathbf{X}$ and $\mathbf{A}$, we enforce a conditional independence constraint $\mathbf{A} \indep \mathbf{X}\mid \mathbf{Z}$, which is not a restrictive dataset assumption but a modelling objective. More discussions 
is provided in Sec~\ref{subsec:tribe}. This constraint guides $\mathbf{Z}$ to retain all label-relevant joint information while reducing residual dependence between $\mathbf{X}$ and $\mathbf{A}$:
\begin{equation} \label{eq:cind_motivation}
p(\mathbf{X, A} | \mathbf{Z})=p(\mathbf{X}|\mathbf{Z})p(\mathbf{A}|\mathbf{Z}).
\end{equation}
By Eq.~\ref{eq:decomposed_residual}, the mutual information $I(\mathbf{X, A}; \mathbf{Y})$ thus becomes:
\begin{equation} \label{eq:final_decomposed}
I(\mathbf{X, A}; \mathbf{Y}) = \underbrace{I(\mathbf{Z}; \mathbf{Y})}_{\textcolor{RoyalBlue}{\text{joint}}} + \underbrace{I(\mathbf{X}; \mathbf{Y | Z})}_{\textcolor{ForestGreen}{\text{feature}}} + \underbrace{I(\mathbf{A}; \mathbf{Y | Z})}_{\textcolor{orange}{\text{structural}}}.
\end{equation}
\vspace{-0.5cm}

This illustrates our tri-component information decomposition paradigm in Figure~\ref{fig:mutual_info}. By separating label information into \textcolor{ForestGreen}{feature}, \textcolor{Orange}{structure}, and \textcolor{RoyalBlue}{joint} components, we obtain a compact, robust joint representation $\mathbf{Z}$ for predicting $\mathbf{Y}$, free of spurious correlations from individual inputs.

\subsection{\textsc{Tide} Framework}  \label{subsec:tribe}
The decomposition described in Eq.~\ref{eq:final_decomposed} motivates the design of \textsc{Tide} for OOD detection. Instead of a single representation, \textsc{Tide} employs three dedicated modules (Figure~\ref{fig:Framework}): (1) a \textcolor{RoyalBlue}{joint} encoder $\mathbf{Z}=f(\mathbf{X},\mathbf{A})$, which serves as \textbf{the primary encoder network for  classification and OOD detection} by capturing joint-input label-relevant information between $\mathbf{X,A}$ and $\mathbf{Y}$. By focusing on the joint-input correlations, the model becomes more sensitive to shifts, making OOD cases (e.g., $(\mathbf{X}_{\text{OOD}}, \mathbf{A}_{\text{ID}})$) more distinguishable from ID; (2) a \textcolor{ForestGreen}{feature-specific} network $\mathbf{V}=g_X(\mathbf{X})$; and (3) a \textcolor{orange}{structure-specific} network $\mathbf{Q}=g_A(\mathbf{A})$, which isolate residual feature-only ($\mathbf{X}$) and structure-only ($\mathbf{A}$) signals, respectively. By separating these components, \textsc{Tide} reduces spurious single-input correlations and improves detection of feature- and structure-driven distribution shifts. Given the three networks, the tri-component decomposition paradigm in Eq.~\ref{eq:final_decomposed} can be realised as:
\begin{equation}
\label{eq:representation_decomposition}
I(\mathbf{X, A}; \mathbf{Y}) \approx \underbrace{I(\mathbf{Z}; \mathbf{Y})}_{\textcolor{RoyalBlue}{\text{joint}}} + \underbrace{I(\mathbf{V}; \mathbf{Y })}_{\textcolor{ForestGreen}{\text{feature}}} + \underbrace{I(\mathbf{Q}; \mathbf{Y})}_{\textcolor{orange}{\text{structural}}}.
\vspace{-0.2cm}
\end{equation}
\textbf{Summary:} This construction allows $\mathbf{Z}$ to learn the joint-input information useful for classification and OOD detection, while $\mathbf{V}$ and $\mathbf{Q}$ isolate feature- and structure-specific signals that may encode misleading spurious correlations, leading to clearer ID-OOD separation in $\mathbf{Z}$. A demonstration of this design with different behaviours from these networks is presented in Figure~\ref{fig:dist_viz}.

\textbf{Principled Regularisers.}  
To ensure a meaningful decomposition that separates joint, label-relevant signals in $\mathbf{Z}$ from spurious feature- and structure-specific correlations captured by $\mathbf{V}$ and $\mathbf{Q}$, \textsc{Tide} introduces two principled regularisers: a \textbf{conditional independence} constraint (Eq.~\ref{eq:cind_motivation}) and a \textbf{pairwise mutual information minimisation} constraint. Together, these regularisers realise the theoretical decomposition in Eq.~\ref{eq:final_decomposed} under the three-network design in Eq.~\ref{eq:representation_decomposition}.

\textbf{1) Conditional independence regulariser}. Its role is to encourage $\mathbf{Z}$ to capture  meaningful feature-structure interactions, treating any remaining dependence as spurious. This realises the independence constraint in Eq.~\ref{eq:cind_motivation}, defined as:
\begin{equation} \label{eq:CIND}
\min \mathcal{L}_{\text{CInd}_{\mathbf{X,A,Z}}}, \text{ where } \mathcal{L}_{\text{CInd}_{\mathbf{X,A,Z}}} = I(\mathbf{A} ; \mathbf{X|Z}).
\end{equation}
This is \textbf{not} a dataset assumption that features and structure are independent. Instead, it is an optimisation regulariser that serves as an inductive bias: \textbf{once $\mathbf{Z}$ captures the label-relevant joint signal needed for OOD detection, any remaining dependence between $\mathbf{X}$ and $\mathbf{A}$ is treated as spurious}, encouraging $I(\mathbf{X; A|Z)} \rightarrow 0$. In practice, this corresponds to approximating $p(\mathbf{X,A|Z})\approx p(\mathbf{X|Z})p(\mathbf{A|Z})$. This design reflects realistic OOD situations: for example in social graphs, user attributes like name may remain stable while connectivity evolves, whereas in fraud detection, transaction features may change while the network structure is preserved. Once joint semantics are captured in $\mathbf{Z}$, isolating residual feature- and structure-only signals is therefore essential, as these spurious cues can mislead OOD detection.

\textbf{2) Pairwise mutual information minimisation}. Intuitively, since the three networks in Eq.~\ref{eq:representation_decomposition} are designed to capture different input information, we need to prevent them from redundantly encoding the same or overlapping information. Thus, we penalise pairwise overlaps among the three learned representations by minimising their mutual information:
\vspace{-0.1cm}
\begin{equation} \label{eq:PMI}
\begin{split}
\min \mathcal{L}_{\text{PMI}_{\mathbf{Z,V,Q}}}, \text{ where } \\ \mathcal{L}_{\text{PMI}_{\mathbf{Z,V,Q}}} = \alpha_1 I(\mathbf{Z}; \mathbf{V}) + \alpha_2 I(\mathbf{Z}; \mathbf{Q}) + &\alpha_3 I(\mathbf{V}; \mathbf{Q}),
\raisetag{28pt}
\vspace{-1cm}
\end{split}
\end{equation}
where $\alpha_{1,2,3}$ are scalar weights. Minimising this loss keeps $\mathbf{Z}$ focused on the desired joint signal, while $\mathbf{V}$ and $\mathbf{Q}$ remain disentangled and feature-/structure-specific respectively. 

\textbf{Information bottleneck (IB) objective.}  
To unify the encoding objective with the proposed regularisation, we train all encoders under an IB backbone. Each of the joint, feature-specific, and structure-specific encoders is optimised to keep label-relevant information while reducing label-irrelevant noise that blurs the ID-OOD boundary (details on IB are in Appendix~\ref{appendix:information_bottleneck_principle}). This aligns with our decomposition (Eq.~\ref{eq:final_decomposed}, Eq.~\ref{eq:representation_decomposition}): $\mathbf{Z}$ captures the joint-input label information, while $\mathbf{V}$ and $\mathbf{Q}$ capture the residual feature- and structure-specific components. By reducing spurious input correlations, IB strengthens OOD detection over standard SL, with theoretical analysis detailed in Section~\ref{sec:Theoretical_insights}, Proposition~\hyperref[Proposition_ib_separation]{5.3}.
\begin{equation} \label{eq:IB_ALL}
\begin{split}
\text{IB}_\mathbf{Z} &= \max I(\mathbf{Z}; \mathbf{Y}) - \beta_\mathbf{Z} I(\mathbf{X, A}; \mathbf{Z})\\
\text{IB}_\mathbf{V} &= \max I(\mathbf{V}; \mathbf{Y}) - \beta_\mathbf{V} I(\mathbf{X}; \mathbf{V}), \\ \text{IB}_\mathbf{Q} &= \max I(\mathbf{Q}; \mathbf{Y}) - \beta_\mathbf{Q} I(\mathbf{A}; \mathbf{Q})\\
\max \mathcal{L}_{\text{IB}_{\mathbf{Z;V;Q}}}, &\text{ where } \mathcal{L}_{\text{IB}_{\mathbf{Z;V;Q}}} = \text{IB}_\mathbf{Z} + \text{IB}_\mathbf{V} + \text{IB}_\mathbf{Q},
\end{split}
\raisetag{43pt}
\end{equation}
\textbf{Final Objective.} The final \textsc{Tide} objective combines these terms and regularisers to realise the objective in Eq.~\ref{eq:representation_decomposition}.
\begin{equation} \label{Eq:final_objective}
\vspace{-0.2cm}
\begin{split}&
\max_{\theta_\mathbf{Z}, \theta_\mathbf{V}, \theta_\mathbf{Q}} \mathcal{L}_{\textsc{Tide}_{\mathbf{Z;V;Q}}} = \\
\max_{\theta_\mathbf{Z}, \theta_\mathbf{V}, \theta_\mathbf{Q}} &\mathcal{L}_{\text{IB}_{\mathbf{Z;V;Q}}} - \lambda_\text{CInd} \mathcal{L}_{\text{CInd}_{\mathbf{X,A,Z}}} - \mathcal{L}_{\text{PMI}_{\mathbf{Z,V,Q}}},
\raisetag{40pt}
\end{split}
\end{equation}
where $\lambda_\text{CInd}$ is a loss coefficient and the scale weight for $\mathcal{L}_{\text{PMI}}$ is handled by $\alpha_{1,2,3}$ in Eq.~\ref{eq:PMI}. 

\textbf{We highlight that the final objective components are complementary rather than conflicting.}
The IB terms encourage each network to retain label-relevant information from its own inputs while discarding label-irrelevant noise. The PMI regulariser does not suppress task-relevant content; instead, it penalises redundant encoding of the same information, regulating where predictive information is learned. Thus, shared label-relevant feature-structure interactions are captured in $\mathbf{Z}$, while $\mathbf{V}$ and $\mathbf{Q}$ encode the remaining complementary signals. Moreover, the compression term for $\mathbf{Z}$ can be rewritten as $(1-\beta_\mathbf{Z})I(\mathbf{Z;Y}) - \beta_\mathbf{Z} I(\mathbf{X,A;Z \mid Y})$, which for a moderate $\beta_\mathbf{Z}$, preserves joint information that contributes to prediction rather than eliminating it. $\mathcal{L}_{\text{CInd}}$ is then applied to suppress residual feature-structure correlations that are not explained by $\mathbf{Z}$. Together, this formulation ensures $\mathbf{Z}$ provides a stable classification and OOD detection backbone, while $\mathbf{V}$ and $\mathbf{Q}$ act only as training regularisers, guiding disentanglement without affecting inference.

\subsection{Practical Implementation and Inference} \label{subsec:implementation_notes}
We note that \textbf{direct computation of mutual information in the final objective (Eq.~\ref{Eq:final_objective}) is intractable}, so we adopt standard variational approximations. IB terms are estimated via VIB~\citep{VIB}, while reconstruction loss and CLUB~\citep{CLUB} are used for the conditional independence and PMI regularisers. Appendix~\ref{Appendix:Implementation Details} discusses alternative MI estimators. Due to the page limit, and \textit{\textbf{to emphasise our primary contribution}}, the detailed discussion on obtaining the \textbf{tractable objective} is instead provided in Appendix~\ref{Appendix:Tractable_optimisation}, and a \textbf{neural network parameterisation} of \textsc{Tide} is given in Appendix~\ref{Appendix:Neural_network_parameterisation_IBGOOD}.
Each encoder is trained only with losses corresponding to its intended information component: the $\mathbf{Z}$ network is optimised by the IB$_\mathbf{Z}$ term and regularisers to capture joint-input signals, whereas the $\mathbf{V}$ and $\mathbf{Q}$ networks are trained with their own IB objectives and pairwise MI penalties to isolate feature- and structure-specific signals. Without loss of generality, in the following sections, we refer to the mutual information and \textsc{Tide} objective as their tractable variational forms (i.e., $\mathcal{L}_{\text{VIB}}, \mathcal{L}_{\text{VCInd}}, \mathcal{L}_{\text{VPMI}}$).Algorithm~\ref{alg:Tribe} outlines the full optimisation loop, including the forward pass through all modules, computation of VIB, CLUB, and reconstruction terms, and joint gradient updates for all parameters of \textsc{Tide}. \textbf{At inference time} (Algorithm.~\ref{alg:Tribe_Inference}), only the GNN classifier for $\mathbf{Z}$ will be used for OOD detection with the energy score in Eq.~\ref{eq:eprop} derived from prediction logits. Auxiliary networks $\mathbf{V}$ and $\mathbf{Q}$ act solely as training-time regularisers and do not participate in inference. 

\begin{algorithm}[t!]
   \caption{\textsc{Tide} Framework}
   \label{alg:Tribe}
\begin{algorithmic}
   \STATE {\bfseries Input:} ID graph $\mathcal{G}= (\mathbf{X, A})$, 
   randomly initialised GNN classifier ($\text{GNN}_\text{CLS-$\mathbf{Z}$}$),  
   MLP feature network ($\text{MLP}_\text{Feat-$\mathbf{V}$}$), structure network ($\text{GNN}_\text{Struct-$\mathbf{Q}$}$) with parameters 
   $\theta_\mathbf{Z}, \theta_\mathbf{V}, \theta_\mathbf{Q}$ respectively, 
   loss coefficients $\alpha_1$, $\alpha_2$, $\alpha_3$, $\lambda_\text{CInd}$.

   \STATE {\bfseries Output:} Optimised $\text{GNN}_\text{CLS-$\mathbf{Z}$}$.

   \WHILE{$train$}
        \STATE Update:
        \STATE \textbf{1. Minimising $-\mathcal{L}_{\text{VIB}_{\mathbf{Z;V;Q}}}$ Eq.~\ref{Eq:final_vib}}
        \STATE $\text{GNN}_\text{CLS-$\mathbf{Z}$} \leftarrow -\text{VIB}_\mathbf{Z}$
        \STATE $\text{MLP}_\text{Feat-$\mathbf{V}$} \leftarrow -\text{VIB}_\mathbf{V}$
        \STATE $\text{GNN}_\text{Struct-$\mathbf{Q}$} \leftarrow -\text{VIB}_\mathbf{Q}$

        \STATE \textbf{2. Minimising $\mathcal{L}_{\text{VCIND}_\mathbf{X,A,Z}}$ Eq.~\ref{Eq:vcind}}
        \STATE $\text{GNN}_\text{CLS-$\mathbf{Z}$} \leftarrow 
        \lambda_\text{CInd}\, \mathcal{L}_{\text{VCIND}_\mathbf{X,A,Z}}$

        \STATE \textbf{3. Minimising $\mathcal{L}_{\text{VPMI}_{\mathbf{Z;V;Q}}}$ Eq.~\ref{Eq:vpmi}}
        \STATE $\text{GNN}_\text{CLS-$\mathbf{Z}$} \leftarrow 
        \alpha_1 I_{\text{VCLUB}_\mathbf{Z;V}} + 
        \alpha_2 I_{\text{VCLUB}_\mathbf{Z;Q}}$
        \STATE $\text{MLP}_\text{Feat-$\mathbf{V}$} \leftarrow 
        \alpha_1 I_{\text{VCLUB}_\mathbf{Z;V}} + 
        \alpha_3 I_{\text{VCLUB}_\mathbf{V;Q}}$
        \STATE $\text{GNN}_\text{Struct-$\mathbf{Q}$} \leftarrow 
        \alpha_2 I_{\text{VCLUB}_\mathbf{Z;Q}} + 
        \alpha_3 I_{\text{VCLUB}_\mathbf{V;Q}}$

   \ENDWHILE
\end{algorithmic}
\end{algorithm}

\begin{algorithm}[t!]
   \caption{\textsc{Tide} Inference}
   \label{alg:Tribe_Inference}

\begin{algorithmic}[1]

   \STATE \textbf{Input:} Test graph $\mathcal{G} = (\mathbf{X}, \mathbf{A})$, 
   optimised GNN classifier ($\text{GNN}_\text{CLS-{$\mathbf{Z}$}}$) 
   with parameter $\theta_{\mathbf{Z}}$.

   \STATE \textbf{Output:} Predicted labels and energy scores.

   \STATE \textbf{1. Inference: Obtain logits}
   \STATE $\ell \leftarrow \text{GNN}_\text{CLS-{$\mathbf{Z}$}}(\mathcal{G})$
   \STATE \textbf{2. Score calculation}
   \STATE Energy: $\mathbf{e} \leftarrow$ Eq.~\ref{eq: energy_gcn},~\ref{eq:eprop} using $\ell$.
   \STATE Prediction: $\hat{\mathbf{Y}} \leftarrow \text{Softmax}(\ell)$.
\end{algorithmic}
\end{algorithm}

\vspace{-0.3cm}
\section{How Does IB Benefit Graph OOD Detection} 
\label{sec:Theoretical_insights}
Building on our novel decomposition framework, we further demonstrate how IB benefits graph OOD detection over standard SL. Unlike SL ($\max I(\mathbf{Z};\mathbf{Y})$), which rewards any predictive correlation, IB retains only label-supporting information, yielding \textbf{two key benefits}: (i) sharper ID prediction confidence and (ii) a larger ID-OOD entropy gap, which \textbf{directly improves logit-based OOD detection}.
\begin{lemma}[Target-Irrelevant Information and ID Prediction Confidence]
\textit{Minimising the conditional mutual information $I(\mathbf{X}, \mathbf{A}; \mathbf{Z} \mid \mathbf{Y})$ reduces conditional entropy $H(\mathbf{Z} \mid \mathbf{Y})$, yielding a more concentrated posterior $P(\mathbf{Y} \mid \mathbf{Z})$.}
\label{Lemma_concentrated}
\end{lemma}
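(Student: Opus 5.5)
The plan is to expand $I(\mathbf{X},\mathbf{A};\mathbf{Z}\mid\mathbf{Y})$ entropically and then pass from a small $H(\mathbf{Z}\mid\mathbf{Y})$ to a sharp posterior through a Bayes/Fano-type argument. First, since $\mathbf{Z}=f_\theta(\mathbf{X},\mathbf{A})$ with the encoder viewed as a (possibly stochastic, as in VIB) channel whose randomness does not depend on $\mathbf{Y}$, the Markov chain $\mathbf{Y}\to(\mathbf{X},\mathbf{A})\to\mathbf{Z}$ holds. Proposition~\ref{Proposition_Information_contained} records its consequence $I(\mathbf{Z};\mathbf{Y}\mid\mathbf{X},\mathbf{A})=0$. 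Writing
\[
I(\mathbf{X},\mathbf{A};\mathbf{Z}\mid\mathbf{Y}) = H(\mathbf{Z}\mid\mathbf{Y}) - H(\mathbf{Z}\mid\mathbf{X},\mathbf{A},\mathbf{Y}) = H(\mathbf{Z}\mid\mathbf{Y}) - H(\mathbf{Z}\mid\mathbf{X},\mathbf{A}),
\]
where the second equality uses the Markov chain, we see that $H(\mathbf{Z}\mid\mathbf{X},\mathbf{A})$ is the encoder's intrinsic noise. This term is fixed by the parameterisation: it is zero for a deterministic encoder, and for VIB it is the Gaussian entropy, which we hold fixed or bound below. Under that convention, lowering $I(\mathbf{X},\mathbf{A};\mathbf{Z}\mid\mathbf{Y})$ lowers $H(\mathbf{Z}\mid\mathbf{Y})$ by exactly the same amount, which gives the first claim.

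Next, we connect $H(\mathbf{Z}\mid\mathbf{Y})$ to the posterior. We use the identity
\[
H(\mathbf{Y}\mid\mathbf{Z}) = H(\mathbf{Y}) - H(\mathbf{Z}) + H(\mathbf{Z}\mid\mathbf{Y}).
\]
The label marginal $H(\mathbf{Y})$ is fixed by the data. The idea is that the IB objective compresses only the label-conditional spread while keeping predictive content. Rewriting the bottleneck term as in Section~\ref{subsec:tribe}, $I(\mathbf{X},\mathbf{A};\mathbf{Z}) = I(\mathbf{Z};\mathbf{Y}) + I(\mathbf{X},\mathbf{A};\mathbf{Z}\mid\mathbf{Y})$, so that at fixed (or non-decreasing) $I(\mathbf{Z};\mathbf{Y})$ the entire reduction falls on the conditional term. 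Combining this with the identity, a decrease in $H(\mathbf{Z}\mid\mathbf{Y})$ at fixed $I(\mathbf{Z};\mathbf{Y})$ is equivalent to a decrease in $H(\mathbf{Z})$ while $H(\mathbf{Y}\mid\mathbf{Z})$ stays the same or falls. To state the conclusion as concentration, we would argue per class. Smaller $H(\mathbf{Z}\mid\mathbf{Y}=c)$ means each class-conditional $p(\mathbf{z}\mid c)$ is tighter; for instance, under a Gaussian class-conditional model the covariance determinant shrinks. With the class means held apart, which is what preserving $I(\mathbf{Z};\mathbf{Y})$ ensures, the overlap between class-conditionals decreases. Hence $P(\mathbf{Y}=c\mid\mathbf{z})\propto p(\mathbf{z}\mid c)p(c)$ puts more mass on the true class for typical ID $\mathbf{z}$. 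We would make this rigorous with Fano's inequality, or in the Gaussian case with a Bhattacharyya bound on the Bayes error, showing that the expected posterior entropy is bounded by a quantity that is monotone in the within-class spread.

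The main obstacle is the second step. In general, reducing $H(\mathbf{Z}\mid\mathbf{Y})$ does not by itself imply a smaller $H(\mathbf{Y}\mid\mathbf{Z})$, since the representation could collapse all classes together. So we must explicitly invoke the IB trade-off that keeps $I(\mathbf{Z};\mathbf{Y})$ from decreasing, and possibly assume a class-conditional model (Gaussian or log-concave) with fixed separation between means. Rather than attempting a fully general statement, we would present the lemma under these stated conditions and note that the VIB parameterisation used in Appendix~\ref{Appendix:Tractable_optimisation} satisfies them.
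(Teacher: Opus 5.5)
Your first step matches the paper's. The paper takes the encoder to be deterministic, so $H(\mathbf{Z}\mid\mathbf{X},\mathbf{A},\mathbf{Y})=0$ and $I(\mathbf{X},\mathbf{A};\mathbf{Z}\mid\mathbf{Y})=H(\mathbf{Z}\mid\mathbf{Y})$. For the second step the paper adds no hypotheses at all; it only asserts via Bayes' rule that a tighter $P(\mathbf{Z}\mid\mathbf{Y})$ ``amplifies the likelihood'' and sharpens $P(\mathbf{Y}\mid\mathbf{Z})$. You are right that this does not follow in general, and your collapse objection applies to the paper's argument too.

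Your repair, however, does not work as written. By your own identity, $H(\mathbf{Y}\mid\mathbf{Z})=H(\mathbf{Y})-I(\mathbf{Z};\mathbf{Y})$. So holding $I(\mathbf{Z};\mathbf{Y})$ fixed holds $H(\mathbf{Y}\mid\mathbf{Z})$ fixed, and within-class shrinkage at constant $I(\mathbf{Z};\mathbf{Y})$ gives no reduction in posterior entropy at all. For example, rescaling $\mathbf{Z}\mapsto c\mathbf{Z}$ with $c<1$ lowers the differential entropy $H(\mathbf{Z}\mid\mathbf{Y})$ by $\dim(\mathbf{Z})\log(1/c)$ and leaves the posterior unchanged. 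For discrete $\mathbf{Z}$, dropping a coordinate independent of $\mathbf{Y}$ and of the other coordinates does the same.

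Hence the clause ``with the class means held apart, which is what preserving $I(\mathbf{Z};\mathbf{Y})$ ensures'' is false. In the Gaussian model, $I(\mathbf{Z};\mathbf{Y})$ depends on the mean separation measured in units of the within-class spread. Shrinking the spread with the means fixed therefore strictly increases $I(\mathbf{Z};\mathbf{Y})$, so fixing $I(\mathbf{Z};\mathbf{Y})$ rules out exactly the scenario you need. The hypothesis that actually carries the argument is fixed class locations with shrinking within-class spread. Under it the posterior sharpens precisely because $I(\mathbf{Z};\mathbf{Y})$ rises. You can show this by data processing (the noisier representation is a degradation of the less noisy one), or by Bhattacharyya plus Fano as you suggest. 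State the lemma that way and drop the ``fixed $I(\mathbf{Z};\mathbf{Y})$'' framing.

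The stochastic extension of your first step has the inequality backwards. From $H(\mathbf{Z}\mid\mathbf{Y})=I(\mathbf{X},\mathbf{A};\mathbf{Z}\mid\mathbf{Y})+H(\mathbf{Z}\mid\mathbf{X},\mathbf{A})$, a lower bound on the encoder noise only bounds $H(\mathbf{Z}\mid\mathbf{Y})$ from below; you need the noise fixed, or bounded above. Inflating the noise, e.g. adding independent noise to $\mathbf{Z}$, lowers $I(\mathbf{X},\mathbf{A};\mathbf{Z}\mid\mathbf{Y})$ by data processing while raising $H(\mathbf{Z}\mid\mathbf{Y})$. That is the opposite of the lemma's first claim.

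This matters for VIB, because its encoder noise is not fixed. The KL term toward $\mathcal{N}(0,I)$ pushes $\sigma$ toward $1$, i.e. upward from the small values the likelihood term favours. The $\mu^2$ part of the same KL term pulls all class means toward the origin, so VIB does not pin class locations either. (In the other direction, ``$H(\mathbf{Z}\mid\mathbf{X},\mathbf{A})=0$ for a deterministic encoder'' holds only for discrete $\mathbf{Z}$. For a deterministic continuous encoder this differential entropy is $-\infty$, a problem the paper's proof shares.)

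Your closing claim that the VIB parameterisation of the appendix satisfies your hypotheses is therefore unsupported. What you can honestly prove is a conditional lemma: fixed encoder noise, fixed class locations, shrinking within-class spread. That is more than the paper's proof establishes, but strictly less than the statement claims.
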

Lemma~\hyperref[Lemma_concentrated]{5.1} shows that when the representation $\mathbf{Z}$ discards target-irrelevant information, the uncertainty of $\mathbf{Z}$ conditioned on $\mathbf{Y}$ decreases. In other words, the representation becomes more predictable given the label, which sharpens the posterior distribution $P(\mathbf{Y} \mid \mathbf{Z})$. Intuitively, as $H(\mathbf{Z|Y})$ approaches zero, the peaks in the posterior become sharper, meaning the model assigns higher maximum probability to the correct class. This provides the first link between minimising irrelevant information and achieving higher confidence on ID predictions. Proof is shown in Appendix~\ref{Appendix:Proof_lemma_concentrated}.
\begin{theorem}[ID Confidence Improvement for IB over SL]
\textit{Let $\mathbf{Z} = f(\mathbf{X}, \mathbf{A})$ be an encoded representation of $(\mathbf{X}, \mathbf{A})$. With information bottleneck:
$
\max I(\mathbf{Z}; \mathbf{Y}) - \beta I(\mathbf{X}, \mathbf{A}; \mathbf{Z}),
$
where $\beta \in (0,1)$, the model attains higher in-distribution prediction confidence compared to standard supervised learning ($\beta=0$), provided $I(\mathbf{X}, \mathbf{A}; \mathbf{Z} \mid \mathbf{Y})$ is minimised.}
 \label{Theorem_ID_conf_increase}
\end{theorem}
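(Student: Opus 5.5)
The plan is to reduce the theorem to Lemma~\hyperref[Lemma_concentrated]{5.1} by rewriting the IB objective in terms of the conditional, target-irrelevant information. Since $\mathbf{Z}=f(\mathbf{X},\mathbf{A})$, we have the Markov chain $\mathbf{Y}\to(\mathbf{X},\mathbf{A})\to\mathbf{Z}$, and Proposition~\ref{Proposition_Information_contained} gives $I(\mathbf{Z};\mathbf{Y}\mid\mathbf{X},\mathbf{A})=0$. By the chain rule,
\[
I(\mathbf{X},\mathbf{A};\mathbf{Z}) = I(\mathbf{Z};\mathbf{Y}) + I(\mathbf{X},\mathbf{A};\mathbf{Z}\mid\mathbf{Y}).
\]
Substituting this, the IB objective becomes
\[
(1-\beta)\, I(\mathbf{Z};\mathbf{Y}) - \beta\, I(\mathbf{X},\mathbf{A};\mathbf{Z}\mid\mathbf{Y}),
\]
matching the rewriting already stated after Eq.~\ref{Eq:final_objective}. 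For $\beta\in(0,1)$ the first coefficient is positive, so IB still rewards predictive information. Unlike SL ($\beta=0$), it additionally and strictly penalises $I(\mathbf{X},\mathbf{A};\mathbf{Z}\mid\mathbf{Y})$, which SL leaves unconstrained.

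The second step compares optimisers. Let $\mathbf{Z}_{\text{SL}}$ and $\mathbf{Z}_{\text{IB}}$ denote maximisers of the two objectives. I would argue that, among representations with the same $I(\mathbf{Z};\mathbf{Y})$, the IB optimiser selects the one with minimal $I(\mathbf{X},\mathbf{A};\mathbf{Z}\mid\mathbf{Y})$. SL is indifferent to this quantity, so generically
\[
I(\mathbf{X},\mathbf{A};\mathbf{Z}_{\text{IB}}\mid\mathbf{Y}) \le I(\mathbf{X},\mathbf{A};\mathbf{Z}_{\text{SL}}\mid\mathbf{Y}).
\]
Using determinism of $f$, or a small-noise stochastic encoder with fixed noise entropy, we get $H(\mathbf{Z}\mid\mathbf{Y}) = I(\mathbf{X},\mathbf{A};\mathbf{Z}\mid\mathbf{Y}) + H(\mathbf{Z}\mid\mathbf{X},\mathbf{A},\mathbf{Y})$. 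The last term is fixed by the encoder noise, so $H(\mathbf{Z}_{\text{IB}}\mid\mathbf{Y}) \le H(\mathbf{Z}_{\text{SL}}\mid\mathbf{Y})$. Invoking Lemma~\hyperref[Lemma_concentrated]{5.1}, the IB representation then yields a more concentrated posterior $P(\mathbf{Y}\mid\mathbf{Z})$.

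The final step translates concentration into confidence, measured as $\mathbb{E}[\max_c P(\mathbf{Y}=c\mid\mathbf{Z})]$ or equivalently lower $H(\mathbf{Y}\mid\mathbf{Z})$. Using $H(\mathbf{Y}\mid\mathbf{Z}) = H(\mathbf{Y}) - I(\mathbf{Z};\mathbf{Y})$, I would show that the minimal-conditional-information solution keeps $I(\mathbf{Z};\mathbf{Y})$ at its attainable maximum $I(\mathbf{X},\mathbf{A};\mathbf{Y})$. This holds because, under the proviso that $I(\mathbf{X},\mathbf{A};\mathbf{Z}\mid\mathbf{Y})$ is minimised, a sufficient statistic of $\mathbf{Y}$ already achieves zero penalty with full predictive information. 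Hence IB loses no label information, while the class-conditional clusters of $\mathbf{Z}$ are tighter. A Fano-type or max-probability bound then gives a higher expected maximum softmax probability on ID data.

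The main obstacle is the last step, which has two parts. First, "reduced $H(\mathbf{Z}\mid\mathbf{Y})$" does not by itself imply a sharper $P(\mathbf{Y}\mid\mathbf{Z})$ in the strict information-theoretic sense, since the posterior depends on the overlap between class-conditionals rather than their spread. I would handle this by assuming a class-conditional model with shared location structure, for example Gaussian or logit clusters around class prototypes, where shrinking within-class spread provably increases the margin and hence $\max_c P$. Second, I must ensure $\beta<1$ does not trade away $I(\mathbf{Z};\mathbf{Y})$, which the sufficiency argument above is meant to cover.
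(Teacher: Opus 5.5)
Your plan follows the paper's own route: the chain-rule rewrite to $(1-\beta)I(\mathbf{Z};\mathbf{Y})-\beta I(\mathbf{X},\mathbf{A};\mathbf{Z}\mid\mathbf{Y})$, then Lemma~5.1, then ``concentration $\Rightarrow$ confidence''. Your step~2 is fine; in fact $I(\mathbf{X},\mathbf{A};\mathbf{Z}_{\text{IB}}\mid\mathbf{Y})\le I(\mathbf{X},\mathbf{A};\mathbf{Z}_{\text{SL}}\mid\mathbf{Y})$ follows exactly from the two optimality conditions. The step you flag as the obstacle, however, is a genuine gap that cannot be closed in this setting, because the implication is false.

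Your own identity $H(\mathbf{Y}\mid\mathbf{Z})=H(\mathbf{Y})-I(\mathbf{Z};\mathbf{Y})$ shows this. The average posterior entropy depends only on $I(\mathbf{Z};\mathbf{Y})$. So within your comparison class of equal $I(\mathbf{Z};\mathbf{Y})$, lowering $I(\mathbf{X},\mathbf{A};\mathbf{Z}\mid\mathbf{Y})$ changes nothing. And since SL maximises $I(\mathbf{Z};\mathbf{Y})$ over whatever encoder class is used, $H(\mathbf{Y}\mid\mathbf{Z}_{\text{SL}})\le H(\mathbf{Y}\mid\mathbf{Z}_{\text{IB}})$.

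The max-probability version fails too. With unrestricted encoders every SL maximiser is sufficient, so $P(\mathbf{Y}\mid\mathbf{Z}_{\text{SL}})=P(\mathbf{Y}\mid\mathbf{X},\mathbf{A})$. The Markov chain gives $P(\mathbf{Y}\mid\mathbf{Z}_{\text{IB}})=\mathbb{E}[P(\mathbf{Y}\mid\mathbf{X},\mathbf{A})\mid\mathbf{Z}_{\text{IB}}]$. Convexity of $\max_c$ then yields $\mathbb{E}\max_c P(c\mid\mathbf{Z}_{\text{IB}})\le\mathbb{E}\max_c P(c\mid\mathbf{Z}_{\text{SL}})$.

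The root cause is this: replacing $\mathbf{Z}$ by $(\mathbf{Z},W)$, with $W$ independent of $(\mathbf{Z},\mathbf{Y})$, raises the representation's conditional entropy given $\mathbf{Y}$ by $H(W)$ while leaving the posterior over $\mathbf{Y}$ unchanged. So a smaller $H(\mathbf{Z}\mid\mathbf{Y})$ does not by itself sharpen the posterior. Your Gaussian-prototype model does not rescue this. Shrinking within-class spread at fixed means sharpens the posterior only because it raises $I(\mathbf{Z};\mathbf{Y})$, which SL rewards at least as much as IB.

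Your sufficiency repair also fails. For a deterministic statistic $T$, $I(\mathbf{X},\mathbf{A};T\mid\mathbf{Y})=H(T\mid\mathbf{Y})$, which vanishes only if $T$ is also a function of $\mathbf{Y}$. Even when such a $T$ is sufficient, its posterior equals the SL one, so you get at most a tie. With label noise no such $T$ exists in general, and IB then genuinely trades away $I(\mathbf{Z};\mathbf{Y})$.

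It can even destroy the representation while satisfying the proviso. Take $X$ uniform on $\{0,1\}$ and $Y=X\oplus N$ with $N\sim\mathrm{Bern}(p)$, $p<1/2$. The strong data-processing inequality $I(Z;Y)\le(1-2p)^2I(X;Z)$ makes every informative $Z$ strictly worse than a constant one for $\beta\in((1-2p)^2,1)$. The constant $Z$ has $I(X;Z\mid Y)=0$ but confidence $1/2$, versus $1-p$ for SL with $Z=X$.

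A provable version therefore needs a different notion of confidence, e.g.\ that of a learned decoder $q(\mathbf{Y}\mid\mathbf{Z})$ in a finite-sample or restricted-class setting, together with an explicit restriction on $\beta$. Neither your plan nor the paper's proof supplies this; the latter rests on the Bayes-rule heuristic of Lemma~5.1 and an informal variance-ratio condition on $\beta$.
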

Theorem~\hyperref[Theorem_ID_conf_increase]{5.2} builds directly on Lemma~\hyperref[Lemma_concentrated]{5.1}. By rewriting the IB objective, we obtain:
\begin{equation}
\max (1-\beta)I(\mathbf{Z}; \mathbf{Y}) - \beta I(\mathbf{X}, \mathbf{A}; \mathbf{Z} \mid \mathbf{Y}).
\end{equation}

\begin{table*}[t!]
    \centering
    \caption{Model performance with Non-OOD exposure. Following~\cite{GNNSafe,NODESAFE}, the results are \textbf{averaged over multiple OOD test sets with different difficulty levels}, therefore with a relatively high variance $\pm$ across subsets. 
    Individual subset results with a much lower variance are in Appendix \ref{Appendix:Extended Results}. 
    The best and runner-up results are highlighted by {\color{RoyalBlue}{\textbf{best}}} and {\color{OrangeRed}{\textbf{runner-up}}}, respectively. } 
    \vspace{-0.2cm}
    \resizebox{0.9\linewidth}{!}{
    \begin{tabular}{c|r|cccccccc|c}
    \specialrule{.1em}{.05em}{.05em} 
    \specialrule{.1em}{.05em}{.05em} 
    & \textbf{Metrics} & \textbf{MSP} & \textbf{ODIN} & \textbf{Maha} & \textbf{Energy} & \textbf{GKDE} & \textbf{GPN} & \textbf{\textsc{GNNSafe}} & \textbf{\textsc{NODESAFE}} & \textbf{\textsc{Tide}} \\
    \midrule
    \midrule
    \multirow{4}{*}{\rotatebox{90}{{\fontfamily{qcr}\selectfont Cora}}} 
        & AUROC ($\uparrow$) & 82.55 & 49.87 & 54.74 & 83.09 & 69.54 & 84.56 & 91.20 $\pm$ 3.09 & \color{OrangeRed}{\textbf{93.35 $\pm$ 2.41}} & \color{RoyalBlue}{\textbf{95.58 $\pm$ 2.12}} \\
        & AUPR ($\uparrow$) & 65.82 & 26.08 & 34.43 & 66.21 & 46.09 & 68.02 & 82.92 $\pm$ 4.96 & \color{OrangeRed}{\textbf{84.64 $\pm$ 6.40}} & \color{RoyalBlue}{\textbf{89.34 $\pm$ 5.32}} \\
        & FPR95 ($\downarrow$)  & 62.39 & 100.00 & 96.30 & 65.21 & 80.51 & 58.30 & 50.53 $\pm$ 22.04 & \color{OrangeRed}{\textbf{29.23 $\pm$ 12.06}} & \color{RoyalBlue}{\textbf{20.09 $\pm$ 10.72}} \\
        & ID ACC ($\uparrow$)& 79.91 & 79.61 & 79.57 & 80.34 & 79.86 & 81.65 & 81.56 $\pm$ 7.01 & 82.66 $\pm$ 6.61 & 82.56 $\pm$ 6.99 \\
    \midrule
    \multirow{4}{*}[0.22em]{\rotatebox{90}{{\fontfamily{qcr}\selectfont Citeseer}}}    
        & AUROC ($\uparrow$) & 77.69 & 50.14 & 49.55 & 78.26 & 72.95 & 78.22 & 84.89 $\pm$ 5.47 & \color{OrangeRed}{\textbf{87.80 $\pm$ 2.74}} & \color{RoyalBlue}{\textbf{92.27 $\pm$ 1.53}} \\
        & AUPR ($\uparrow$) & 51.43 & 21.38 & 29.29 & 51.22 & 47.67 & 52.22 & 64.86 $\pm$ 3.39 & \color{OrangeRed}{\textbf{72.35 $\pm$ 6.41}} & \color{RoyalBlue}{\textbf{76.57 $\pm$ 10.34}} \\
        & FPR95 ($\downarrow$) & 69.42 & 100 & 95.06 & 65.34 & 71.85 & 67.09 & 60.85 $\pm$ 18.06 & \color{OrangeRed}{\textbf{53.38 $\pm$ 17.47}} & \color{RoyalBlue}{\textbf{30.79 $\pm$ 7.64}} \\
        & ID ACC ($\uparrow$)& 73.72 & 73.75 & 62.17 & 73.43 & 72.69 & 72.77 & 76.06 $\pm$ 12.05 & 73.12 $\pm$ 14.15 & 75.66 $\pm$ 11.19 \\
    \midrule
    \multirow{4}{*}{\rotatebox{90}{{\fontfamily{qcr}\selectfont Pubmed}}}    
        & AUROC ($\uparrow$)& 78.80 & 49.72 & 62.20 & 79.25 & 78.14 & 78.76 & \color{OrangeRed}{\textbf{93.82 $\pm$ 1.93}} & 91.08 $\pm$ 2.95 & \color{RoyalBlue}{\textbf{97.35 $\pm$ 0.13}} \\
        & AUPR ($\uparrow$) & 28.37 & 4.83 & 11.74 & 28.21 & 24.65 & 28.65 & \color{OrangeRed}{\textbf{69.94 $\pm$ 6.32}} & 68.56 $\pm$ 1.06 & \color{RoyalBlue}{\textbf{80.73 $\pm$ 1.37}} \\
        & FPR95 ($\downarrow$)& 76.73 & 100 & 91.26 & 70.69 & 75.04 & 71.06 & \color{OrangeRed}{\textbf{37.71 $\pm$ 14.28}} & 50.17 $\pm$ 0.89 & \color{RoyalBlue}{\textbf{12.75 $\pm$ 0.31}} \\
        & ID ACC ($\uparrow$) & 75.05 &75.30 & 71.15 & 75.55 & 74.65 & 75.15 & 76.78 $\pm$ 0.31 & 77.32 $\pm$ 0.02 & 76.17 $\pm$ 0.33 \\
    \midrule
    \multirow{4}{*}{\rotatebox{90}{{\fontfamily{qcr}\selectfont Amazon}}}    
        & AUROC ($\uparrow$) & 96.52 & 80.12 & 73.81 & 96.73 & 66.98 & 92.60 & \color{OrangeRed}{\textbf{97.99 $\pm$ 0.93}} & 97.84 $\pm$ 1.11 & \color{RoyalBlue}{\textbf{98.16 $\pm$ 1.14}} \\
        & AUPR ($\uparrow$) & 95.01 & 77.18 & 72.35 & 95.16 & 71.18 & 90.50 & \color{RoyalBlue}{\textbf{98.16 $\pm$ 1.56}} & 97.77 $\pm$ 2.11 & \color{OrangeRed}{\textbf{98.08 $\pm$ 1.99}} \\
        & FPR95 ($\downarrow$)  & 13.83 & 85.22 & 83.44 & 13.15 & 98.47 & 32.64 & \color{OrangeRed}{\textbf{3.24 $\pm$ 5.24}} & 3.69 $\pm$ 6.14 & \color{RoyalBlue}{\textbf{2.81 $\pm$ 4.61}} \\
        & ID ACC ($\uparrow$)& 93.83 & 93.88 & 93.80 & 93.85 & 87.71 & 89.54 & 93.79 $\pm$ 1.99 & 92.70 $\pm$ 2.16 & 93.90 $\pm$ 1.88 \\
    \midrule
    \multirow{4}{*}[0.22em]{\rotatebox{90}{{\fontfamily{qcr}\selectfont Coauthor}}} 
        & AUROC ($\uparrow$)& 95.74 & 51.71 & 82.02 & 96.64 & 69.24 & 69.89 & 98.98 $\pm$ 1.41 & \color{OrangeRed}{\textbf{99.02 $\pm$ 1.45}} & \color{RoyalBlue}{\textbf{99.27 $\pm$ 1.17}} \\
        & AUPR ($\uparrow$) & 96.43 & 56.37 & 87.05 & 97.09 & 80.17 & 72.77 & 99.55 $\pm$ 0.44 & \color{OrangeRed}{\textbf{99.57 $\pm$ 0.47}} & \color{RoyalBlue}{\textbf{99.70 $\pm$ 0.40}} \\
        & FPR95 ($\downarrow$)  & 21.37 & 99.97 & 48.09 & 15.49 & 97.04 & 69.60 & \color{OrangeRed}{\textbf{4.29 $\pm$ 6.87}} & 4.33 $\pm$ 7.04 & \color{RoyalBlue}{\textbf{3.31 $\pm$ 5.45}} \\
        & ID ACC ($\uparrow$)& 93.37 & 93.29 & 93.29 & 93.57 & 87.74 & 89.39 & 93.65 $\pm$ 1.50 & 93.21 $\pm$ 1.86 & 93.48 $\pm$ 1.74 \\
    \midrule
    \multirow{4}{*}{\rotatebox{90}{{\fontfamily{qcr}\selectfont Twitch}}} 
        & AUROC ($\uparrow$)& 33.59 & 58.16 & 55.68 & 51.24 & 46.48 & 51.73 & 66.33 $\pm$ 15.32 & \color{OrangeRed}{\textbf{66.48 $\pm$ 15.44}} & \color{RoyalBlue}{\textbf{89.72 $\pm$ 5.42}} \\
        & AUPR ($\uparrow$) & 49.14 & 72.12 & 66.42 & 60.81 & 62.11 & 66.36 & 72.59 $\pm$ 13.44 & \color{OrangeRed}{\textbf{72.71 $\pm$ 13.43}} & \color{RoyalBlue}{\textbf{91.92 $\pm$ 3.85}} \\
        & FPR95 ($\downarrow$)  & 97.45 & 93.96 & 90.13 & 91.61 & 95.62 & 95.51 & 81.18 $\pm$ 19.87  & \color{OrangeRed}{\textbf{80.62 $\pm$ 20.03}} & \color{RoyalBlue}{\textbf{46.60 $\pm$ 26.47}}  \\
        & ID ACC ($\uparrow$)& 68.72 & 70.79 & 70.51 & 70.40 & 67.44 & 68.09 & 70.75 $\pm$ 0.30 & 70.75 $\pm$ 0.33 & 68.63 $\pm$ 1.41 \\
    \midrule
    \multirow{4}{*}{\rotatebox{90}{{\fontfamily{qcr}\selectfont Arxiv}}}    
        & AUROC ($\uparrow$)& 63.91 & 55.07 & 56.92 & 64.20 & 58.32 & OOM  & 70.58 $\pm$ 6.41 & \color{OrangeRed}{\textbf{71.36 $\pm$ 6.35}} & \color{RoyalBlue}{\textbf{72.72 $\pm$ 6.48}} \\
        & AUPR ($\uparrow$) & 75.85 & 68.85 & 69.63 & 75.78 & 72.62 & OOM & 79.99 $\pm$ 12.80 & \color{OrangeRed}{\textbf{80.67 $\pm$ 12.37}} & \color{RoyalBlue}{\textbf{81.62 $\pm$ 11.86}} \\
        & FPR95 ($\downarrow$)  & 90.59 & 100.0 & 94.24 & 90.80 & 93.84 & OOM & 87.90 $\pm$ 2.57 & \color{OrangeRed}{\textbf{86.45 $\pm$ 2.97}} & \color{RoyalBlue}{\textbf{83.65 $\pm$ 4.14}} \\
        & ID ACC ($\uparrow$)& 53.78 & 51.39 & 51.59 & 53.36 & 50.76 & OOM & 53.21 $\pm$ 0.25 & 53.10 $\pm$ 0.21 & 52.44 $\pm$ 0.24 \\
    \specialrule{.1em}{.05em}{.05em} 
    \specialrule{.1em}{.05em}{.05em}
     \end{tabular}}
    \label{Table:non_ood_exposure}
\vspace{-0.4cm}
\end{table*}
Theorem~\hyperref[Theorem_ID_conf_increase]{5.2} therefore guarantees that models trained with IB achieve higher prediction confidence on ID data than standard SL (e.g., $\beta = 0$). Proof is provided in Appendix~\ref{Appendix:Proof_Theorem_conf_increase}.
\begin{proposition}[IB Objective Increases Entropy Separation between ID and OOD]
\textit{Let $\mathbf{Z}^*$ be the representation obtained from an optimal network trained with ID data via the IB objective (Eq.~\ref{eq:graph_IB}). Then:
\begin{equation*}
\begin{split}
    &1.\quad  I(\mathbf{X}_{\text{id}}, \mathbf{A}_{\text{id}}; \mathbf{Z}_{\text{id}}^* \mid \mathbf{Y}) \rightarrow 0, \\
    &2.\quad  I(\mathcal{G}_{\text{OOD}}; \mathbf{Z}_{\text{ood}}^* \mid \mathbf{Y}) \geq I(\mathbf{X}_{\text{id}}, \mathbf{A}_{\text{id}}; \mathbf{Z}_{\text{id}}^* \mid \mathbf{Y}).
\end{split}
\end{equation*}
\vspace{-0.3cm}
This induces entropy separation: 
\[
H(\mathbf{Y} \mid \mathbf{Z}_{\text{id}}^*) \ll H(\mathbf{Y} \mid \mathbf{Z}_{\text{ood}}^*),
\]
enabling improved OOD detection via logit-based scores.}
\label{Proposition_ib_separation}
\end{proposition}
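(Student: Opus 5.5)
We prove the two claims separately and then convert the information gap into an entropy gap using Lemma~\hyperref[Lemma_concentrated]{5.1} and the rewriting used after Theorem~\hyperref[Theorem_ID_conf_increase]{5.2}.

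\textbf{Claim 1.} Rewrite the IB objective as
\[
(1-\beta)\,I(\mathbf{Z};\mathbf{Y}) - \beta\, I(\mathbf{X},\mathbf{A};\mathbf{Z}\mid\mathbf{Y}).
\]
This identity holds because $I(\mathbf{X},\mathbf{A};\mathbf{Z}) = I(\mathbf{Z};\mathbf{Y}) + I(\mathbf{X},\mathbf{A};\mathbf{Z}\mid\mathbf{Y})$. The chain rule gives this, since $I(\mathbf{Z};\mathbf{Y}\mid\mathbf{X},\mathbf{A})=0$ by Proposition~\ref{Proposition_Information_contained}.

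For $\beta\in(0,1)$ the two terms do not compete. Take any optimiser $\mathbf{Z}^*$. Replace it by a label-sufficient statistic of $(\mathbf{X}_{\text{id}},\mathbf{A}_{\text{id}})$, for example the posterior $p(\mathbf{Y}\mid\mathbf{X},\mathbf{A})$ when the encoder class is rich enough. This replacement keeps $I(\mathbf{Z};\mathbf{Y})$ at its maximum $I(\mathbf{X},\mathbf{A};\mathbf{Y})$ and drives $I(\mathbf{X},\mathbf{A};\mathbf{Z}\mid\mathbf{Y})$ to its minimum. Optimality of $\mathbf{Z}^*$ then forces
\[
I(\mathbf{X}_{\text{id}},\mathbf{A}_{\text{id}};\mathbf{Z}^*_{\text{id}}\mid\mathbf{Y})\to 0,
\]
in the limit of optimisation. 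For a deterministic encoder on continuous inputs we would state this as an infimum statement.

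\textbf{Claim 2.} Here the encoder $f^*$ is fixed, and we apply it to $\mathcal{G}_{\text{OOD}}$, whose joint law differs from the ID law. By Claim 1, on ID data $\mathbf{Z}^*$ is, given $\mathbf{Y}$, an almost deterministic function of $\mathbf{Y}$: $H(\mathbf{Z}^*_{\text{id}}\mid\mathbf{Y})$ is minimal.

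On OOD inputs, $f^*$ was not fitted to collapse the within-class variation. Feature or structure shifts therefore propagate into $\mathbf{Z}^*_{\text{ood}}$ as variation not explained by $\mathbf{Y}$. We would formalise this as follows:
\begin{itemize}
\item Because $\mathbf{Z}^*$ is a deterministic function of the input, $I(\mathcal{G};\mathbf{Z}^*\mid\mathbf{Y}) = H(\mathbf{Z}^*\mid\mathbf{Y})$.
\item The ID quantity is already at its minimum possible value, essentially zero, and conditional mutual information is nonnegative.
\end{itemize}
Together these give $I(\mathcal{G}_{\text{OOD}};\mathbf{Z}^*_{\text{ood}}\mid\mathbf{Y}) \ge 0 \approx I(\mathbf{X}_{\text{id}},\mathbf{A}_{\text{id}};\mathbf{Z}^*_{\text{id}}\mid\mathbf{Y})$. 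The inequality is strict whenever the shift changes $\mathbf{Z}^*$ beyond its class-determined value.

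\textbf{Entropy separation.} We apply Lemma~\hyperref[Lemma_concentrated]{5.1} in both directions:
\begin{itemize}
\item For ID data, small $H(\mathbf{Z}^*_{\text{id}}\mid\mathbf{Y})$ gives a concentrated posterior $P(\mathbf{Y}\mid\mathbf{Z}^*_{\text{id}})$, hence small $H(\mathbf{Y}\mid\mathbf{Z}^*_{\text{id}})$.
\item For OOD data, larger residual information not explained by $\mathbf{Y}$ lowers $I(\mathbf{Z}^*_{\text{ood}};\mathbf{Y})$ relative to $H(\mathbf{Z}^*_{\text{ood}})$. More precisely, label-irrelevant spread in $\mathbf{Z}$ overlaps class clusters, which flattens the posterior.
\end{itemize}
This gives $H(\mathbf{Y}\mid\mathbf{Z}^*_{\text{ood}})$ large relative to $H(\mathbf{Y}\mid\mathbf{Z}^*_{\text{id}})$.

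Finally, the energy and softmax scores are monotone in posterior sharpness: the max-logit and log-sum-exp gap track the entropy. So the entropy gap translates into better logit-based detection.

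\textbf{Main obstacle.} Claim 2 and the final step from conditional information to a posterior-entropy gap are the delicate part. Nothing in information theory alone forces an OOD input to have high $H(\mathbf{Y}\mid\mathbf{Z})$: an OOD node could be mapped onto an ID cluster. We expect to need an explicit assumption that the shift is not absorbed by $f^*$, that is, $\mathbf{Z}^*_{\text{ood}}$ falls off the class-conditional supports learned on ID data. The IB compression makes those supports as small as possible, which is exactly why the assumption is milder under IB than under SL. We would state this assumption clearly and then invoke Fano-type reasoning to lower-bound $H(\mathbf{Y}\mid\mathbf{Z}^*_{\text{ood}})$.
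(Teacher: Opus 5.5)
\textbf{The gap: nothing links Claims 1--2 to $H(\mathbf{Y}\mid\mathbf{Z})$.} You flag this yourself. Lemma~5.1 does not support the step in either direction you use it:
a constant $\mathbf{Z}$ has $H(\mathbf{Z}\mid\mathbf{Y})=0$ yet $H(\mathbf{Y}\mid\mathbf{Z})=H(\mathbf{Y})$, while $\mathbf{Z}=(\mathbf{Y},\text{noise})$ has large $H(\mathbf{Z}\mid\mathbf{Y})$ yet $H(\mathbf{Y}\mid\mathbf{Z})=0$.

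Your remark that residual spread lowers $I(\mathbf{Z}^*_{\text{ood}};\mathbf{Y})$ relative to $H(\mathbf{Z}^*_{\text{ood}})$ does not help. The identity $H(\mathbf{Y}\mid\mathbf{Z})=H(\mathbf{Y})-I(\mathbf{Z};\mathbf{Y})$ depends only on the absolute value of $I(\mathbf{Z};\mathbf{Y})$. Your Claim~2 is just nonnegativity against an almost-zero ID value, so it says nothing about the OOD posterior.

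The paper closes this step with a Bayes'-rule heuristic: a diffuse $P(\mathbf{Z}^*_{\text{ood}}\mid\mathbf{Y})$ is said to make $P(\mathbf{Y}\mid\mathbf{Z}^*_{\text{ood}})$ nearly uniform, hence $H\approx\log C$. The second example refutes exactly this inference, as does your own observation that an OOD node may land on an ID cluster. So it has to enter as an explicit assumption.

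Your proposed repair needs redirecting on two counts.
\begin{enumerate}
\item ``$\mathbf{Z}^*_{\text{ood}}$ falls off the class-conditional supports'' flattens nothing. Off the ID supports the ID posterior is undefined. What drives energy scores is the learned head's entropy $-\sum_y p_\theta(y\mid z)\log p_\theta(y\mid z)$. For an affine or ReLU softmax head this tends to zero along almost every ray to infinity. You need an overlap assumption instead: either the class-conditional laws of $\mathbf{Z}^*_{\text{ood}}$ overlap enough to lower-bound the Bayes error $P_e^*$, or the head's maximal softmax probability at OOD codes stays bounded away from $1$.
\item Fano's inequality bounds $H(\mathbf{Y}\mid\mathbf{Z})$ from above by a function of $P_e$, so it cannot give a lower bound. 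The right tool is $H(\mathbf{Y}\mid\mathbf{Z})\ge\mathbb{E}\bigl[-\log\max_y p(y\mid\mathbf{Z})\bigr]\ge-\log(1-P_e^*)$: Shannon entropy dominates min-entropy, then apply Jensen. This gives $\log C$ in the fully confused case.
\end{enumerate}

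\textbf{Claim~1 needs an explicit hypothesis.} Your optimality argument is cleaner than the paper's, which simply invokes Lemma~5.1 and Theorem~5.2. But ``the two terms do not compete'' requires that $\mathbf{Y}$ is a.s.\ a function $g(\mathbf{X},\mathbf{A})$ and that the encoder class contains $g$.

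Without this, your example statistic $T=p(\mathbf{Y}\mid\mathbf{X},\mathbf{A})$ has $I(\mathbf{X},\mathbf{A};T\mid\mathbf{Y})=H(T\mid\mathbf{Y})>0$. Zero conditional information for a deterministic encoder forces $\mathbf{Z}$ to be a common function of input and label. So the bound below is not attained, and nothing forces the conditional information to vanish at the optimum.

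With the hypothesis, the bound $(1-\beta)I(\mathbf{Z};\mathbf{Y})-\beta I(\mathbf{X},\mathbf{A};\mathbf{Z}\mid\mathbf{Y})\le(1-\beta)I(\mathbf{X},\mathbf{A};\mathbf{Y})$ is attained by $\mathbf{Z}=g(\mathbf{X},\mathbf{A})$. It then forces every optimiser to be sufficient with zero conditional information. It is sufficiency, not Lemma~5.1, that gives the ID half of the separation: $H(\mathbf{Y}\mid\mathbf{Z}^*_{\text{id}})=H(\mathbf{Y}\mid\mathbf{X},\mathbf{A})=0$.

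\textbf{Claim~2 takes a different route from the paper.} The paper fixes a feature shift $(\mathbf{X}_{\text{ood}},\mathbf{A}_{\text{id}})$ and splits the conditional information into feature and structure terms by the chain rule. It argues the feature term dominates its ID counterpart by ``contradiction'' with IB optimality, and the structure term similarly. IB optimality constrains only the ID law, so that contradiction is not really available. Your nonnegativity argument is the sound, shift-agnostic version, but as noted above it is too weak on its own to carry the entropy conclusion.
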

\textbf{Proposition~\hyperref[Proposition_ib_separation]{5.3} formalises how the advantages of IB for ID confidence translate into OOD detection benefits.} The proof is provided in Appendix~\ref{Appendix:Proof_ib_separation}. For ID data, point (1) follows from Theorem~\hyperref[Theorem_ID_conf_increase]{5.2}: the conditional mutual information vanishes, which yields low entropy predictions and sharp confidence. For OOD data, point (2) shows that the information compressed into $\mathbf{Z}$ inevitably contains more irrelevant content, resulting in higher conditional entropy and lower confidence. The contrast between these two cases induces a larger entropy gap between ID and OOD data under IB training than under SL training. This entropy separation directly improves logit-based OOD scores such as energy, making IB-trained models naturally better suited for OOD detection. Empirical evaluations in Section~\ref{Sec:Experiments} validate this theoretical analysis for OOD detection.

\section{Experiments} \label{Sec:Experiments}
\textbf{Datasets \& Metrics.} \label{sec:datasets}  
Following~\cite{GNNSafe,NODESAFE}, we evaluate on seven benchmarks. Six are single-graph datasets ({\fontfamily{qcr}\selectfont Cora}, {\fontfamily{qcr}\selectfont Citeseer}, {\fontfamily{qcr}\selectfont Pubmed}, {\fontfamily{qcr}\selectfont Amazon-Photo}, {\fontfamily{qcr}\selectfont Coauthor-CS}, {\fontfamily{qcr}\selectfont ogbn-Arxiv}), with OOD nodes generated via structure manipulation, feature interpolation, label exclusion, or temporal splits. We also use the multi-graph {\fontfamily{qcr}\selectfont TwitchGamers-Explicit}, where OOD corresponds to graphs from different regions. Performance is measured by \textbf{AUROC} ($\uparrow$), \textbf{AUPR} ($\uparrow$), and \textbf{FPR95} ($\downarrow$) for OOD detection, along with ID  accuracy \textbf{(ID ACC)} ($\uparrow$). Details on datasets/metrics are in Appendix~\ref{Appendix:Dataset Details}/~\ref{Appendix:Metrics}.

\textbf{Baselines.}  
To \textbf{fairly evaluate our method}, we select \textbf{SOTA baselines trained with standard SL}: General OOD methods: MSP~\citep{MSP}, ODIN~\citep{ODIN}, Mahalanobis~\citep{Mahalanobis}, Energy~\citep{energy}. Graph-specific methods: GKDE~\citep{GKDE}, GPN~\citep{GPN}, \textsc{GNNSafe}~\citep{GNNSafe}, \textsc{NODESafe}~\citep{NODESAFE}. OOD exposure method: OE~\citep{OE}, Energy FT~\citep{energy}, \textsc{GNNSafe++}, \textsc{NODESafe++}. We further evaluate two advanced graph OOD detection methods, DeGEM~\cite{DEGEM} and GOLD~\cite{GOLD}, in Table~\ref{table:extended_advanced_ood}.

\textbf{Implementation.}  
All methods use a two-layer GCN backbone (hidden size 64). The feature and structure networks are implemented as an MLP and a GCN, respectively, with the same architecture. For the structure network, node features are fixed to $\mathbf{1}$, while the feature network uses only raw node attributes. Additional setups, design choices, hyperparameters, and sensitivity analysis are given in Appendix~\ref{Appendix:Implementation Details}.

\subsection{Overall Performance}
\textbf{As a non-OOD exposed method, \textsc{Tide} markedly outperforms SOTA non-OOD exposure baselines.} Evident in Table~\ref{Table:non_ood_exposure}, \textsc{Tide} significantly enhances OOD detection performance across all datasets (\textcolor{RoyalBlue}{blue highlights}). On synthetic datasets such as {\fontfamily{qcr}{\selectfont{Citeseer}}} and {\fontfamily{qcr}{\selectfont{Pubmed}}}, \textsc{Tide} reduces FPR95 by an average of $24\%$. While datasets like {\fontfamily{qcr}{\selectfont{Amazon}}} and {\fontfamily{qcr}{\selectfont{Coauthor}}} exhibit strong performance with existing SOTA baselines, driven by high classification accuracy, which yield more discriminative energy scores for OOD detection - \textsc{Tide} further improves detection performance. This shows the efficacy of our dedicated information learning and IB over the standard SL in \textsc{GNNSafe}. For real-world datasets, \textsc{Tide} delivers remarkable improvements, increasing the AUROC score by over $23\%$ and reducing the FPR95 by $34\%$ on the {\fontfamily{qcr}{\selectfont{Twitch}}} dataset. On the more challenging {\fontfamily{qcr}{\selectfont{Arxiv}}} dataset, where performance is constrained by limited classification accuracy, \textsc{Tide} still outperforms baseline methods. This highlights \textsc{Tide}'s superiority in OOD detection while maintaining strong ID classification accuracy. Extended results are in Appendix~\ref{Appendix:Extended Results}. Moreover, \textsc{Tide} achieves \textbf{similar efficiency in inference speed and memory usage} as SOTA methods shown in Appendix~\ref{Appendix:Computational Cost}.
\subsection{IB vs. SL in OOD Detection}
\textbf{The empirical results strongly validate IB's superiority over SL for OOD detection.} Table~\ref{Table:IB_VS_SL} compares SL-based and IB-based classifiers, with energy as the OOD score. Even in its simplest form, IB-based models consistently outperform their SL counterparts ({\fontfamily{qcr}\selectfont SL} $<$ {\fontfamily{qcr}\selectfont IB}), highlighting IB's fundamental advantages for OOD detection. When enhanced with effective energy propagation techniques, the performance gap widens further, with IB models showing significantly greater improvements.
\vspace{-0.2cm}

\begin{figure*}[t!]
    \centering
    \includegraphics[width=0.8\linewidth]{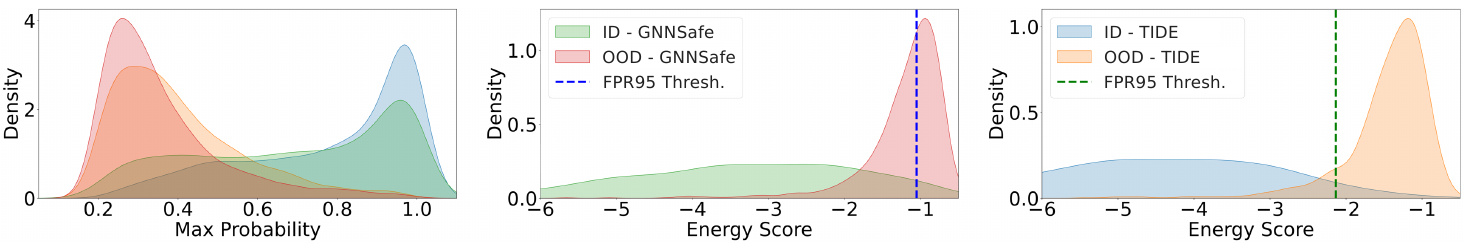} %
    \vspace{-0.3cm}
    \caption{Prediction confidence and energy score distributions of \textsc{GNNSafe} and \textsc{Tide} on \texttt{Cora-Feature}; dashed line shows FPR95.}
    \label{fig:energy_vis}
    \vspace{-0.4cm}
\end{figure*}

\begin{table}[t!]
    \vspace{-0.0cm}
    \centering
    \caption{Detection comparison between IB vs. SL.}
    \vspace{-0.3cm}
    \resizebox{1\linewidth}{!}{
    \begin{tabular}{c|r|cc|cc}
    \specialrule{.1em}{.05em}{.05em}
    \specialrule{.1em}{.05em}{.05em}
    & \multirow{2}{*}{\textbf{Metrics}}&\multicolumn{2}{c|}{w/o energy propagation}&\multicolumn{2}{c}{w/ energy propagation}\\
    &  & {\fontfamily{qcr}\selectfont SL} & {\fontfamily{qcr}\selectfont IB} & {\fontfamily{qcr}\selectfont SL} & {\fontfamily{qcr}\selectfont IB} \\
    \midrule
    \multirow{3}{*}{Cora}
    & AUROC $(\uparrow)$ & 83.09 & \color{RoyalBlue}{86.90} & 91.20 & \color{RoyalBlue}{95.18}  \\
    &AUPR $(\uparrow)$  & 66.21 & \color{RoyalBlue}{69.89} & 82.92 & \color{RoyalBlue}{88.49} \\
    &FPR95 $(\downarrow)$  & 65.21 & \color{RoyalBlue}{46.26} & 50.53 & \color{RoyalBlue}{22.20} \\
    \midrule
    \multirow{3}{*}{Pubmed}
    & AUROC $(\uparrow)$ & 79.25 & \color{RoyalBlue}{87.04} & 84.49 & \color{RoyalBlue}{91.50}  \\
    &AUPR $(\uparrow)$  & 29.21 & \color{RoyalBlue}{45.87} & 64.86 & \color{RoyalBlue}{75.22}\\
    &FPR95 $(\downarrow)$  & 70.69 & \color{RoyalBlue}{52.73} & 60.85 & \color{RoyalBlue}{37.42}\\
    \midrule
    \multirow{3}{*}{Twitch}
    & AUROC $(\uparrow)$ & 51.24 & \color{RoyalBlue}{70.59} & 66.33 & \color{RoyalBlue}{73.50} \\
    &AUPR $(\uparrow)$  & 60.81 & \color{RoyalBlue}{73.26} & 72.59 & \color{RoyalBlue}{76.67} \\
    &FPR95 $(\downarrow)$  & 91.61 & \color{RoyalBlue}{82.46} & 81.18 & \color{RoyalBlue}{67.26}\\
    \specialrule{.1em}{.05em}{.05em}
    \specialrule{.1em}{.05em}{.05em}
    \end{tabular}}
    \label{Table:IB_VS_SL}
    \vspace{-0.7cm}
\end{table}

\subsection{Visualisations of \textsc{Tide}}
\textbf{Energy Gap.} To validate our information-decomposed framework and highlight the advantage of IB over the SL baseline for OOD detection, Figure~\ref{fig:energy_vis} visualises the \textbf{prediction confidence and energy distributions} of \textsc{Tide} and \textsc{GNNSafe}. The first subplot supports the theoretical insight that the IB objective enhances ID prediction confidence: the \textcolor{RoyalBlue}{blue distribution} of \textsc{Tide} is skewed toward higher confidence values than the \textcolor{LimeGreen}{green distribution} of \textsc{GNNSafe}. Additionally, the second and third subplot shows that \textbf{\textsc{Tide} achieves a greater separation between ID and OOD energy scores than \textsc{GNNSafe}}. This increased energy margin highlights \textsc{Tide}’s efficacy in distinguishing ID from OOD data. The dashed FPR95 thresholds indicate reduced ID-OOD overlap for \textsc{Tide}, explaining its stronger OOD detection while maintaining ID accuracy. 

Connecting this visualisation to theory, Proposition~\hyperref[Proposition_ib_separation]{5.3} shows that the IB objective increases ID confidence and enlarges the ID-OOD entropy gap compared to standard SL. This analysis is stated in terms of the predictive distribution $p_{\theta}(\mathbf{y \mid z}) = \mathrm{Softmax}(\ell)$ and its entropy $H(\mathbf{Y \mid Z}) = -\sum_y p_{\theta}(y\mid \mathbf{z})\log p_{\theta}(y\mid \mathbf{z})$, where $\ell$ are the node logits. Since the energy score (Eq.~\ref{eq: energy_gcn}) is computed from the same logits and depends on the same log-partition term in the entropy, IB-induced sharpening of ID posteriors (lower entropy) and increased uncertainty on OOD inputs (higher entropy) translate into lower energies for ID nodes and higher energies for OOD nodes, as shown in Figure~\ref{fig:energy_vis}.

\textbf{Representation Distribution.} Under the tri-component decomposition, \textbf{\textsc{Tide} learns more shift-indicative representations}, particularly via the joint-input network. Figure~\ref{fig:dist_viz} compares ID and OOD representations from \textbf{(1) a standard SL-trained GCN} and from the \textbf{(2) joint}, \textbf{(3) structure}, and \textbf{(4) feature} networks of \textsc{Tide}. Notably, the joint network achieves clearer ID-OOD separation than the baseline GCN. On the structure-shifted dataset (Cora-S), the feature network exhibits little separation, while the structure network captures a distinct, separable topology pattern.

\subsection{Ablation Study}
We conduct an ablation study on the IB backbone, conditional independence constraint, and the full \textsc{Tide} framework, which includes a pairwise mutual information minimisation loss to reduce spurious correlations. Compared to \textsc{GNNSafe} (standard SL trained), the IB backbone lowers FPR95 by $20\%$ on average, highlighting the benefit of compressing irrelevant features. Adding the independence constraint that improves joint-input modelling yields a further $17\%$ FPR95 reduction on \texttt{Twitch}, which exhibits a higher correlation between structure and feature than the synthetically created data. The complete \textsc{Tide} performs best overall, learning robust representations for both ID classification and OOD detection. Details are in Appendix~\ref{Appendix:Extended Results}.

\begin{table}[t!]
    \centering
    \caption{Ablation study.}
    \vspace{-0.3cm}
    \resizebox{0.9\linewidth}{!}{
    \begin{tabular}{c|r|ccccc}
    \specialrule{.1em}{.05em}{.05em}
    \specialrule{.1em}{.05em}{.05em}
    & \textbf{Metrics} & {\fontfamily{qcr}\selectfont Cora} & {\fontfamily{qcr}\selectfont Citeseer} & {\fontfamily{qcr}\selectfont Pubmed} & {\fontfamily{qcr}\selectfont Twitch} & {\fontfamily{qcr}\selectfont Arxiv} \\
    \midrule
    \multirow{4}{*}{\rotatebox{90}{\thead{\textsc{GNNSafe}\\($\mathcal{L}_{\text{sup}}$)}}}
    & AUROC $(\uparrow)$ & 91.20 & 84.89 & 93.82 & 66.33 & 70.58 \\
    &AUPR $(\uparrow)$  & 82.92 & 64.86 & 69.94 & 72.59 & 79.99 \\
    &FPR95 $(\downarrow)$  & 50.53 & 60.85 & 37.71 & 81.18 & 87.90\\
    & ID ACC $(\uparrow)$ & 81.56 & 76.06 & 76.78 & 70.75 & 53.21  \\
    \midrule
    \multirow{4}{*}{\rotatebox{90}{$\mathcal{L}_{\text{VIB}}$}}
    & AUROC $(\uparrow)$ & 95.18 & 91.50 & 96.58 & 73.50 & 72.15 \\
    &AUPR $(\uparrow)$  & 88.49 & 75.22 & 77.34 & 76.67 & 81.13 \\
    &FPR95 $(\downarrow)$  & 22.20 & 37.42 & 16.85 & 67.26 & 84.93\\
    & ID ACC $(\uparrow)$ & 82.15 & 75.39 & 74.95 & 70.36 & 52.46  \\
    \midrule
    \multirow{4}{*}{\rotatebox{90}{\thead{$\mathcal{L}_{\text{VIB}}$\\$\&\mathcal{L}_{\text{VCInd}}$}}}
    & AUROC $(\uparrow)$ & \color{OrangeRed}{95.23} & \color{OrangeRed}{91.99} & \color{OrangeRed}{96.94} & \color{OrangeRed}{87.61} & \color{OrangeRed}{72.68} \\
    &AUPR $(\uparrow)$  & \color{OrangeRed}{88.65} & \color{OrangeRed}{76.01} & \color{OrangeRed}{78.77} & \color{OrangeRed}{90.61} & \color{OrangeRed}{81.57} \\
    &FPR95 $(\downarrow)$  & \color{OrangeRed}{21.56} & \color{OrangeRed}{32.57} & \color{OrangeRed}{14.37} & \color{OrangeRed}{49.91} & \color{OrangeRed}{83.86} \\
    & ID ACC $(\uparrow)$ & 82.12 & 75.24 & 75.18 & 70.33 & 52.43  \\
    \midrule
    \multirow{4}{*}{\rotatebox{90}{{\shortstack{\textsc{Tide}}}}}
    & AUROC $(\uparrow)$ & \color{RoyalBlue}{95.58} & \color{RoyalBlue}{92.27} & \color{RoyalBlue}{97.35} & \color{RoyalBlue}{89.72} & \color{RoyalBlue}{72.72} \\
    &AUPR $(\uparrow)$  & \color{RoyalBlue}{89.34} & \color{RoyalBlue}{76.57} & \color{RoyalBlue}{80.73} & \color{RoyalBlue}{91.92} & \color{RoyalBlue}{81.62}\\
    &FPR95 $(\downarrow)$  
            & \color{RoyalBlue}{20.09} & \color{RoyalBlue}{30.79} & \color{RoyalBlue}{12.75} & \color{RoyalBlue}{46.60} & \color{RoyalBlue}{83.65} \\
     & ID ACC $(\uparrow)$ & 82.56 & 75.66 & 76.17 & 68.63 & 52.44\\
    \specialrule{.1em}{.05em}{.05em}
    \specialrule{.1em}{.05em}{.05em}
    \end{tabular}}
    \label{Table:Ablation}
    \vspace{-0.6cm}
\end{table}

\subsection{\textsc{Tide} against Disentanglement-Based Methods} \label{Appendix:disentangle_comparison}
Table~\ref{table:tribe_vs_disentangled} compares \textsc{Tide} with DeGEM~\cite{DEGEM} and GraphIFE~\cite{graphife}. DeGEM is a specialised graph OOD detector and therefore achieves competitive performance. For a fairer comparison, we also report a \textsc{Tide}-enhanced DeGEM variant, which consistently improves logit-based detection metrics, supporting our theoretical finding that the IB principle sharpens ID predictions and increases ID-OOD separation. In contrast, GraphIFE is not designed for node-level OOD detection and shows limited energy-based detection capability across all datasets, even with energy propagation, despite reasonable ID accuracy.

\begin{table}[h!]
\centering
\small
\vspace{-0.4cm}
\caption{Comparison with disentanglement-based method.}%
\setlength{\tabcolsep}{5pt}
\resizebox{0.9\linewidth}{!}{
\renewcommand{\arraystretch}{1.15}
\begin{tabular}{llcccc}
\toprule
\textbf{Dataset} & \textbf{Method} & AUROC & AUPR & FPR95 & IDAcc \\
\midrule
\multirow{5}{*}{\texttt{Cora-S}}
& \textsc{Tide}            & 95.15 & 89.33 & 23.31 & 78.97 \\
& DeGEM            & 92.02 & 70.24 & 18.91 & 79.80 \\
& DeGEM + \textsc{Tide}    & \textbf{96.43} & \textbf{90.18} & \textbf{15.03} & 77.80 \\
& GraphIFE         & 72.70 & 51.68 & 84.68 & 78.90 \\
& GraphIFE + Prop  & 83.03 & 75.67 & 77.70 & -- \\
\midrule
\multirow{5}{*}{\texttt{Cora-F}}
& \textsc{Tide}            & 97.88 & 94.67 & 8.13  & 78.10 \\
& DeGEM            & 98.66 & 96.14 & 5.76  & 82.80 \\
& DeGEM + \textsc{Tide}    & \textbf{99.05} & \textbf{96.86} & \textbf{3.62} & 82.20 \\
& GraphIFE         & 80.45 & 70.74 & 78.84 & 78.75 \\
& GraphIFE + Prop  & 84.83 & 75.39 & 73.14 & -- \\
\midrule
\multirow{5}{*}{\texttt{Pubmed-S}}
& \textsc{Tide}            & \textbf{97.25} & \textbf{79.76} & 12.97 & 75.93 \\
& DeGEM            & 95.37 & 51.55 & 20.07 & 73.10 \\
& DeGEM + \textsc{Tide}    & 97.20 & 55.60 & \textbf{8.44} & 78.40 \\
& GraphIFE         & 67.88 & 17.93 & 99.84 & 76.00 \\
& GraphIFE + Prop  & 75.35 & 47.39 & 97.55 & -- \\
\midrule
\multirow{5}{*}{\texttt{Pubmed-F}}
& \textsc{Tide}            & 97.44 & 81.70 & 12.53 & 76.40 \\
& DeGEM            & 99.63 & \textbf{93.40} & 1.70  & 79.00 \\
& DeGEM + \textsc{Tide}    & \textbf{99.67} & 91.73 & \textbf{0.59} & 78.70 \\
& GraphIFE         & 71.02 & 31.68 & 99.97 & 75.95 \\
& GraphIFE + Prop  & 74.32 & 42.27 & 89.98 & -- \\
\bottomrule
\end{tabular}}
\label{table:tribe_vs_disentangled}
\end{table}

\begin{table*}[t!]
\centering
\caption{Comparison of \textsc{Tide} with advanced graph OOD detection methods. $+$\textsc{Tide} denotes incorporating the \textsc{Tide} objective. We mark an \textbf{improved performance from integrating \textsc{Tide}} into the method in \textbf{\textcolor{ForestGreen}{Green}}.}
\vspace{-0.2cm}
\resizebox{\textwidth}{!}{
\fontsize{40pt}{50pt}\selectfont
\begin{tabular}{l|llll|llll|llll|llll|llll}
\specialrule{.1em}{.05em}{.05em}
\specialrule{.1em}{.05em}{.05em}
\multirow{2}{*}{Model}
& \multicolumn{4}{c|}{\textbf{Cora -- Label}}
& \multicolumn{4}{c|}{\textbf{Pubmed -- S}}
& \multicolumn{4}{c|}{\textbf{Pubmed -- F}}
& \multicolumn{4}{c|}{\textbf{Amazon -- F}}
& \multicolumn{4}{c}{\textbf{Citeseer -- S}}
\\
& AUROC & AUPR & FPR95 & ID Acc
& AUROC & AUPR & FPR95 & ID Acc
& AUROC & AUPR & FPR95 & ID Acc
& AUROC & AUPR & FPR95 & ID Acc
& AUROC & AUPR & FPR95 & ID Acc \\
\midrule
\textsc{GNNSafe}
& 93.19 & 82.99 & 29.55 & 89.66
& 92.45 & 65.47 & 47.81 & 77.00
& 95.18 & 74.41 & 27.62 & 76.57 
& 98.46 & 98.90 & 0.44 & 92.65
& 79.87 & 61.44 & 74.45 & 65.20 \\
\midrule
\textsc{Tide} 
& 93.70 & 84.03 & 28.84 & 90.61
& 97.25 & 79.76 & 12.97 & 75.93
& 97.44 & 81.70 & 12.53 & 76.40 
& 98.65 & 99.04 & 0.30 & 92.70
& 91.89 & 80.11 & 38.41 & 70.03 \\
\midrule
DeGEM
& 92.24 & 78.80 & 31.34 & 91.77
& 95.37 & 51.55 & 20.07 & 73.10
& 99.63 & 93.40 & 1.70 & 79.00 
& 97.34 & 96.70 & 3.16 & 91.65
& 94.93 & 84.63 & 17.49 & 70.90 \\
+ \textsc{Tide}
& \textcolor{ForestGreen}{95.85} & \textcolor{ForestGreen}{89.24} & \textcolor{ForestGreen}{20.59} & 92.41
& \textcolor{ForestGreen}{97.20} & \textcolor{ForestGreen}{55.60} & \textcolor{ForestGreen}{8.44} & 78.40
& \textcolor{ForestGreen}{99.67} & 91.73 & \textcolor{ForestGreen}{0.59} & 78.70 
& \textcolor{ForestGreen}{97.71} & 96.49 & \textcolor{ForestGreen}{1.84} & 92.32
& \textcolor{ForestGreen}{96.92} & \textcolor{ForestGreen}{85.63} & \textcolor{ForestGreen}{7.24} & 69.70 \\
\midrule
GOLD
& 95.36 & 85.33 & 21.20 & 89.56
& 88.01 & 97.84 & 11.57 & 75.30
& 87.28 & 98.49 & 6.98 & 73.20 
& 99.52 & 99.63 & 0.24 & 92.48
& 78.09 & 82.12 & 65.98 & 69.10 \\
+ \textsc{Tide}
& 94.85 & 83.96 & \textcolor{ForestGreen}{18.86} & 89.56
& \textcolor{ForestGreen}{91.13} & \textcolor{ForestGreen}{98.72} & \textcolor{ForestGreen}{5.52} & 74.60
& \textcolor{ForestGreen}{93.15} & \textcolor{ForestGreen}{99.00} & \textcolor{ForestGreen}{3.80} & 74.70 
& \textcolor{ForestGreen}{99.61} & \textcolor{ForestGreen}{99.70} & \textcolor{ForestGreen}{0.10} & 91.98
& \textcolor{ForestGreen}{78.61} & \textcolor{ForestGreen}{82.90} & \textcolor{ForestGreen}{44.97} & 68.70 \\
\specialrule{.1em}{.05em}{.05em}
\specialrule{.1em}{.05em}{.05em}
\end{tabular}
}
\label{table:extended_advanced_ood}
\vspace{-0.4cm}
\end{table*}

\subsection{Extended OOD Exposure Study}
\textbf{With OOD exposure regularisation (Eq.~\ref{eq: ereg}), \textsc{Tide} achieves superior or competitive performance.}
Shown in Table~\ref{Table:real_ood_exposure}, \textsc{Tide} outperforms \textsc{GNNSafe++} and \textsc{NodeSAFE++} across multiple datasets, as highlighted in bold. For example, on {\fontfamily{qcr}{\selectfont{Cora}}}, \textsc{Tide} achieves an AUROC of $95.76$ (vs. $93.13$ \& $91.32$) and significantly reduces the FPR95 from $42.48$ down to $19.34$. On larger {\fontfamily{qcr}{\selectfont{Pubmed}}}, 
\textsc{Tide} improves AUROC to $98.26$ (vs. $93.77$ \& $95.14$) and decreases FPR95 to $8.84$ (vs. $39.58$ \& $24.87$). On more challenging real-world OOD datasets, \textsc{Tide} remains competitive on the {\fontfamily{qcr}{\selectfont{Twitch}}} and {\fontfamily{qcr}{\selectfont{Arxiv}}} datasets. This highlights \textsc{Tide}'s adaptability to incorporate OOD exposure strategies, enabling it to achieve superior performance.

\subsection{Comparison with Advanced Graph OOD Detectors} To further demonstrate the effectiveness of \textsc{Tide} and the role of the IB objective, we compare against two advanced graph OOD detection baselines, DeGEM~\citep{DEGEM} and GOLD~\citep{GOLD}, on Cora and Pubmed under structure shift, feature shift, and label shift settings in Table~\ref{table:extended_advanced_ood}. All results were reproduced to the best of our ability
Both baselines exhibit strong performance across several settings, reflecting the strength of their respective designs. To enable a fair comparison with our decomposition-based method, we additionally evaluate ``+\textsc{Tide}" variants of GOLD and DeGEM by incorporating our \textsc{Tide} objective during training. These \textsc{Tide}-augmented versions often improve detection over their base models, demonstrating the benefit and efficacy of our information-decomposition framework. Additional comparison with disentanglement-based methods are shown in Appendix~\ref{Appendix:disentangle_comparison}.

\begin{table}[t!]
    \centering
    \caption{Comparisons of OOD exposure with \textcolor{RoyalBlue}{best} and \textcolor{OrangeRed}{runner-up} highlighted. GS(NS)++ short for GNNSafe(NODESAFE)++.}
    \vspace{-0.2cm}
    \resizebox{1\linewidth}{!}{
    \begin{tabular}{c|r|cccc|c}
    \specialrule{.1em}{.05em}{.05em}
    \specialrule{.1em}{.05em}{.05em}
    \multirow{2}{*}{} & \multirow{2}{*}{\textbf{Metrics}} & \multicolumn{4}{c|}{\textbf{OOD Exposure}} & \textbf{\textsc{Tide}}\\
    & & \textbf{OE} & \textbf{Energy FT} & \textbf{\textsc{GS++}} & \textbf{\textsc{NS++}} & \textbf{w/ OE} \\
    \midrule
    \midrule
    \multirow{4}{*}{\rotatebox{90}{{\fontfamily{qcr}\selectfont Cora}}} 
        & AUROC $(\uparrow)$& 79.76 & 85.13 & \color{OrangeRed}{93.13} & 91.32 & \color{RoyalBlue}{95.76} \\
        & AUPR $(\uparrow)$ & 64.93 & 67.89 & \color{OrangeRed}{85.28}  & 82.74  & \color{RoyalBlue}{89.68} \\
        & FPR95 $(\downarrow)$  & 75.22 & 51.03 & \color{OrangeRed}{37.28} & 42.48 & \color{RoyalBlue}{19.34} \\
        & ID ACC $(\uparrow)$& 77.69 & 80.44 & 82.16 & 74.94 & 82.20 \\
    \midrule
    \multirow{4}{*}[0.22em]{\rotatebox{90}{{\fontfamily{qcr}\selectfont Citeseer}}}    
        & AUROC $(\uparrow)$& 63.75 & 79.81 & \color{OrangeRed}{85.69} & 84.29 & \color{RoyalBlue}{92.16} \\
        & AUPR $(\uparrow)$ & 47.20 & 52.79 & \color{OrangeRed}{65.89} & 64.86 & \color{RoyalBlue}{75.90} \\
        & FPR95 $(\downarrow)$& 74.15 & 57.37 & \color{OrangeRed}{54.76} & 56.12 & \color{RoyalBlue}{29.22} \\
        & ID ACC $(\uparrow)$& 62.24 & 72.66 & 72.74 & 68.60 & 75.81 \\
    \midrule
    \multirow{4}{*}{\rotatebox{90}{{\fontfamily{qcr}\selectfont Pubmed}}}    
        & AUROC $(\uparrow)$& 78.38 & 76.25 & 93.77 & \color{OrangeRed}{95.14} & \color{RoyalBlue}{98.26} \\
        & AUPR $(\uparrow)$ & 27.67 & 27.61 & \color{OrangeRed}{74.06} & 72.20 & \color{RoyalBlue}{85.36} \\
        & FPR95 $(\downarrow)$& 79.05 & 91.02 & 39.58 & \color{OrangeRed}{24.87} & \color{RoyalBlue}{8.84} \\
        & ID ACC $(\uparrow)$& 73.00 & 75.80 & 77.88 &  74.17& 76.67 \\
    \midrule
    \multirow{4}{*}{\rotatebox{90}{{\fontfamily{qcr}\selectfont Twitch}}} 
        & AUROC $(\uparrow)$& 55.72 & 84.50 & \color{RoyalBlue}{95.76} & 76.79 & \color{OrangeRed}{91.52} \\
        & AUPR $(\uparrow)$ & 70.18 & 88.04 & \color{RoyalBlue}{97.45} & 83.97 & \color{OrangeRed}{93.59} \\
        & FPR95 $(\downarrow)$  & 95.07 & 61.29 & \color{RoyalBlue}{29.81} & 34.46 & \color{OrangeRed}{33.19} \\
        & ID ACC $(\uparrow)$& 70.73 & 70.52 & 70.36 & 70.07 & 68.74 \\
    \midrule
    \multirow{4}{*}{\rotatebox{90}{{\fontfamily{qcr}\selectfont Arxiv}}}    
        & AUROC $(\uparrow)$& 69.80 & 71.56 & 73.98 & \color{OrangeRed}{74.50} & \color{RoyalBlue}{74.89} \\
        & AUPR $(\uparrow)$ & 80.15 & 80.47 & \color{OrangeRed}{82.50} & 81.55 & \color{RoyalBlue}{83.25} \\
        & FPR95 $(\downarrow)$  & 85.16 & 80.59 & 79.99 & \color{OrangeRed}{77.53} & \color{RoyalBlue}{77.37} \\
        & ID ACC $(\uparrow)$& 52.39 & 53.26 & 53.28 & 51.32 & 52.41 \\
    \specialrule{.1em}{.05em}{.05em}
    \specialrule{.1em}{.05em}{.05em} 
     \end{tabular}}
    \label{Table:real_ood_exposure}
    \vspace{-0.7cm}
\end{table}

\section{Conclusion}
\label{Sec:Conclusion}
We propose \textsc{Tide}, a novel tri-component information decomposition framework for OOD detection on graph-structured data. By decomposing information into structural, feature, and joint components with an IB objective, \textsc{Tide} preserves shift-indicative joint-input label information while suppressing label-irrelevant and spurious correlations in individual input components. We provide theoretical insights showing that the IB principle improves in-distribution confidence compared to standard supervised learning, which in turn benefits OOD detection. Extensive experiments validate the efficacy of \textsc{Tide}, which outperforms SOTA SL-based OOD detection methods, including both non-OOD-exposed and OOD-exposed approaches.

\section{Impact Statement}
We hope our work inspires future research on node-level graph OOD detection in real-world settings. As a foundational study on OOD detection for graph-structured data, we do not identify any direct negative societal impacts. Our research relies on publicly available datasets, algorithms, and models, all of which are properly acknowledged and pose no risks requiring safeguards. While potential societal implications may exist, none warrant specific emphasis in this study.

\section{Acknowledgment}
This research has been supported by Australian Research Council Discovery Projects (DP230101196 and DE250100919).

\bibliography{tide}
\bibliographystyle{icml2026}

\newpage
\appendix
\onecolumn

\section*{Appendix}
\section{Reproducibility Statement}
To support reproducible research, we summarise our efforts as below:
\begin{enumerate}
    \item \textbf{Baselines \& Datasets.} We follow the baseline from~\citep{GNNSafe} and utilise publicly available datasets. The details are described in Section \ref{Sec:Experiments} and Appendix \ref{Appendix:Dataset Details}.
    \item \textbf{Model training.} Detailed implementation setting is provided in Section \ref{Sec:Experiments} and Appendix \ref{Appendix:Implementation Details}.
    \item \textbf{Evaluation Metrics.} We discuss the evaluation metrics used in Section \ref{Sec:Experiments} and Appendix \ref{Appendix:Metrics}.
    \item \textbf{Methodology.} Our \textsc{Tide} framework is fully documented in Section~\ref{Sec:Tribe}, with implementation notes provided in Section~\ref{subsec:implementation_notes}. To support implementation and reproducibility we provide a detailed discussion on obtaining the \textbf{tractable objective} in Appendix~\ref{Appendix:Tractable_optimisation}, a \textbf{neural network parameterisation} of \textsc{Tide} is given in Appendix~\ref{Appendix:Neural_network_parameterisation_IBGOOD}. In addition, we provide a detailed pseudo code of the optimisation process in Algorithm \ref{alg:Tribe} and the inference process in Algorithm~\ref{alg:Tribe_Inference}. We provide theoretical analysis and corresponding proofs in Section~\ref{sec:Theoretical_insights} and Appendix~\ref{Appendix:proofs}.
\end{enumerate}

\section{Potential Limitations} \label{Appendix:Limitations}
In Section~\ref{subsec:information_decomposition_motivation}, we illustrate that \textsc{Tide} employs two additional networks on top of the GNN classifier to effectively capture structural and feature information: a GNN for modelling structural relationships and an MLP for processing feature-based information. This multi-network architecture inevitably incurs additional computational and memory overhead during training. However, during inference, our model achieves comparable inference speeds while delivering significant performance improvements across all evaluated datasets. We argue that the increased training cost is a justifiable trade-off given the substantial gains in detection performance.  The detailed computational cost is discussed in Appendix~\ref{Appendix:Computational Cost}. 
Described in Appendix~\ref{Appendix:Implementation Details}, a constant $\alpha$ and $\beta$ values were applied uniformly in the current experiments, future work will explore fine-grained control of individual terms. Furthermore, currently implementation only considers a simple structure representation learning strategy (i.e., using a constant feature vector), we will investigate the use of more advanced position encoding and structure representation learning techniques in future work. This may include methods like spectral embeddings via Laplacian eigenmaps, random walk-based encodings, and learnable position encodings etc.
Since our \textsc{Tide} focuses on exploring the advantages of IB over standard supervised learning, we have left the comparison and integration with more advanced graph OOD detection methods and novel training frameworks  like DeGEM and GOLD as future work~\citep{DEGEM, GOLD}.
Moreover, our proposed method and theoretical analysis are currently focused on node-level classification and we do not study the anomaly detection task. However, we believe the framework can be extended to graph-level OOD detection. Potential directions include applying the IB principle to the representation of entire graphs or learning compact and informative subgraphs, as explored in prior works~\citep{GIB_Membership_Privacy, GSL_VIB}. We  also aim to investigate the impact of this framework in the context of heterophilic graphs. These extensions present promising avenues for future research.

\section{Proof of Technical Insights} \label{Appendix:proofs}

\subsection{Proof of Proposition~\hyperref[Proposition_Information_contained]{4.1}} \label{Appendix:Proof_proposition_information_contained}
\begin{proof}
Given $\mathbf{Z}$ is encoded from $(\mathbf{X,A})$ by the network $f_\theta$. Hence, $\mathbf{Z}$ depends only on $(\mathbf{X,A})$ or $\mathbf{Z \perp Y \mid (X,A)}$, and the Markov chain $ \mathbf{Y \rightarrow (X,A) \rightarrow Z} $ holds. Equivalently, this implies $p(z\mid x,a,y)=p(z\mid x,a)$ and we have $I(\mathbf{Z;Y\mid X,A})=0$. More explicitly, $$ I(\mathbf{Z;Y\mid X,A}) = \mathbb{E}_{p(x,a)}\!\left[ D_{\mathrm{KL}}\!\big(p(z,y\mid x,a)\,\|\,p(z\mid x,a)p(y\mid x,a)\big) \right]. $$ Under the above Markov chain, $p(z,y\mid x,a)=p(z\mid x,a)p(y\mid x,a)$, so the KL term is zero, which gives $I(\mathbf{Z;Y\mid X,A})=0.$ Then, by the chain rule of mutual information, $$ I(\mathbf{X,A,Z;Y})=I(\mathbf{X,A;Y})+I(\mathbf{Z;Y\mid X,A})=I(\mathbf{X,A;Y}). $$

\end{proof}

\subsection{Proof of Lemma~\hyperref[Lemma_concentrated]{5.1}} \label{Appendix:Proof_lemma_concentrated}
\begin{proof}
Consider a deterministic encoder with encoded representation $\mathbf{Z} = f(\mathbf{X}, \mathbf{A})$, the conditional mutual information $I(\mathbf{X}, \mathbf{A}; \mathbf{Z} \mid \mathbf{Y})$ can be expanded as:
\begin{equation}
\begin{split}
I(\mathbf{X}, \mathbf{A}; \mathbf{Z} \mid \mathbf{Y}) &= H(\mathbf{X}, \mathbf{A} \mid \mathbf{Y}) + H(\mathbf{Z} \mid \mathbf{Y}) - H(\mathbf{X}, \mathbf{A}, \mathbf{Z} \mid \mathbf{Y}) \\
&= H(\mathbf{X}, \mathbf{A} \mid \mathbf{Y}) + H(\mathbf{Z} \mid \mathbf{Y}) - \big( H(\mathbf{X}, \mathbf{A} \mid \mathbf{Y}) + H(\mathbf{Z} \mid \mathbf{X}, \mathbf{A}, \mathbf{Y}) \big) \\
&= H(\mathbf{Z} \mid \mathbf{Y}) - H(\mathbf{Z} \mid \mathbf{X}, \mathbf{A}, \mathbf{Y}).
\end{split}
\end{equation}

Since $\mathbf{Z}$ is a determined by $\mathbf{X}$ and $\mathbf{A}$, the second term vanishes:
\begin{equation} \label{eq:conditional_entropy_in_lemma}
H(\mathbf{Z} \mid \mathbf{X}, \mathbf{A}, \mathbf{Y}) = 0 \quad \Rightarrow \quad I(\mathbf{X}, \mathbf{A}; \mathbf{Z} \mid \mathbf{Y}) = H(\mathbf{Z} \mid \mathbf{Y}).
\end{equation}
Thus, minimising $I(\mathbf{X}, \mathbf{A}; \mathbf{Z} \mid \mathbf{Y})$ is equivalent to minimising $H(\mathbf{Z} \mid \mathbf{Y})$.

Notably, reducing $H(\mathbf{Z} \mid \mathbf{Y})$ implies the conditional distribution $P(\mathbf{Z} \mid \mathbf{Y})$ becomes more concentrated (i.e., 
$\mathbf{Z}$ is more predictable given $\mathbf{Y}$ (reduced uncertainty)). Applying Bayes' theorem, we have:
\begin{equation}
P(\mathbf{Y} \mid \mathbf{Z}) = \frac{P(\mathbf{Z} \mid \mathbf{Y})P(\mathbf{Y})}{P(\mathbf{Z})}.
\end{equation}
Here, a more concentrated $P(\mathbf{Z} \mid \mathbf{Y})$ would amplify the likelihood term $P(\mathbf{Z} \mid \mathbf{Y})$ relative to the marginal $P(\mathbf{Z}) = \mathbb{E}_{\mathbf{Y}'}[P(\mathbf{Z} \mid \mathbf{Y}')]$. As a result, it produces sharper peaks in $P(\mathbf{Y} \mid \mathbf{Z})$, increasing the maximum probability $\max_{\mathbf{y}} P(\mathbf{Y}=\mathbf{y} \mid \mathbf{Z})$ and thus the prediction confidence of ID data, where:
\begin{equation}
H(\mathbf{Y} \mid \mathbf{Z}) = -\mathbb{E}_{\mathbf{Z}} \left[ \sum_{y} P(y \mid \mathbf{Z}) \log P(y \mid \mathbf{Z}) \right] \rightarrow 0.
\end{equation}
\end{proof}

\subsection{Proof of Theorem~\hyperref[Theorem_ID_conf_increase]{5.2}} \label{Appendix:Proof_Theorem_conf_increase}
\begin{proof}
To begin, using the chain rule of mutual information, we can decompose the second mutual information term in the IB objective as:
\begin{equation}
I(\mathbf{X}, \mathbf{A}; \mathbf{Z}) = I(\mathbf{Z}; \mathbf{Y}) + I(\mathbf{X}, \mathbf{A}; \mathbf{Z} \mid \mathbf{Y}).
\end{equation}
Substituting into the IB objective yields:
\begin{equation}
\max (1-\beta)I(\mathbf{Z}; \mathbf{Y}) - \beta I(\mathbf{X}, \mathbf{A}; \mathbf{Z} \mid \mathbf{Y}).
\end{equation}

Notably, when $\beta = 0$, the objective reduces to the standard supervised learning goal of maximising $I(\mathbf{Z}; \mathbf{Y})$. However, this indicates that the superfluous information $I(\mathbf{X}, \mathbf{A}; \mathbf{Z} \mid \mathbf{Y})$ is preserved, meaning the encoded representation $\mathbf{Z}$ may contain information from $\mathbf{X}$ and $\mathbf{A}$ that is irrelevant to the label $\mathbf{Y}$. This can negatively impact predictive performance, as $\mathbf{Z}$ is not optimised to focus solely on label-relevant information.

In contrast, for $\beta \in (0, 1)$, the IB objective simultaneously increases predictive information $I(\mathbf{Z}; \mathbf{Y})$ and reduces superfluous information by minimising $I(\mathbf{X}, \mathbf{A}; \mathbf{Z} \mid \mathbf{Y})$. This ensures $\mathbf{Z}$ can capture useful information about $\mathbf{Y}$, while compressing out input information irrelevant to $\mathbf{Y}$. From Lemma~\hyperref[Lemma_concentrated]{5.1}, minimising $I(\mathbf{X}, \mathbf{A}; \mathbf{Z} \mid \mathbf{Y})$ reduces $H(\mathbf{Z} \mid \mathbf{Y})$. This reduction forces $\mathbf{Z}$ to discard information in $\mathbf{X}$ that is irrelevant to $\mathbf{Y}$, leading to a sharper conditional distribution $P(\mathbf{Y} \mid \mathbf{Z})$. Thus, when $P(\mathbf{Y} \mid \mathbf{Z})$ becomes sharper, $\max_{\mathbf{y}} P(\mathbf{Y}=\mathbf{y})$ increases, as the probability mass is concentrated on the most likely value of $\mathbf{Y}$. Consequently, the conditional entropy decreases:
\begin{equation}
H(\mathbf{Y} \mid \mathbf{Z}) = -\mathbb{E}_{\mathbf{Z}}[\max_{\mathbf{y}} P(\mathbf{Y}=\mathbf{y} \mid \mathbf{Z})] + \text{cross-entropy terms},
\end{equation}
thereby increasing prediction confidence. In other words, the model becomes more certain about its predictions, as $\mathbf{Z}$ is optimised to focus on the most relevant information for predicting $\mathbf{Y}$.

Additionally, an optimal trade-off occurs when $\beta$ balances information compression-to-prediction. Let $\Delta_Z = \text{Var}(I(\mathbf{Z}; \mathbf{Y}))$ and $\Delta_{Z|Y} = \text{Var}(I(\mathbf{X}, \mathbf{A}; \mathbf{Z} \mid \mathbf{Y}))$ represent parameter sensitivity. The critical ratio:
\begin{equation}
\beta < \frac{\Delta_Z}{\Delta_Z + \Delta_{Z|Y}}
\end{equation}
ensures sufficient weight on predictive information $I(\mathbf{Z}; \mathbf{Y})$. Under this condition, IB produces representations $\mathbf{Z}$ with minimised superfluous information and maximised prediction confidence than standard supervised learning.
\end{proof}

\textbf{Derivation of $I(\mathbf{X}, \mathbf{A}; \mathbf{Z} \mid \mathbf{Y}) = I(\mathbf{X}; \mathbf{Z} \mid \mathbf{Y}) + I(\mathbf{A}; \mathbf{Z} \mid \mathbf{Y}, \mathbf{X}).$}
The decomposition follows from the chain rule of mutual information:
\begin{equation} \label{eq:derivation_conditional}
\begin{split}
I(\mathbf{X}, \mathbf{A}; \mathbf{Z} \mid \mathbf{Y}) &= I(\mathbf{X}; \mathbf{Z} \mid \mathbf{Y}) + I(\mathbf{A}; \mathbf{Z} \mid \mathbf{Y}, \mathbf{X}) \\
&= H(\mathbf{Z} \mid \mathbf{Y}) - H(\mathbf{Z} \mid \mathbf{Y}, \mathbf{X}) \\
&\quad + H(\mathbf{Z} \mid \mathbf{Y}, \mathbf{X}) - H(\mathbf{Z} \mid \mathbf{Y}, \mathbf{X}, \mathbf{A}) \\
&= H(\mathbf{Z} \mid \mathbf{Y}) - H(\mathbf{Z} \mid \mathbf{Y}, \mathbf{X}, \mathbf{A}).
\end{split}
\end{equation}
For deterministic encoders where $\mathbf{Z} = f(\mathbf{X}, \mathbf{A})$, we have $H(\mathbf{Z} \mid \mathbf{X}, \mathbf{A}, \mathbf{Y}) = 0$, by Eq.~\ref{eq:conditional_entropy_in_lemma}, we complete the derivation.

\subsection{Proof of Proposition~\hyperref[Proposition_ib_separation]{5.3}} 
\begin{proof} \label{Appendix:Proof_ib_separation}
Given an optmised network $f^*$ trained with ID data $(\mathbf{X_{\text{id}},A_{\text{id}}})$ via the IB objective. Let $\mathbf{Z}^* = f^*(\mathbf{X,A})$ be the encoded representation obtained from the network. We can derive the following:

\textbf{Part 1: ID Data Compression}
From the derivation in Eq.~\ref{eq:derivation_conditional}, given ID inputs $(\mathbf{X_{\text{id}},A_{\text{id}}})$ and optimal encoded ID representation $\mathbf{Z_{\text{id}}}^*$, the second term in the IB objective in Eq.~\ref{eq:graph_IB} can be expressed as:
\begin{equation} \label{eq:proof_ib_min}
I(\mathbf{X}_{\text{id}}, \mathbf{A}_{\text{id}}; \mathbf{Z}_{\text{id}}^* \mid \mathbf{Y})
\end{equation}

Following directly from Lemma~\hyperref[Lemma_concentrated]{5.1} and Theorem~\hyperref[Theorem_ID_conf_increase]{5.2}, this is equivalent to minimising $H(\mathbf{Z}_{\text{id}}^* \mid \mathbf{Y})$, making $P(\mathbf{Y} \mid \mathbf{Z}_{\text{id}}^*)$ sharply concentrated. Thus:
\begin{equation}
H(\mathbf{Y} \mid \mathbf{Z}_{\text{id}}^*) = -\mathbb{E}_{\mathbf{Z}_{\text{id}}^*} \left[ \sum_{y} P(y \mid \mathbf{Z}_{\text{id}}^*) \log P(y \mid \mathbf{Z}_{\text{id}}^*) \right] \rightarrow 0.
\end{equation}

\textbf{Part 2: OOD Data Separation}
To investigate the effect on OOD data, without loss of generality, we consider the feature shift defined in Section~\ref{Appendix:OOD_definitions}, where $P_{\text{id}}(\mathbf{X}) \neq P_{\text{ood}}(\mathbf{X})$,  $P_{\text{id}}(\mathbf{X | A}) \neq P_{\text{ood}}(\mathbf{X | A})$, $P_{\text{id}}(\mathbf{Y | X}) \neq P_{\text{ood}}(\mathbf{Y | X})$, and $P_{\text{id}}(\mathbf{Y | A, X}) \neq P_{\text{ood}}(\mathbf{Y | A, X})$. Such that we have OOD features $\mathbf{X}_{\text{ood}}$ with ID structure $\mathbf{A}_{\text{id}}$ (i.e., $\mathcal{G}_{\text{feat shift}} = (\mathbf{X}_{\text{ood}}, \mathbf{A}_{\text{id}}))$. Let $\mathbf{Z}_{\text{ood}}^* = f^*(\mathbf{X}_{\text{ood}}, \mathbf{A}_{\text{id}})$ denote the representation encoded from the ID trained model for the given OOD data.

From the derivation in Eq.~\ref{eq:derivation_conditional}, we have:
\begin{equation} \label{eq:ib_min}
I(\mathbf{X}_{\text{id}}, \mathbf{A}_{\text{id}}; \mathbf{Z}_{\text{id}}^* \mid \mathbf{Y}) = I(\mathbf{X}_{\text{id}}; \mathbf{Z}_{\text{id}}^* \mid \mathbf{Y}) + I(\mathbf{A}_{\text{id}}; \mathbf{Z}_{\text{id}}^* \mid \mathbf{Y}, \mathbf{X}_{\text{id}}).
\end{equation}
\begin{equation}
I(\mathbf{X}_{\text{ood}}, \mathbf{A}_{\text{id}}; \mathbf{Z}_{\text{ood}}^* \mid \mathbf{Y}) = I(\mathbf{X}_{\text{ood}}; \mathbf{Z}_{\text{ood}}^* \mid \mathbf{Y}) + I(\mathbf{A}_{\text{id}}; \mathbf{Z}_{\text{ood}}^* \mid \mathbf{Y}, \mathbf{X}_{\text{ood}}).
\end{equation}

Suppose for contradiction that:
\begin{equation}
I(\mathbf{X}_{\text{ood}}; \mathbf{Z}_{\text{ood}}^* \mid \mathbf{Y}) < I(\mathbf{X}_{\text{id}}; \mathbf{Z}_{\text{id}}^* \mid \mathbf{Y}).
\end{equation}

This would imply:
\begin{align}
I(\mathbf{A}_{\text{id}}; \mathbf{Z}_{\text{ood}}^* \mid \mathbf{Y}, \mathbf{X}_{\text{ood}}) &\approxeq I(\mathbf{A}_{\text{id}}; \mathbf{Z}_{\text{id}}^* \mid \mathbf{Y}, \mathbf{X}_{\text{id}}), \\
\Rightarrow I(\mathbf{X}_{\text{ood}}, \mathbf{A}_{\text{id}}; \mathbf{Z}_{\text{ood}}^* \mid \mathbf{Y}) &\leq I(\mathbf{X}_{\text{id}}, \mathbf{A}_{\text{id}}; \mathbf{Z}_{\text{id}}^* \mid \mathbf{Y}).
\end{align}

However, this contradicts the IB optimality condition since the encoder $\mathbf{Z}^*$ was never trained to compress $\mathbf{X}_{\text{ood}}$. Thus, we must have:
\begin{equation}
I(\mathbf{X}_{\text{ood}}; \mathbf{Z}_{\text{ood}}^* \mid \mathbf{Y}) \geq I(\mathbf{X}_{\text{id}}; \mathbf{Z}_{\text{id}}^* \mid \mathbf{Y}).
\end{equation}

Similarly, since $\mathbf{A}_{\text{id}}$ is fixed but $\mathbf{X}_{\text{ood}}$ is novel:
\begin{equation}
I(\mathbf{A}_{\text{id}}; \mathbf{Z}_{\text{ood}}^* \mid \mathbf{Y}, \mathbf{X}_{\text{ood}}) \geq I(\mathbf{A}_{\text{id}}; \mathbf{Z}_{\text{id}}^* \mid \mathbf{Y}, \mathbf{X}_{\text{id}}).
\end{equation}

Therefore:
\begin{equation}
I(\mathbf{X}_{\text{ood}}, \mathbf{A}_{\text{id}}; \mathbf{Z}_{\text{ood}}^* \mid \mathbf{Y}) \geq I(\mathbf{X}_{\text{id}}, \mathbf{A}_{\text{id}}; \mathbf{Z}_{\text{id}}^* \mid \mathbf{Y}).
\end{equation}

From Lemma~\hyperref[Lemma_concentrated]{5.1}, higher conditional mutual information implies higher $H(\mathbf{Z}_{\text{ood}}^* \mid \mathbf{Y})$. By Bayes' theorem:
\begin{equation}
P(\mathbf{Y} \mid \mathbf{Z}_{\text{ood}}^*) = \frac{P(\mathbf{Z}_{\text{ood}}^* \mid \mathbf{Y})P(\mathbf{Y})}{P(\mathbf{Z}_{\text{ood}}^*)}.
\end{equation}

The diffuse $P(\mathbf{Z}_{\text{ood}}^* \mid \mathbf{Y})$ makes $P(\mathbf{Y} \mid \mathbf{Z}_{\text{ood}}^*)$ approximately uniform, yielding:
\begin{equation}
H(\mathbf{Y} \mid \mathbf{Z}_{\text{ood}}^*) \approx \log C \gg H(\mathbf{Y} \mid \mathbf{Z}_{\text{id}}^*),
\end{equation}
where $C$ is the number of classes. This entropy gap enables improved OOD detection through logit-based scoring methods (i.e., energy score~\citep{energy}, MaxLogit~\citep{MaxLogit}).
\end{proof}

\section{Graph Neural Network} \label{Appendix:GNN}
Graph Neural Networks (GNNs) are inherently well-suited for capturing intricate dependencies between nodes in a graph. Their effectiveness largely stems from the message-passing mechanism, which progressively aggregates information from neighbouring nodes, allowing the model to learn both local and global structural and feature patterns. 
Let $\mathbf{z}_i^{(l)}$ represent the learned embedding of node $i$ at layer $l$. A standard Graph Convolutional Network (GCN) updates node representations iteratively using the propagation rule:
$$
\mathbf{Z}^{(l)} = \sigma \left( \mathbf{D}^{-1/2} \mathbf{\Tilde{A}} \mathbf{D}^{-1/2} \mathbf{Z}^{(l - 1)} \mathbf{W}^{(l)} \right),
$$
where $\mathbf{Z}^{(l-1)} = [\mathbf{z}_i^{(l-1)}]$, and the initial node features are given by $\mathbf{H}^{(0)} = \mathbf{X}$. Here, $\mathbf{\Tilde{A}} = \mathbf{A} + \mathbf{I}$ is the adjacency matrix with self-loops, $\mathbf{I}$ denotes the identity matrix, $\mathbf{D}$ is the diagonal degree matrix of $\mathbf{\Tilde{A}}$, $\sigma$ is a nonlinear activation function (e.g., ReLU), and $\mathbf{W}^{(l)}$ represents the trainable weight matrix for layer $l$~\citep{GCN}.

\section{Information Bottleneck Principle} \label{appendix:information_bottleneck_principle}
The information bottleneck (IB) principle aims to learn a compressed representation $\mathbf{Z}$ with the maximum relevant information about the label $\mathbf{Y}$ (i.e., $I(\mathbf{Z}; \mathbf{Y})$) while minimising the label-irrelevant information (i.e., $I(\mathbf{X}; \mathbf{Z})$), as constrained by the Markov chain $\mathbf{Y} \rightarrow \mathbf{X} \rightarrow \mathbf{Z}$~\citep{IB, VIB}. 
For graph data, $\mathbf{Z}$ is a function of both node features $\mathbf{X}$ and graph structure $\mathbf{A}$ (i.e., $\mathbf{Z} = \text{GNN}(\mathbf{X,A})$)
The IB objective can be defined as: 
\begin{equation} \label{eq:graph_IB}
\max_{\mathbf{Z}} I(\mathbf{Z}; \mathbf{Y}) - \beta I ( \mathbf{X}, \mathbf{A}; \mathbf{Z}),
\end{equation}
where $\beta$ is a Lagrange multiplier controlling the trade-off between compression and prediction. This enables the compression of the joint input information to representation $\mathbf{Z}$ via $I (\mathbf{X}, \mathbf{A}; \mathbf{Z})$ and optimise its classification ability via $I(\mathbf{Z}; \mathbf{Y})$~\citep{GIB, GSL_VIB}.

\section{Tractable Optimisation of \textsc{Tide}} \label{Appendix:Tractable_optimisation}
To optimise the intractable \textsc{Tide} objective in Eq.~\ref{Eq:final_objective}, we approximate the IB terms via variational lower bounds~\citep{VIB} and enforce the conditional independence and pairwise mutual information minimisation using reconstruction loss and contrastive loss~\citep{CLUB}, respectively.
\textbf{Variational Approximation of $\mathcal{L}_{\text{IB}}$:} Beginning with the first term in objective Eq.~\ref{Eq:final_objective}, without loss of generality, we provide a variational approximation bound for $\text{IB}_\mathbf{Z} = I(\mathbf{Z}; \mathbf{Y}) - \beta_\mathbf{Z} I(\mathbf{X, A}; \mathbf{Z}) $, and the tractable bounds for $\text{IB}_\mathbf{V}$ and $\text{IB}_\mathbf{Q}$ can be derived accordingly. Let $q(\mathbf{Y|Z})$ and $r(\mathbf{Z})$ denote variational approximations to the true conditional distribution $p_{\mathbf{Z}}(\mathbf{Y|Z})$, and marginal distribution $p(\mathbf{Z})$, respectively. Consider a parametric Gaussian distribution as prior $p(\mathbf{Z} | \mathbf{X,A})$, we have:
$$
p(\mathbf{Z} | \mathbf{X,A}) = \mathcal{N}(\mathbf{Z}; \mu(f\mathbf{(\mathbf{X, A})}), \Sigma(f\mathbf{(\mathbf{X, A})})), 
$$
where $f$ is modelled as a GNN network 
that encodes the input $(\mathbf{X,A})$, followed by linear layers to obtain the mean and variance representations respectively.
Subsequently, we apply the reparameterisation trick as $\mathbf{Z} = f(\mathbf{X,A}, \epsilon)$, which ensures it is a deterministic function of $\mathbf{X,A}$ and the Gaussian random variable $\epsilon \sim p(\epsilon) = \mathcal{N}(\mathbf{0,1})$.
Thus, given training data $\mathbf{X}_{\text{tr}} = \mathbf{(x_1, \dots, x_N)}, \mathbf{Y}_{\text{tr}} = (y_1, \dots, y_N)$ with adjacency matrix $\mathbf{A}_{\text{tr}}$, using the empirical data distribution, we can obtain a tractable variational lower bound for $\text{IB}_\mathbf{Z}$:

\begin{equation} \label{Eq:vib}
    \text{VIB}_\mathbf{Z} = \frac{1}{N}\sum_{n = 1}^{N} \bigl[ \mathbb{E}_{\epsilon \sim p(\epsilon)} \left[\log q(y_n| f(\mathbf{x_n, A, \epsilon})) \right]\\
    - \beta \text{KL}\big(p(\mathbf{Z}|\mathbf{x_n,A}) || r(\mathbf{Z})\big)\bigr] \leq \text{IB}_\mathbf{Z}, 
\end{equation}
where $r(\mathbf{Z}) = \mathcal{N}(\mathbf{Z}, \mathbf{0,1})$ is fixed to the standard normal.
The first term in Eq.~\ref{Eq:vib} represents the log-likelihood of the output $ \mathbf{Y} $ given the representation $ \mathbf{Z} $, which can be calculated using the cross entropy loss to encourage $ \mathbf{Z} $ to be predictive of $ \mathbf{Y}$. The second term is the Kullback-Leibler (KL) divergence between the conditional distribution $ p(\mathbf{Z} \mid \mathbf{X}, \mathbf{A}) $ and the variational prior $ r(\mathbf{Z}) $, which reduces the irrelevant information compressed from the joint input $ (\mathbf{X}, \mathbf{A}) $ into the representation $ \mathbf{Z}$.
Following a similar approach, we can derive the lower bound for $\text{IB}_\mathbf{V}$ and $ \text{IB}_\mathbf{Q}$, and thus obtain the tractable variational lower bound $\mathcal{L}_{\text{VIB}_{\mathbf{Z;V;Q}}}$ for the first term in Eq.~\ref{Eq:final_objective} as:
\begin{equation} \label{Eq:final_vib}
    \max \mathcal{L}_{\text{IB}_{\mathbf{Z;V;Q}}} \ge \mathcal{L}_{\text{VIB}_{\mathbf{Z;V;Q}}}  = \text{VIB}_{\mathbf{Z}} + \text{VIB}_{\mathbf{V}} + \text{VIB}_{\mathbf{Q}}.
    \raisetag{22pt}
\end{equation}

\textbf{Conditional Independence of $\mathcal{L}_{\text{CInd}}$:}
To make $\mathcal{L}_{\text{CInd}} = I(\mathbf{A; X|Z})$ tractable, we can use a variational approximation $q(\mathbf{X|Z})$ to estimate the true distribution $p(\mathbf{X|Z})$ and derive a varitional upper bound via:
\begin{equation}\label{Eq:vcind}
    \min \mathcal{L}_{\text{VCInd}_{\mathbf{X,A,Z}}} = \mathbb{E}_{p(\mathbf{x,A,z})}[
    \underbrace{\log p(\mathbf{x| A,z})}_{\textcolor{red}{(1)}} - \underbrace{\log q(\mathbf{x|z})}_{\textcolor{red}{(2)}}
    ]\\\geq \mathcal{L}_{\text{CInd}_{\mathbf{X,A,Z}}}.
\raisetag{15pt}
\end{equation}

Minimising this objective encourages $\mathbf{Z}$ to encode all the information of $\mathbf{X}$ that is relevant to $\mathbf{A}$ (via $\textcolor{red}{(1)}$) and reducing the information of $\mathbf{X}$ that is independent of $\mathbf{A}$ (via $\textcolor{red}{(2)}$). This ensures $\mathbf{X}$ and $\mathbf{A}$ is conditionally independent given $\mathbf{Z}$.

\textbf{Pairwise Mutual Information Minimisation via Contrastive Learning $\mathcal{L}_{\text{PMI}}$:} Similar to the variational bound for the IB terms, without loss of generality, we provide an upper bound for $I(\mathbf{Z;V})$, and the bounds for $I(\mathbf{Z;Q})$ and $I(\mathbf{V;Q})$ can be derived accordingly. Notably, since the conditional distribution $p(\mathbf{Z|V})$ is unknown, utilising the variational contrastive log-ratio upper bound of mutual information (CLUB)~\citep{CLUB} with samples $(\mathbf{z_n, v_n})$, we can derive a tractable upper bound for $I(\mathbf{Z;V})$:
\vspace{-0.1cm}
\begin{equation}
        I_{\text{VCLUB}_{\mathbf{Z;V}}} = \frac{1}{N} \sum_{n =1}^N \bigl[ \log q(\mathbf{z_n | v_n}) - \frac{1}{N} \sum_{m = 1}^N \log q(\mathbf{z_m|v_n})\bigr] \\
         \ge I(\mathbf{Z;V}),
    \raisetag{13pt}
\end{equation}

for $q(\mathbf{Z |V})$ is a variational distribution to approximate $p(\mathbf{Z|V})$. The first term evaluates how well the variational approximation predicts the ``positive" pairs $(\mathbf{z_n, v_n})$, capturing the dependence between $\mathbf{Z}$ and $\mathbf{V}$. The second term penalises this by evaluating the model on ``negative" pairs $(\mathbf{z_m, v_n})$, approximating the behaviour when $\mathbf{Z}$ and $\mathbf{V}$ are independent, thereby providing an effective upper bound for $I(\mathbf{Z;V})$. Similarly, we can derive the upper bounds for $I(\mathbf{Z;Q})$ and $I(\mathbf{V;Q})$, and obtain the final tractable objective $\mathcal{L}_{\text{VPMI}_{\mathbf{Z;V;Q}}}$ as:
\begin{equation} \label{Eq:vpmi}
    \min \mathcal{L}_{\text{VPMI}_{\mathbf{Z;V;Q}}} = \alpha_1 I_{\text{VCLUB}_{\mathbf{Z;V}}} + \alpha_2 I_{\text{VCLUB}_{\mathbf{Z;Q}}}\\ + \alpha_3 I_{\text{VCLUB}_{\mathbf{V;Q}}}
    \ge \mathcal{L}_{\text{PMI}_{\mathbf{Z;V;Q}}}.
\raisetag{26pt}
\end{equation}

Hence, minimising this loss will ensure the pairwise MI independence between $\mathbf{Z}$, $\mathbf{V}$, and $\mathbf{Q}$.
\textbf{Final tractable \textsc{Tide} objective:} Combining Eq.~\ref{Eq:final_vib},~\ref{Eq:vcind},~\ref{Eq:vpmi}, the overall tractable objective for \textsc{Tide} is given by:
\begin{equation}
\begin{split} \label{Eq:tractable_objective}
    \max_{\theta_\mathbf{Z}, \theta_\mathbf{V}, \theta_\mathbf{Q}} \mathcal{L}_{\text{VIB}_{\mathbf{Z;V;Q}}} - \lambda_\text{CInd} \mathcal{L}_{\text{VCInd}_{\mathbf{X,A,Z}}} - \mathcal{L}_{\text{VPMI}_{\mathbf{Z,V,Q}}}.
\end{split}
\end{equation}
While the objective in Eq.~\ref{Eq:tractable_objective} combines all losses into one expression, we optimise the networks individually using their respective losses. 

The optimisation process for \textsc{Tide} is outlined in Algorithm~\ref{alg:Tribe} in the main text. In \textbf{Step 1}, the individual networks are updated to capture sufficient yet minimal joint-input, structure-only, and feature-only information for predicting the label \(\mathbf{Y}\). \textbf{Step 2} enforces mutual conditional independence between \(\mathbf{X|Z}\) and \(\mathbf{A}\) specifically for the GNN classifier. Finally, \textbf{Step 3} minimises pairwise mutual information, updating each network only with the loss relevant to its representation. This structured approach ensures efficient and targeted optimisation for \textsc{Tide}.

\section{Neural Network Parameterisation of \textsc{Tide}} \label{Appendix:Neural_network_parameterisation_IBGOOD}
To optimise the intractable \textsc{Tide} objective in Eq.~\ref{Eq:tractable_objective}, we parameterise the variational approximations using GNNs and MLPs:
\textbf{Main Encoder Networks.}
Our framework uses three primary  networks to encode the representations $\mathbf{Z, V}$ and $\mathbf{Q}$ and for prediction (i.e., joint-input classifier, feature network, structure network) with parameters $\theta_\mathbf{Z}$, $\theta_\mathbf{V}$, and $\theta_\mathbf{Q}$ respectively:
\begin{align}
\text{Joint-input Network:} & \quad \text{GNN}_\text{CLS}(\mathbf{X}, \mathbf{A}) \leftarrow \mathbf{Z}\\
\text{Feature Network:} & \quad \text{MLP}_\text{feat}(\mathbf{X}) \leftarrow \mathbf{V} \\
\text{Structure Network:} & \quad \text{GNN}_\text{struct}(\mathbf{I}, \mathbf{A}) \leftarrow \mathbf{Q}, 
\end{align}
where $\mathbf{X} \in \mathbb{R}^{n \times d}$ is the node feature matrix, $\mathbf{A} \in \mathbb{R}^{n \times n}$ is the adjacency matrix, $\mathbf{I} \in \mathbb{R}^{n \times n}$ is the identity matrix (used as placeholder features), and $\mathbf{Z}, \mathbf{V}, \mathbf{Q}$ are the latent representations. To capture structural information, we leverage a GNN model that inherently provides robust structure encoding. Meanwhile, for the feature network - comprising solely feature embeddings - a lightweight MLP is more appropriate.

Each network contains the following elements respectively:
\textbf{Variational Implementation.}
Following common variational approximations~\citep{VIB}, we model the conditional distributions of our latent representations using Gaussian distributions parameterised by the respective encoders. We use a GNN/MLP encoder to extract representations $\boldsymbol{h}$ from the given inputs, and use linear MLP layers to encode the latent variables $\boldsymbol{\mu}$ and $\boldsymbol{\sigma}$, this produces Gaussian distributions over the latent space:
\begin{align}
\boldsymbol{\mu}_Z=\text{MLP}_{\mu_Z}(\boldsymbol{h}_Z),\quad\boldsymbol{\sigma}_Z=\text{MLP}_{\sigma_Z}(\boldsymbol{h}_Z),\quad&\boldsymbol{h}_Z=\text{GNN}_\text{enc}(\mathbf{X}, \mathbf{A})\\
\boldsymbol{\mu}_V=\text{MLP}_{\mu_V}(\boldsymbol{h}_V),\quad\boldsymbol{\sigma}_V=\text{MLP}_{\sigma_V}(\boldsymbol{h}_V),\quad&\boldsymbol{h}_V=\text{MLP}_\text{enc}(\mathbf{X})\\
\boldsymbol{\mu}_Q=\text{MLP}_{\mu_Q}(\boldsymbol{h}_Q),\quad\boldsymbol{\sigma}_Q=\text{MLP}_{\sigma_Q}(\boldsymbol{h}_Q),\quad&\boldsymbol{h}_Q=\text{GNN}_\text{enc}(\mathbf{I}, \mathbf{A}).
\end{align}

Using these, we can sample the latent variables using the reparameterisation trick:
\begin{align}
\mathbf{Z} &= \boldsymbol{\mu}_Z + \boldsymbol{\sigma}_Z \odot \boldsymbol{\epsilon}_Z, \quad \boldsymbol{\epsilon}_Z \sim \mathcal{N}(0, \mathbf{I}) \\
\mathbf{V} &= \boldsymbol{\mu}_V + \boldsymbol{\sigma}_V \odot \boldsymbol{\epsilon}_V, \quad \boldsymbol{\epsilon}_V \sim \mathcal{N}(0, \mathbf{I}) \\
\mathbf{Q} &= \boldsymbol{\mu}_Q + \boldsymbol{\sigma}_Q \odot \boldsymbol{\epsilon}_Q, \quad \boldsymbol{\epsilon}_Q \sim \mathcal{N}(0, \mathbf{I})
\end{align}
This enable us to obtain the representations $\mathbf{Z, V}$ and $\mathbf{Q}$ for optimisation.

\textbf{Auxiliary Layers.}
For predictions and calculation of the reconstruction loss, IB minimisation, and pairwise MI minimisation, we use the following auxiliary layers in the respective networks:

\begin{align}
\text{Prediction Layers:} & \quad \text{GNN}_\text{pred-Z}(\mathbf{Z, A}) \rightarrow \hat{\mathbf{Y}}_Z \\
& \quad \text{MLP}_\text{pred-V}(\mathbf{V}) \rightarrow \hat{\mathbf{Y}}_V  \\
& \quad \text{GNN}_\text{pred-Q}(\mathbf{Q, A}) \rightarrow \hat{\mathbf{Y}}_Q \\
\text{Reconstruction Layer:} & \quad \text{MLP}_\text{recon}(\mathbf{Z}) \rightarrow \hat{\mathbf{X}}
\end{align}

\subsection{Objective Components in Terms of Networks}

\textbf{1) Information Bottleneck Terms.}
For each representation $\mathbf{Z, V},$ and $\mathbf{Q}$, we calculate the VIB loss Eq.~\ref{Eq:vib}. For example, for the joint representation $\mathbf{Z}$:
\begin{align}
\text{VIB}_{\mathbf{Z}} &= \underbrace{\mathbb{E}_{\epsilon_Z}[\log \text{MLP}_\text{pred-Z}(\mathbf{Z} = \boldsymbol{\mu}_Z + \sigma_Z \odot \epsilon_Z)(\mathbf{Y})]}_{\text{CE(Z,Y)}} \nonumber \\
&- \beta_Z \underbrace{\sum_i \left(1 + \log((\boldsymbol{\sigma}_{Z,i})^2) - (\boldsymbol{\mu}_{Z,i})^2 - (\boldsymbol{\sigma}_{Z,i})^2\right)}_{\text{KL divergence term (analytical form)}}
\end{align}

\textbf{2) Conditional Independence Term.}
For the conditional independence loss Eq.~\ref{Eq:vcind}, we use the reconstruction network:
\begin{align}
\mathcal{L}_{\text{CInd}} &= \underbrace{\mathbb{E}_{\epsilon_Z}[-\|\mathbf{X} - \text{MLP}_\text{recon}(\mathbf{Z} = \boldsymbol{\mu}_Z + \boldsymbol{\sigma}_Z \odot \boldsymbol{\epsilon}_Z)\|^2_2]}_{\text{Reconstruction loss (encourages Z to encode X information)}}
\end{align}

\textbf{3) Pairwise Mutual Information Terms.}
Following CLUB~\citep{CLUB}, we estimate the pairwise mutual information terms using contrastive learning Eq.~\ref{Eq:vpmi}. For each pair of representations (i.e., $\mathbf{Z, V}$), we compute:
\begin{align}
\text{CLUB}(\mathbf{Z}, \mathbf{V}) &= \frac{1}{N}\sum_{i=1}^N \log \frac{s(\mathbf{Z}_i, \mathbf{V}_i)}{\frac{1}{N}\sum_{j=1}^N s(\mathbf{Z}_i, \mathbf{V}_j)} \\
\end{align}
where $s$ is a similarity function (e.g., dot product) between representations, N is the number of samples.

\textbf{Implementation Notes}
In practice, we optimise this objective by:
\begin{itemize}
    \item Forward pass through all encoders to get $\boldsymbol{\mu}_Z, \boldsymbol{\sigma}_Z, \boldsymbol{\mu}_V, \boldsymbol{\sigma}_V, \boldsymbol{\mu}_Q, \boldsymbol{\sigma}_Q$. 
    \item Sample $\mathbf{Z}, \mathbf{V}, \mathbf{Q}$ using the reparameterisation trick.
    \item Compute all components of the loss using these samples.
    \item Backpropagate through the networks to update parameters according to Algorithm~\ref{alg:Tribe}.
\end{itemize}

\section{Extended Related Work} \label{Appendix:related_work}
Out-of-distribution detection is a critical task in machine learning, extensively studied across various domains. A significant body of work focuses on methods that rely solely on ID data, employing techniques such as softmax scores \citep{MSP, ODIN}, energy-based scoring \citep{energy, Wang21energy, NODESAFE}, and activation pruning \citep{ASH, DICE, React}. Other strategies enhance model confidence \citep{Gen-ODIN, WhyReLU, Ensemble}, improve feature learning \citep{MOOD, NMD}, or incorporate adversarial approaches \citep{GOOD-cert, ATOM, RND}. Beyond ID-based methods, OOD exposure leverages auxiliary OOD data during training to improve detection performance \citep{OE, energy, Textual-OODExposure, DivOE, ATOL, SAL, GNNSafe}. Meanwhile, an emerging direction involves generating synthetic OOD data. GAN-based methods, such as ConfOOD \citep{ConfOOD}, train confidence classifiers alongside OOD data generation, while VOS \citep{VOS} synthesises outliers from low-probability Gaussian regions. More recently, diffusion models have been widely adopted, as seen in DFDD \citep{DFDD} and Dream-OOD \citep{Dream-OOD}.
\subsection{Graph OOD Detection}
Recent advancements in OOD detection have expanded to graph-structured data, addressing both node-level and graph-level detection tasks. For node-level OOD detection, several innovative methods have been developed to improve detection accuracy and robustness. GNNSafe introduces an energy propagation schema that considers the inter-dependence of node instances, providing a more nuanced approach to identifying OOD nodes~\citep{GNNSafe}. Building on this, NODESafe incorporates additional regularization terms to reduce and bound extreme energy scores, ensuring more stable and reliable detection~\citep{NODESAFE}. Meanwhile, TopoOOD explores topological shifts in graph data and proposes a node-wise Dirichlet Energy metric to measure neighbourhood turbulence, which serves as a confidence score for OOD detection~\citep{TopoOOD}. Other approaches, such as GKDE, employ a multi-source uncertainty framework to estimate node-level Dirichlet distributions, which aids in identifying OOD instances~\citep{GKDE}. Similarly, GPN leverages Bayesian posterior and density estimation to quantify uncertainty at the node level, further enhancing detection capabilities~\citep{GPN}. Moreover, DeGEM presents a novel energy-based model training framework involving a multi-hop graph encoder and energy head, targeting the detection on heterophilic graphs~\citep{DEGEM}. Additionally, GOLD presents a data-synthesis-based framework that generates pseudo-OOD embeddings without relying on pre-trained generative models or auxiliary OOD datasets. At its core is an alternating optimisation framework, which effectively balances ID representation learning with divergence-enhanced pseudo-OOD generation~\citep{GOLD}. RAGNOR presents a post-hoc baseline that leverages embedding norms for OOD detection on both node-level and graph-level tasks~\cite{RAGNOR}. More recently, some works explores OOD detection on text-attributed graph using Large Language Models for zero-shot detection or generating auxiliary pseudo-OOD samples~\cite{GLIP-OOD, Gsyn_tag, OOD_TAG}. 

At the graph level, OOD detection methods have focused on modelling distribution shifts, adopting data-centric perspectives, and utilising unsupervised learning techniques. GraphDE models distribution shifts through a graph generative process, deriving a posterior distribution to detect OOD graphs~\citep{GraphDE}. In contrast, AAGOD takes a data-centric approach by learning structural patterns in graph data through a learnable amplifier matrix, improving detection performance~\citep{AAGOD}. Another notable method, GOOD-D, applies unsupervised contrastive learning to enhance graph-level OOD detection without relying on labelled OOD data~\citep{GOOD-D}. These methods highlight the growing emphasis on leveraging graph structure and distributional properties to improve OOD detection.

\subsection{Information Bottleneck}
The IB principle has been widely applied across various domains to enhance learned representations~\citep{GIB, VIB, IB, Invariant_IB_OODG, IB_FOR_DL, v-infor, Multi_view_IB, GIB_Membership_Privacy, GSL_VIB}. Its objective is to learn a compressed representation that maximally retains label-relevant information while minimising label-irrelevant information from inputs~\citep{IB, VIB}. This concept has also been extended to graph-structured data, enabling robust representation learning from both node features and graph structure~\citep{GIB, GSL_VIB}.

Notably, IB's potential for OOD detection has primarily been explored in Euclidean settings~\citep{OOD_Info_theoretical_view, DRL, Communication_OODD_IB, DIB_OOD_IB, ObjDetec}. For instance, DRL introduces a dual representation learning framework that learns both a target-discriminative representation and an additional distribution-discriminative representation $\mathbf{C}$, capturing all information relevant to the target $\mathbf{Y}$~\citep{DRL}. Meanwhile, \cite{OOD_Info_theoretical_view} examines information from the perspective of classification-relevant and classification-irrelevant detection, providing a theoretical analysis of the overconfidence of OOD samples in models trained on ID data using supervised learning. \cite{Unc_VIB_ood} empirically validates IB's effectiveness for OOD detection. In contrast to these studies, our work offers theoretical insights into the advantages of IB over standard supervised learning, particularly in improving prediction confidence. For graph-structured data, IS-GIB introduces I-GIB to mitigate irrelevant information by minimising the mutual information between the input graph and its embeddings. S-GIB was further leveraged to utilise structural relationships to discard irrelevant information, establishing an effective invariant learning framework for OOD generalisation~\citep{IS-GIB}. Additionally, CSIB generates and refines causal subgraphs based on invariant causal prediction and the graph information bottleneck principle, preserving essential features while filtering out spurious correlations, thereby improving graph-based OOD generalisation~\citep{Causal-ib-ood}. Moreover, IBPL introduces a graph-level OOD detection method that effectively tackles the issue of overlapping features between ID and OOD graphs, aiming to enhance detection performance. IBPL proposes a novel graph prompt that jointly optimises node features and graph structure, enabling the generation of more discriminative ID features. By leveraging the IB principle, IBPL maximises the MI between category labels and the prompt graph while minimising the MI between perturbed graphs and the prompt graph. This dual optimisation process allows for the extraction of robust ID features while significantly reducing the influence of overlapping features~\citep{IBPL}. Unlike these approaches, our work is driven by theoretical insights into IB and mutual decomposition, demonstrating their benefits for OOD detection. We validate our findings through extensive experiments, highlighting and validating the effectiveness of IB for graph OOD detection.

\section{Defining OOD shifts} \label{Appendix:OOD_definitions}
Typically, considering a message passing neural network (i.e., GNN) model, it is trained using ID data (i.e., $\mathbf{X}_\text{ID}^{tr}, \mathbf{A}_\text{ID}^{tr}$), and the test data consists of a combination of ID and OOD instances (i.e., $(\mathbf{X}_\text{ID}^{te}, \mathbf{A}_\text{ID}^{te}), (\mathbf{X}_\text{OOD}^{te}, \mathbf{A}_\text{OOD}^{te}))$. The primary objective is to identify and detect the OOD samples from the ID instances while maintaining high ID classification performance.

Generally, we categorise OOD shifts into the following: $ P_{train}(\mathbf{X, A}) \neq P_{test}(\mathbf{X, A})$ and the conditional distribution $P_{train}(\mathbf{Y}|\mathbf{X, A}) \neq P_{test}(\mathbf{Y}|\mathbf{X, A})$. 
To detect OOD data, the task is to formulate an OOD scoring function $F$, usually built upon the output from the classifier GNN, such that it outputs $F(\mathbf{X},\mathbf{A};\text{GNN})=0$ for data from in-distribution and $F(\mathbf{X},\mathbf{A};\text{GNN})=1$ for data from out-of-distribution. This definition, however, lacks the detail of the particular shift occurring alone on the feature $\mathbf{X}$ or among the structure $\mathbf{A}$.
Thus, to understand when OOD detection is possible for such an ID-trained model, we adopt the following definitions and explore the impact of each of the distribution shifts~\citep{DECAF}.
\textbf{Feature Shift ($\mathbf{X}$):}A feature shift occurs when the distribution of $\mathbf{X}$ and its relationship with $\mathbf{A}$ change between training and testing. Specifically, $P_{\text{train}}(\mathbf{X}) \neq P_{\text{test}}(\mathbf{X})$ and $P_{\text{train}}(\mathbf{X | A}) \neq P_{\text{test}}(\mathbf{X | A})$ or $P_{\text{train}}(\mathbf{A | X}) \neq P_{\text{test}}(\mathbf{A | X})$. Additionally, the conditional distribution of $\mathbf{Y}$ given $\mathbf{X}$ changes, such that $P_{\text{train}}(\mathbf{Y | X}) \neq P_{\text{test}}(\mathbf{Y | X})$. Consequently, the joint conditional distribution of $\mathbf{Y}$ given both $\mathbf{X}$ and $\mathbf{A}$ also shifts: $P_{\text{train}}(\mathbf{Y | A, X}) \neq P_{\text{test}}(\mathbf{Y | A, X})$. However, the conditional distribution of $\mathbf{Y}$ given $\mathbf{A}$ alone remains unchanged: $P_{\text{train}}(\mathbf{Y | A}) = P_{\text{test}}(\mathbf{Y | A})$.
\textbf{Structural Shift ($\mathbf{A}$):}A structural shift occurs when the distribution of $\mathbf{A}$ and dependencies with $\mathbf{X}$ shifts (i.e., $P_{\text{train}}(\mathbf{A}) \neq P_{\text{test}}(\mathbf{A})$ and $P_{\text{train}}(\mathbf{X | A}) \neq P_{\text{test}}(\mathbf{X | A})$ or $P_{\text{train}}(\mathbf{A | X}) \neq P_{\text{test}}(\mathbf{A | X})$). Additionally, the conditional distribution of $\mathbf{Y}$ given $\mathbf{A}$ changes, such that $P_{\text{train}}(\mathbf{Y | A}) \neq P_{\text{test}}(\mathbf{Y | A})$. Consequently, the joint conditional distribution of $\mathbf{Y}$ given both $\mathbf{X}$ and $\mathbf{A}$ also shifts: $P_{\text{train}}(\mathbf{Y | A, X}) \neq P_{\text{test}}(\mathbf{Y | A, X})$. However, the conditional distribution of $\mathbf{Y}$ given $\mathbf{X}$ remains invariant: $P_{\text{train}}(\mathbf{Y | X}) = P_{\text{test}}(\mathbf{Y | X})$.
\textbf{Joint Shift ($\mathbf{X, A}$):}More realistically, distributional shifts typically occur jointly on $\mathbf{X, A}$, thus, we consider a joint distribution shift on both the feature and structure (i.e., $P_{\text{train}}(\mathbf{A}) \neq P_{\text{test}}(\mathbf{A})$, $P_{\text{train}}(\mathbf{X}) \neq P_{\text{test}}(\mathbf{X})$, and $P_{\text{train}}(\mathbf{X | A}) \neq P_{\text{test}}(\mathbf{X | A})$ or $P_{\text{train}}(\mathbf{A | X}) \neq P_{\text{test}}(\mathbf{A | X})$). The joint conditional distribution of $\mathbf{Y}$ given both $\mathbf{X}$ and $\mathbf{A}$ also shifts: $P_{\text{train}}(\mathbf{Y | A, X}) \neq P_{\text{test}}(\mathbf{Y | A, X})$.
\textbf{Semantic Shift ($\mathbf{Y}$):}Consider $\mathcal{Y}_{train}$ and $\mathcal{Y}_{test}$ as the label space of the train and test data respectively. OOD data with semantic shift is defined to consist of unknown labels $\mathbf{Y}$ that do not belong to any of the classes seen in the ID label space $\mathcal{Y}_{train}$, i.e., $\mathcal{Y}_{train} \subset \mathcal{Y}_{test}$ or $\mathcal{Y}_{train} \cap \mathcal{Y}_{test} = \text{\O}$. 

\section{Evaluation Metrics} \label{Appendix:Metrics}
To evaluate the performance of OOD detection, we followed common practices~\citep{GNNSafe, energy, GOOD-D, NODESAFE} and utilised three key metrics: 
\begin{enumerate}
    \item Area Under the Receiver Operating Characteristic curve (AUROC),
    \item Area Under the Precision-Recall curve (AUPR),
    \item False positive rate at 95\% true positive rate (FPR95).
\end{enumerate}
These metrics are independent of the threshold, avoiding the need to select $\tau$. AUROC captures the balance between the true positive rate (TPR) and false positive rate (FPR) across varying thresholds, offering an overall assessment of the model’s ability to differentiate between ID and OOD samples. However, when OOD instances are rare in highly imbalanced datasets, AUROC can yield overly optimistic results. In contrast, AUPR accounts for both precision and recall, making it more suited for such imbalanced scenarios. Meanwhile, FPR95 emphasises performance under high-sensitivity conditions by measuring the rate at which ID samples are misclassified as OOD when the true positive rate is fixed at 95\%. This metric highlights improvements in detection performance under stricter criteria, where a lower FPR95 indicates a more significant gain in the model's ability to differentiate between ID and OOD instances.

\section{Implementation Details} \label{Appendix:Implementation Details}
We follow~\citep{GNNSafe, NODESAFE} and use the publicly available benchmark datasets. The datasets were downloaded via Pytorch Geometric 2.0.3 and OGB 1.3.3 under the MIT license. Experiments were conducted using Python 3.9.19 and PyTorch 2.3.1 with Cuda 12.2 on a single NVIDIA RTX A6000 GPU with 48GB of memory. We follow~\citep{GNNSafe, NODESAFE} for the selection of baselines. We conducted experiments across three seeds for \textsc{GNNSafe} and \textsc{NODESafe}. Result inconsistencies with original reported statistics may stem from software environment, dataset differences, and unavailable hyperparameters. We reproduced results to the best of our ability, while results for other baselines were sourced from~\citep{GNNSafe, NODESAFE}. 

Our design choices for the auxiliary networks directly follow their intended roles in \textsc{Tide}'s decomposition framework: $\mathbf{V}$ should capture feature-only information, and $\mathbf{Q}$ should capture structure-only information. The architectures are chosen to enforce this disentanglement rather than model preference.
\begin{itemize}[leftmargin=*, nosep]
    \item \textbf{Feature network $\mathbf{V}$ = MLP (captures feature-only information)} \\
    The purpose of $\mathbf{V}$ is to isolate feature-specific components that do not rely on graph propagation. We feed only the raw node attributes  into an MLP to ensure that depends solely on features and does not inadvertently encode structural context. A GNN with identity adjacency could simulate ``no propagation," but this would introduce unnecessary complexity. A simple MLP is therefore the appropriate choice.
    \item \textbf{Structure network $\mathbf{Q}$ = GCN (captures structure-only information)} \\
    The purpose of $\mathbf{Q}$ is to encode predictive information arising purely from graph topology. To enforce this, we use a GCN with constant node features (all-ones input). This ensures that $\mathbf{Q}$ encodes only structural variation from the adjacency matrix. More expressive alternatives (learnable positional encodings, spectral embeddings, structure-only encoders) can be explored in future work.
\end{itemize}
CLUB~\cite{CLUB} is used for implementing the PMI regulariser. However, we also compare the effect with alternative MI estimator like InfoNCE~\cite{infonce} in Table~\ref{table:infonce_club}, where the performance remains consistent. This indicates that \textsc{Tide}'s performance is driven by the overall information decomposition design rather than any specific MI estimator, and that optimisation remains stable under reasonable perturbations of these approximations. The trade-off parameters $\beta$ for the individual IB losses are searched through the range $ \beta \in \{0.0001,0.001,0.01,0.1,1\}$, with a default value of $\beta = 0.001$. A constant $\alpha_i \in \{0.0001,0.001,0.01,0.1,1\}$ is applied uniformly to all three mutual information terms in Eq.~\ref{eq:PMI}. The loss coefficient $\lambda_\text{CInd}$ for $\mathcal{L}_{\text{VCInd}}$ was tuned based on the dataset with values taking ranges of $\lambda_\text{CInd} \in \{0.0001, \dots, 1\}$. Following~\citep{GNNSafe}, the number of energy propagation iterations $k $ is set to 2, with the controlling parameter $\alpha $ fixed at 0.5, the OOD-exposure loss coefficient $\lambda_\text{OE}$ for $\mathcal{L}_{\text{EReg}}$ is set to 1. For OOD exposure experiments, we tuned the margins $t_\text{ID}$ and $t_\text{OOD}$ from the proposed ranges by~\citep{GNNSafe, NODESAFE}, if not available, we tune with values from the range of $\{-9, \dots, 0\}$ for $t_\text{ID} < t_\text{OOD}$ for different datasets. The Adam optimiser is used for training~\citep{Adam}. For simplicity, constant $\beta$ and $\alpha$ values were applied uniformly: $\alpha$ to all mutual information terms in PMI loss, and $\beta$ to IB terms for the GNN classifier, structure, and feature networks. Results show performance is sensitive to these weights. A suitable $\beta$ is crucial for balancing representation robustness with prediction ability, while an appropriate $\alpha$ encourages a compact joint representation, mitigating spurious impacts from structure and features. Tables~\ref{table:lambda_cind}, ~\ref{table:alpha}, and~\ref{table:beta} presents the hyperparameter sensitivity analysis.

\begin{table}[h!]
\centering
\caption{Hyperparameter analysis for $\lambda_{\text{CInd}}$. Bold highlights the optimal parameter.}
\resizebox{1\linewidth}{!}{
\begin{tabular}{l|cccc|cccc}
\hline
\multirow{2}{*}{$\lambda_{\text{CInd}}$} & \multicolumn{4}{c|}{Cora - S} & \multicolumn{4}{c}{Citeseer - F} \\
\cline{2-9}
 & AUROC & AUPR & FPR & ID Acc & AUROC & AUPR & FPR & ID Acc \\
\hline
0 & 94.79 $\pm$ 0.22 & 88.41 $\pm$ 0.49 & 26.22 $\pm$ 2.55 & 78.73 & 93.25 $\pm$ 0.87 & 82.94 $\pm$ 2.15 & 26.96 $\pm$ 4.64 & 67.83 \\
0.0001 & 95.01 $\pm$ 0.24 & 88.93 $\pm$ 0.51 & 24.84 $\pm$ 0.70 & 78.90 & \textbf{93.96 $\pm$ 0.73} & \textbf{84.67 $\pm$ 2.35} & \textbf{23.13 $\pm$ 3.55} & \textbf{68.40} \\
0.001 & 94.97 $\pm$ 0.19 & 88.89 $\pm$ 0.44 & 25.55 $\pm$ 1.86 & 78.80 & 93.81 $\pm$ 0.81 & 84.51 $\pm$ 2.42 & 24.58 $\pm$ 3.31 & 68.37 \\
0.01 & 94.93 $\pm$ 0.32 & 88.80 $\pm$ 0.69 & 25.39 $\pm$ 0.70 & 78.90 & 93.79 $\pm$ 0.83 & 84.53 $\pm$ 2.44 & 24.75 $\pm$ 4.58 & 68.33 \\
0.1 & 94.92 $\pm$ 0.26 & 88.77 $\pm$ 0.60 & 25.65 $\pm$ 1.80 & 78.90 & 93.74 $\pm$ 0.78 & 84.25 $\pm$ 2.67 & 24.99 $\pm$ 4.20 & 68.37 \\
1 & \textbf{95.15 $\pm$ 0.37} & \textbf{89.33 $\pm$ 0.57} & \textbf{23.31 $\pm$ 1.04} & \textbf{78.97} & 93.90 $\pm$ 0.75 & 84.62 $\pm$ 2.32 & 24.05 $\pm$ 3.94 & 68.30 \\
\hline
\end{tabular}
}
\label{table:lambda_cind}
\end{table}

\begin{table}[h!]
\centering
\caption{Hyperparameter analysis for $\alpha$ -- weight coefficients applied uniformly to all mutual information terms in PMI. Bold highlights the optimal parameter.}
\resizebox{1\linewidth}{!}{
\begin{tabular}{l|cccc|cccc}
\hline
\multirow{2}{*}{$\alpha$} & \multicolumn{4}{c|}{Cora - F} & \multicolumn{4}{c}{PubMed - S} \\
\cline{2-9}
 & AUROC & AUPR & FPR & ID Acc & AUROC & AUPR & FPR & ID Acc \\
\hline
0 & 97.72 $\pm$ 0.10 & 94.26 $\pm$ 0.52 & 9.40 $\pm$ 1.06 & 77.47 & 96.45 $\pm$ 0.64 & 76.46 $\pm$ 2.31 & 15.66 $\pm$ 4.88 & 74.77 \\
0.0001 & 97.68 $\pm$ 0.13 & 89.16 $\pm$ 9.49 & 8.80 $\pm$ 0.79 & 77.77 & 96.55 $\pm$ 0.80 & 77.26 $\pm$ 2.72 & 16.21 $\pm$ 5.18 & 74.47 \\
0.001 & 97.66 $\pm$ 0.10 & 94.51 $\pm$ 0.40 & 8.63 $\pm$ 0.94 & 78.00 & 96.54 $\pm$ 0.80 & 77.21 $\pm$ 2.71 & 16.15 $\pm$ 4.98 & 74.47 \\
0.01 & \textbf{97.88 $\pm$ 0.24} & \textbf{94.67 $\pm$ 0.56} & \textbf{8.13 $\pm$ 1.32} & \textbf{78.10} & \textbf{97.25 $\pm$ 0.58} & \textbf{79.76 $\pm$ 2.17} & \textbf{12.97 $\pm$ 3.87} & \textbf{75.93} \\
0.1 & 78.49 $\pm$ 7.47 & 70.99 $\pm$ 8.49 & 91.89 $\pm$ 6.77 & 76.30 & 88.13 $\pm$ 6.57 & 66.19 $\pm$ 11.57 & 69.39 $\pm$ 30.40 & 75.70 \\
1 & 47.82 $\pm$ 4.92 & 29.19 $\pm$ 3.29 & 97.89 $\pm$ 1.31 & 24.37 & 37.96 $\pm$ 49.44 & 31.37 $\pm$ 49.70 & 78.33 $\pm$ 33.53 & 40.77 \\
\hline
\end{tabular}
}
\label{table:alpha}
\end{table}

\begin{table}[h!]
\centering
\caption{Hyperparameter analysis for $\beta$ -- prediction-to-compression trade-off weight applied uniformly to the IB terms for the GNN classifier, structure, and feature networks. Bold highlights the optimal parameter.}
\resizebox{1\linewidth}{!}{
\begin{tabular}{l|cccc|cccc}
\hline
\multirow{2}{*}{$\beta$} & \multicolumn{4}{c|}{Amazon - L} & \multicolumn{4}{c}{Twitch} \\
\cline{2-9}
 & AUROC & AUPR & FPR & ID Acc & AUROC & AUPR & FPR & ID Acc \\
\hline
0 & 96.92 $\pm$ 0.33 & 96.38 $\pm$ 0.70 & 9.29 $\pm$ 1.15 & 96.10 & 66.33 $\pm$ 15.32 & 72.59 $\pm$ 13.44 & 81.18 $\pm$ 19.87 & 70.75 \\
0.0001 & 96.85 $\pm$ 0.13 & 95.82 $\pm$ 0.11 & 8.94 $\pm$ 1.45 & 95.89 & \textbf{89.72 $\pm$ 5.42} & \textbf{91.92 $\pm$ 3.85} & \textbf{46.60 $\pm$ 26.47} & \textbf{68.63} \\
0.001 & 96.83 $\pm$ 0.39 & 95.78 $\pm$ 0.42 & 8.36 $\pm$ 3.83 & 96.04 & 89.39 $\pm$ 5.57 & 91.58 $\pm$ 3.82 & 47.75 $\pm$ 27.79 & 68.63 \\
0.01 & \textbf{96.86 $\pm$ 0.35} & \textbf{95.79 $\pm$ 0.39} & \textbf{8.13 $\pm$ 3.55} & \textbf{96.07} & 89.04 $\pm$ 6.61 & 91.61 $\pm$ 4.75 & 48.89 $\pm$ 33.03 & 68.61 \\
0.1 & 96.87 $\pm$ 0.31 & 95.80 $\pm$ 0.38 & 8.39 $\pm$ 3.17 & 94.87 & 82.14 $\pm$ 13.18 & 85.87 $\pm$ 9.48 & 51.07 $\pm$ 33.40 & 67.52 \\
1 & 93.40 $\pm$ 1.77 & 90.85 $\pm$ 1.20 & 34.01 $\pm$ 30.72 & 82.32 & 57.70 $\pm$ 24.03 & 68.52 $\pm$ 14.31 & 68.30 $\pm$ 25.88 & 59.31 \\
\hline
\end{tabular}
}
\label{table:beta}
\end{table}

\begin{table*}[h!]
\centering
\small
\caption{Comparison of different mutual information estimators. Best is highlighted in \textbf{bold}.}
\setlength{\tabcolsep}{5pt}
\renewcommand{\arraystretch}{1.15}
\begin{tabular}{llcccc}
\toprule
\textbf{Dataset} & \textbf{Method} & AUROC & AUPR & FPR & ACC \\
\midrule
\multirow{2}{*}{\texttt{Pubmed-S}}
& w/ CLUB    & 97.25 & \textbf{79.76} & 12.97 & 75.93 \\
& w/ InfoNCE & \textbf{97.33} & 77.61 & \textbf{11.69} & 74.30 \\
\midrule
\multirow{2}{*}{\texttt{Cora-S}}
& w/ CLUB    & \textbf{95.15} & \textbf{89.33} & \textbf{23.31} & 78.97 \\
& w/ InfoNCE & 94.95 & 88.86 & 24.78 & 78.20 \\
\midrule
\multirow{2}{*}{\texttt{Cora-F}}
& w/ CLUB    & 97.88 & \textbf{94.67} & 8.13  & 78.10 \\
& w/ InfoNCE & \textbf{97.94} & 94.38 & \textbf{6.94} & 78.10 \\
\midrule
\multirow{2}{*}{\texttt{Citeseer-L}}
& w/ CLUB    & \textbf{90.97} & \textbf{64.92} & \textbf{30.84} & 88.55 \\
& w/ InfoNCE & 90.69 & 63.28 & 31.34 & 87.54 \\
\midrule
\multirow{2}{*}{\texttt{Coauthor-S}}
& w/ CLUB    & \textbf{99.95} & \textbf{99.94} & \textbf{0.13} & 92.72 \\
& w/ InfoNCE & 99.94 & 99.92 & \textbf{0.13} & 92.66 \\
\bottomrule
\end{tabular}
\label{table:infonce_club}
\end{table*}

\section{Dataset Details} \label{Appendix:Dataset Details}
Following the protocol outlined in~\citep{GNNSafe}, we use publicly available graph benchmark datasets, sourced from the PyTorch Geometric (PyG) and the Open Graph Benchmark (OGB) \footnote{\url{https://github.com/snap-stanford/ogb?tab=readme-ov-file}} package\footnote{\url{https://pytorch-geometric.readthedocs.io/en/latest/modules/datasets.html}}~\citep{Cora}. We adhere to the provided splits and dataset generation process described in \cite{GNNSafe}.  
\textbf{\texttt{Cora}} This dataset represents a citation network where nodes correspond to academic papers, and edges denote citation relationships~\citep{Cora}. Each paper is classified into one of seven categories. \texttt{Cora} lacks explicit domain-based partitions for OOD evaluation, thus, we follow~\citep{GNNSafe} and generate OOD data synthetically as described in Section~\ref{sec:datasets} (i.e., Structure shift, Feature shift, and label-leave-out).
\begin{table} [h!]
    \centering
    \caption{\texttt{Cora} dataset statistics}
    \resizebox{0.8\linewidth}{!}{\begin{tabular}{c|c|c|c|c|c|c}
        \toprule
        \toprule
        & Structure (ID) & Structure (OOD) & Feature (ID) & Feature (OOD) & Label (ID) & Label (OOD) \\
        \midrule
        \midrule
        Nodes       & 2708  & 2708  & 2708  & 2708  & 904   & 986   \\
        Edges       & 10556 & 6696  & 10556 & 10556 & 10556 & 10556 \\
        Feature Dim & 1433  & 1433  & 1433  & 1433  & 1433  & 1433  \\
        Classes     & 7     & 7     & 7     & 7     & 3     & 3     \\
        \bottomrule
        \bottomrule
    \end{tabular}}
    \label{appendix:cora_table}
\end{table}

\textbf{\texttt{Citeseer}} This dataset is another citation network~\citep{Citeseer, Cora}, where nodes represent scientific papers classified into one of six classes, and edges denote citation relationships. It contains slightly more papers with fewer edges than \texttt{Cora}, however, the feature dimension is larger. We follow \texttt{Cora} and generate three OOD data synthetically as described in Section~\ref{sec:datasets} .

\begin{table} [h!]
    \centering
    \caption{\texttt{Citeseer} dataset statistics}
    \resizebox{0.8\linewidth}{!}{\begin{tabular}{c|c|c|c|c|c|c}
        \toprule
        \toprule
        & Structure (ID) & Structure (OOD) & Feature (ID) & Feature (OOD) & Label (ID) & Label (OOD) \\
        \midrule
        \midrule
        Nodes       & 3327  & 3327  & 3327  & 3327  & 1104  & 1522  \\
        Edges       & 9104 & 5932 & 9104 & 9104 & 9104 & 9104 \\
        Feature Dim & 3703   & 3703   & 3703   & 3703   & 3703   & 3703   \\
        Classes     & 6     & 6     & 6     & 6     & 2     & 3   \\
        \bottomrule
        \bottomrule
    \end{tabular}}
    \label{appendix:citeseer_table}
\end{table}

\textbf{\texttt{Pubmed}} This dataset is a biomedical paper citation network~\citep{Cora}, with each paper classified into one of three classes. The nodes represent academic papers, while edges are citation relationships, we follow~\citep{NODESAFE} and generate OOD data synthetically as described in Section~\ref{sec:datasets}. Due to the \texttt{Pubmed} dataset having only three classes, the \textsc{NODESafe} approach designates two classes as OOD labels (i.e., OOD training and OOD testing), leaving just one class in the training set. This setup may pose challenges, as training with only one class can lead to significant imbalance and limitations in model evaluation. As a result, we exclude the label shift scenario for \texttt{Pubmed} from our analysis.
 
\begin{table} [h!]
    \centering
    \caption{\texttt{Pubmed} dataset statistics}
    \resizebox{0.6\linewidth}{!}{\begin{tabular}{c|c|c|c|c}
        \toprule
        \toprule
        & Structure (ID) & Structure (OOD) & Feature (ID) & Feature (OOD) \\
        \midrule
        \midrule
        Nodes       & 19717  & 19717  & 19717  & 19717  \\
        Edges       & 88648 & 74188 & 88648 & 88648 \\
        Feature Dim & 500   & 500   & 500   & 500  \\
        Classes     & 3     & 3    & 3     & 3   \\
        \bottomrule
        \bottomrule
    \end{tabular}}
    \label{appendix:Pubmed_table}
\end{table}

\textbf{\texttt{Amazon-Photo}} This dataset models an item co-purchasing network, where nodes represent products, and edges indicate frequently co-purchased items~\citep{AmazonPhoto}. Node features capture product descriptions, and labels correspond to product categories. Similar to \texttt{Cora}, we generate OOD data synthetically.
\begin{table} [h!]
    \centering
    \caption{\texttt{Amazon-Photo} dataset statistics}
    \resizebox{0.8\linewidth}{!}{\begin{tabular}{c|c|c|c|c|c|c}
        \toprule
        \toprule
        & Structure (ID) & Structure (OOD) & Feature (ID) & Feature (OOD) & Label (ID) & Label (OOD) \\
        \midrule
        \midrule
        Nodes       & 7650  & 7650  & 7650  & 7650  & 3095  & 3673  \\
        Edges       & 238162 & 149168 & 238162 & 238162 & 238162 & 238162 \\
        Feature Dim & 745   & 745   & 745   & 745   & 745   & 745   \\
        Classes     & 8     & 8     & 8     & 8     & 3     & 4     \\
        \bottomrule
        \bottomrule
    \end{tabular}}
    \label{appendix:amazon_table}
\end{table}
\textbf{\texttt{Coauthor-CS}} This dataset represents a collaboration network, where nodes correspond to authors, and edges indicate co-authorships in computer science research~\citep{Coauthor-CS}. The task involves classifying authors into their respective fields based on publication keywords. OOD graphs are generated following the synthetic protocol used for other datasets.
\begin{table} [H]
    \centering
    \caption{\texttt{Coauthor-CS} dataset statistics}
    \resizebox{0.8\linewidth}{!}{\begin{tabular}{c|c|c|c|c|c|c}
        \toprule
        \toprule
        & Structure (ID) & Structure (OOD) & Feature (ID) & Feature (OOD) & Label (ID) & Label (OOD) \\
        \midrule
        \midrule
        Nodes       & 18333 & 18333 & 18333 & 18333 & 13290 & 3649  \\
        Edges       & 163788 & 92802 & 163788 & 163788 & 163788 & 163788 \\
        Feature Dim & 6805  & 6805  & 6805  & 6805  & 6805  & 6805  \\
        Classes     & 15    & 15    & 15    & 15    & 10    & 4     \\
        \bottomrule
        \bottomrule
    \end{tabular}}
    \label{appendix:coauthor_table}
\end{table}
\textbf{\texttt{TwitchGamers - Explicit}} This dataset comprises multiple social network subgraphs from different geographic regions~\citep{Twitch_Gamers}. Nodes represent Twitch gamers, and edges indicate follower relationships. Node features include game-based embeddings, and the classification task focuses on predicting whether a user streams mature content. We use the DE subgraph as ID data and the ES, FR, and RU subgraphs as OOD test data.
\begin{table} [h!]
    \centering
    \caption{\texttt{TwitchGamers - Explicit} dataset statistics}
    \resizebox{0.5\linewidth}{!}{\begin{tabular}{c|c|c|c|c}
        \toprule
         \toprule
        & DE (ID) & ES (OOD) & FR (OOD) & RU (OOD) \\
        \midrule
        \midrule
        Nodes       & 9498  & 4648  & 6551  & 4385  \\
        Edges       & 315774 & 123412 & 231883 & 78993  \\
        Feature Dim & 128   & 128   & 128   & 128   \\
        Classes     & 2     & 2     & 2     & 2     \\
        \bottomrule
        \bottomrule
    \end{tabular}}
    \label{appendix:twitch_table}
\end{table}
\textbf{\texttt{OGBN-Arxiv}} This dataset is a large-scale citation network spanning research papers from 1960 to 2020~\citep{Arxiv}. Nodes represent papers, categorised by subject area, and edges signify citation links. Node features are derived from word embeddings of paper titles and abstracts. Following~\citep{GNNSafe}, we partition the dataset using publication timestamps-papers published before 2015 are used as ID data, while papers published after 2017 serve as OOD data.
\begin{table} [h!]
    \centering
    \caption{\texttt{OGBN-Arxiv} dataset statistics}
    \resizebox{0.5\linewidth}{!}{\begin{tabular}{c|c|c|c|c}
        \toprule
        \toprule
        & 2015 (ID) & 2018 (OOD) & 2019 (OOD) & 2020 (OOD) \\
        \midrule
         \midrule
        Nodes       & 53160  & 29799  & 39711  & 8892   \\
        Edges       & 152226  & 622466  & 1061197 & 1166243 \\
        Feature Dim & 128  & 128  & 128  & 128  \\
        Classes     & 40    & 40    & 40    & 40    \\
        \bottomrule
        \bottomrule
    \end{tabular}}
    \label{appendix:arxiv_table}
\end{table}

\newpage
\section{Extended Experimental Results} \label{Appendix:Extended Results}
In this section, we provide the extended results from the main paper. Table~\ref{Appendix:overall_performance}. Additionally, we provide the OOD detection performance for each subset of the OOD datasets in Tables \ref{Appendix:cora_full} to \ref{Appendix:arxiv_full}, complementing Tables~\ref{Table:non_ood_exposure} and~\ref{Table:real_ood_exposure} in the main text. The scores reported in the subset tables are averaged across three runs, with variance reflecting performance deviation across the three seeds. Furthermore, we report an extended version of the ablation study in Tables~\ref{Appendix:cora_ablation_full} to~\ref{Appendix:arxiv_ablation_full}, supplementing Table~\ref{Table:Ablation} in the main text.

\begin{table*}[h!]
    \centering
    \caption{Overall Model performance comparison: out-of-distribution detection is measured by \textbf{AUROC} ($\uparrow$) $/$ \textbf{AUPR} ($\uparrow$) $/$ \textbf{FPR95} ($\downarrow$) ($\%$) and in-distribution classification results are measured by accuracy \textbf{(ID ACC)} ($\uparrow$). OOD detection performance was prioritised, with the detection results of our \textsc{Tide} against Non- (Real-) OOD Exposure methods. }
    \resizebox{1\linewidth}{!}{
    \begin{tabular}{c|c|cccccccc|cccc|cc}
    \specialrule{.1em}{.05em}{.05em} 
    \specialrule{.1em}{.05em}{.05em} 
    \multirow{2}{*}{} & \multirow{2}{*}{\textbf{Metrics}} & \multicolumn{8}{c|}{\textbf{Non-OOD Exposure}} & \multicolumn{4}{c|}{\textbf{Real OOD Exposure}} & \multicolumn{2}{c}{\textsc{Tide}}\\
    & & \textbf{MSP} & \textbf{ODIN} & \textbf{Maha} & \textbf{Energy} & \textbf{GKDE} & \textbf{GPN} & \textbf{\textsc{GNNSafe}} & \textbf{\textsc{NODESAFE}} & \textbf{OE} & \textbf{Energy FT} & \textbf{\textsc{GNNSafe++}} & \textbf{\textsc{NODESAFE++}} & \textbf{w/o OE} & \textbf{w/ OE}\\
    \midrule
    \midrule
    \multirow{4}{*}{\rotatebox{90}{{\fontfamily{qcr}\selectfont Cora}}} 
        & AUROC & 82.55 & 49.87 & 54.74 & 83.09 & 69.54 & 84.56 & 91.20 $\pm$ 3.09 & 93.35 $\pm$ 2.41 & 79.76 & 85.13 & 93.13 $\pm$ 2.62 & 91.32 $\pm$ 1.55 & 95.58 $\pm$ 2.12 & 95.76 $\pm$ 2.29 \\
        & AUPR  & 65.82 & 26.08 & 34.43 & 66.21 & 46.09 & 68.02 & 82.92 $\pm$ 4.96 & 84.64 $\pm$ 6.40 & 64.93 & 67.89 & 85.28 $\pm$ 4.55 & 82.74 $\pm$ 4.62 & 89.34 $\pm$ 5.32 & 89.68 $\pm$ 5.95 \\
        & FPR95   & 62.39 & 100.00 & 96.30 & 65.21 & 80.51 & 58.30 & 50.53 $\pm$ 22.04 & 29.23 $\pm$ 12.06 & 75.22 & 51.03 & 37.28 $\pm$ 16.97 & 42.48 $\pm$ 5.82 & 20.09 $\pm$ 10.72 & 19.34 $\pm$  11.00 \\
        & ID ACC & 79.91 & 79.61 & 79.57 & 80.34 & 79.86 & 81.65 & 81.56 $\pm$ 7.01 & 82.66 $\pm$ 6.61 & 77.69 & 80.44 & 82.16 $\pm$ 7.78 & 74.94 $\pm$ 13.86 & 82.56 $\pm$ 6.99 & 82.2 $\pm$ 6.79 \\
    \midrule
    \multirow{4}{*}{\rotatebox{90}{{\fontfamily{qcr}\selectfont Citeseer}}}    
        & AUROC & 77.69 & 50.14 & 49.55 & 78.26 & 72.95 & 78.22 & 84.89 $\pm$ 5.47 & 87.80 $\pm$ 2.74 & 63.75 & 79.81 & 85.69 $\pm$ 5.16 & 84.29 $\pm$ 7.06 & 92.27 $\pm$ 1.53 & 92.16 $\pm$ 1.53 \\
        & AUPR  & 51.43 & 21.38 & 29.29 & 51.22 & 47.67 & 52.22 & 64.86 $\pm$ 3.39 & 72.35 $\pm$ 6.41 & 47.20 & 52.79 & 65.89 $\pm$ 2.50 & 64.86 $\pm$ 4.66 & 76.57 $\pm$ 10.34 & 75.90 $\pm$ 10.27 \\
        & FPR95 & 69.42 & 100 & 95.06 & 65.34 & 71.85 & 67.09 & 60.85 $\pm$ 18.06 & 53.38 $\pm$ 17.47 &  74.15 &  57.37 &  54.76 $\pm$ 23.14 & 56.12 $\pm$ 22.56 & 30.79 $\pm$ 7.64 & 29.22 $\pm$ 7.10 \\
        & ID ACC & 73.72 & 73.75 & 62.17 & 73.43 & 72.69 & 72.77 & 76.06 $\pm$ 12.05 & 73.12 $\pm$ 14.15 & 62.24 & 72.66 & 72.74 $\pm$ 13.43 & 68.60 $\pm$ 18.51 & 75.66 $\pm$ 11.19 & 75.81 $\pm$ 10.95 \\
    \midrule
    \multirow{4}{*}{\rotatebox{90}{{\fontfamily{qcr}\selectfont Pubmed}}}    
        & AUROC & 78.80 & 49.72 & 62.20 & 79.25 & 78.14 & 78.76 & 93.82 $\pm$ 1.93 & 91.08 $\pm$ 2.95 & 78.38 & 76.25 & 93.77 $\pm$ 1.31 & 95.14 $\pm$ 0.26 & 97.35 $\pm$ 0.13 & 98.26 $\pm$ 0.29 \\
        & AUPR  & 28.37 & 4.83 & 11.74 & 28.21 & 24.65 & 28.65 & 69.94 $\pm$ 6.32 & 68.56 $\pm$ 1.06 & 27.67 & 27.61 &  74.06 $\pm$ 5.16 & 72.20 $\pm$  5.85 & 80.73 $\pm$ 1.37 & 85.36 $\pm$ 0.21 \\
        & FPR95 & 76.73 & 100 & 91.26 & 70.69 & 75.04 & 71.06 & 37.71 $\pm$ 14.28 & 50.17 $\pm$ 0.89 &  79.05 &  91.02 & 39.58 $\pm$ 11.44 & 24.87 $\pm$ 6.44 & 12.75 $\pm$ 0.31 & 8.84 $\pm$ 1.81 \\
        & ID ACC & 75.05 &75.30 & 71.15 & 75.55 & 74.65 & 75.15 & 76.78 $\pm$ 0.31 & 77.32 $\pm$ 0.02 & 73.00 & 75.80 & 77.88 $\pm$ 0.35 & 74.17 $\pm$ 0.05 & 76.17 $\pm$ 0.33 & 76.67 $\pm$ 0.19 \\
    \midrule
    \multirow{4}{*}{\rotatebox{90}
    {{\fontfamily{qcr}\selectfont Amazon}}}    
        & AUROC & 96.52 & 80.12 & 73.81 & 96.73 & 66.98 & 92.60 & 97.99 $\pm$ 0.93  & 97.84 $\pm$ 1.11 & 97.79 & 98.04 & --- & --- & 98.16 $\pm$ 1.14  & --- \\
        & AUPR  & 95.01 & 77.18 & 72.35 & 95.16 & 71.18 & 90.50 & 98.16 $\pm$ 1.56 & 97.77 $\pm$ 2.11 & 97.26 & 96.96 & --- & --- & 98.08 $\pm$ 1.99  & --- \\
        & FPR95   & 13.83 & 85.22 & 83.44 & 13.15 & 98.47 & 32.64 & 3.24 $\pm$ 5.24 & 3.69 $\pm$ 6.14 &  7.52 &  5.98 & --- & --- & 2.81 $\pm$ 4.61 & --- \\
        & ID ACC & 93.83 & 93.88 & 93.80 & 93.85 & 87.71 & 89.54 & 93.79 $\pm$ 1.99 & 92.70 $\pm$ 2.16 & 93.54 & 93.38 & --- & --- & 93.90 $\pm$ 1.88 & --- \\
    \midrule
    \multirow{4}{*}[0.22em]{\rotatebox{90}{{\fontfamily{qcr}\selectfont Coauthor}}} 
        & AUROC & 95.74 & 51.71 & 82.02 & 96.64 & 69.24 & 69.89 & 98.98 $\pm$ 1.41 & 99.02 $\pm$ 1.45 & 97.65 & 98.17 & --- & --- & 99.27 $\pm$ 1.17 & --- \\
        & AUPR  & 96.43 & 56.37 & 87.05 & 97.09 & 80.17 & 72.77 & 99.55 $\pm$ 0.44 & 99.57 $\pm$ 0.47 & 98.04 & 98.51 & --- & --- & 99.70 $\pm$ 0.40 & --- \\
        & FPR95   & 21.37 & 99.97 & 48.09 & 15.49 & 97.04 & 69.60 & 4.29 $\pm$ 6.87 & 4.33 $\pm$ 7.04 & 10.61 & 7.76 & --- & --- & 3.31 $\pm$ 5.45 & --- \\
        & ID ACC & 93.37 & 93.29 & 93.29 & 93.57 & 87.74 & 89.39 & 93.65 $\pm$ 1.50 & 93.21 $\pm$ 1.86 & 93.41 & 93.44 & --- & --- & 93.48 $\pm$ 1.74 & --- \\
    \midrule
    \multirow{4}{*}{\rotatebox{90}{{\fontfamily{qcr}\selectfont Twitch}}} 
        & AUROC & 33.59 & 58.16 & 55.68 & 51.24 & 46.48 & 51.73 & 66.33 $\pm$ 15.32 & 66.48 $\pm$ 15.44 & 55.72 & 84.50 & 95.76 $\pm$ 2.63 & 76.79 $\pm$ 34.23  & 89.72 $\pm$ 5.42 & 91.52 $\pm$ 6.53\\
        & AUPR  & 49.14 & 72.12 & 66.42 & 60.81 & 62.11 & 66.36 & 72.59 $\pm$ 13.44 & 72.71 $\pm$ 13.43 & 70.18 & 88.04 & 97.45 $\pm$ 1.69 & 83.97 $\pm$ 24.11 & 91.92 $\pm$ 3.85 & 93.59 $\pm$ 4.36 \\
        & FPR95   & 97.45 & 93.96 & 90.13 & 91.61 & 95.62 & 95.51 & 81.18 $\pm$ 19.87  & 80.62 $\pm$ 20.03 & 95.07 & 61.29 & 29.81 $\pm$ 23.16 & 34.46 $\pm$ 29.93  & 46.60 $\pm$ 26.47 & 33.19 $\pm$ 29.86 \\
        & ID ACC & 68.72 & 70.79 & 70.51 & 70.40 & 67.44 & 68.09 & 70.75 & 70.75 & 70.73 & 70.52 & 70.36 & 70.07 & 68.63 & 68.74 \\
    \midrule
    \multirow{4}{*}{\rotatebox{90}{{\fontfamily{qcr}\selectfont Arxiv}}}    
        & AUROC & 63.91 & 55.07 & 56.92 & 64.20 & 58.32 & OOM  & 70.58 $\pm$ 6.41 & 71.36 $\pm$ 6.35 & 69.80 & 71.56 & 73.98 $\pm$ 6.28 & 74.50 $\pm$ 6.42 & 72.72 $\pm$ 6.48 & 74.89 $\pm$ 6.26 \\
        & AUPR  & 75.85 & 68.85 & 69.63 & 75.78 & 72.62 & OOM & 79.99 $\pm$ 12.80 & 80.67 $\pm$ 12.37 & 80.15 & 80.47 & 82.50 $\pm$ 11.31 & 81.55 $\pm$ 9.37 & 81.62 $\pm$ 11.86  & 83.25 $\pm$ 10.80 \\
        & FPR95   & 90.59 & 100.0 & 94.24 & 90.80 & 93.84 & OOM & 87.90 $\pm$ 2.57 & 86.45 $\pm$ 2.97 & 85.16 & 80.59  & 79.99 $\pm$ 4.62 & 77.53 $\pm$ 5.10 & 83.65 $\pm$ 4.14 & 77.37 $\pm$ 5.49 \\
        & ID ACC & 53.78 & 51.39 & 51.59 & 53.36 & 50.76 & OOM & 53.21 & 53.10 & 52.39 & 53.26 & 53.28 & 51.32 & 52.44 & 52.41 \\
    \specialrule{.1em}{.05em}{.05em} 
    \specialrule{.1em}{.05em}{.05em} 
     \end{tabular}}
    \label{Appendix:overall_performance}
\end{table*}

\begin{table}[H]
    \centering
    \caption{\textbf{\texttt{Cora:}} Extended OOD detection performance with three types of OOD (\textbf{S}tructure manipulation, \textbf{F}eature interpolation, and \textbf{L}abel-leave-out). \textsc{GS++} is short for \textsc{GNNSafe++} and \textsc{NS++} is short for \textsc{NODESafe++}. OE indicates OOD exposure.}
    \resizebox{1\linewidth}{!}{
    \begin{tabular}{c|c|cccccccc|cccc|cc}
    \specialrule{.1em}{.05em}{.05em} 
    \specialrule{.1em}{.05em}{.05em} 
    \multirow{2}{*}{\textbf{Dataset}} & \multirow{2}{*}{\textbf{Metrics}} & \multicolumn{8}{c|}{\textbf{Non-OOD Exposure}} & \multicolumn{4}{c|}{\textbf{Real OOD Exposure}} & \multicolumn{2}{c}{\textsc{Tide}}\\
    & & \textbf{MSP} & \textbf{ODIN} & \textbf{Mahalanobis} & \textbf{Energy} & \textbf{GKDE} & \textbf{GPN} & \textbf{\textsc{GNNSafe}} & \textbf{\textsc{NODESafe}} & \textbf{OE} & \textbf{Energy FT} & \textbf{\textsc{GS++}} & \textbf{\textsc{NS++}} & \textbf{w/o OE} & \textbf{w/ OE}\\
    \midrule
    \midrule
\multirow{4}{*}{{\fontfamily{qcr}\selectfont Cora-S}} & AUROC & 70.90 & 49.92 & 46.68 & 71.73 & 68.61 & 77.47 & 87.64 $\pm$ 0.39 & 92.27 $\pm$ 0.74 & 67.98 & 75.88 & 90.19 $\pm$ 0.46 & 89.57 $\pm$ 8.12  & 95.15 $\pm$ 0.37 & 95.31  $\pm$ 0.37\\
                            & AUPR  & 45.73 & 27.01 & 29.03 & 46.08 & 44.26 & 53.26 & 77.93 $\pm$ 0.59 & 82.32 $\pm$ 0.61  & 46.93 & 49.18 & 81.37 $\pm$ 1.02 & 80.86 $\pm$ 9.20 & 89.33 $\pm$ 0.57 & 89.44 $\pm$  0.41 \\
                            & FPR95   & 87.30 & 100.00 & 98.19 & 88.74 & 84.34 & 76.22 & 73.49 $\pm$ 1.91 & 38.06 $\pm$ 4.49 & 95.31 & 67.73 & 56.81 $\pm$ 0.81 & 45.11 $\pm$ 32.56 & 23.31 $\pm$ 1.04 & 22.62 $\pm$ 3.11 \\
                            & ID ACC & 75.50 & 74.90 & 74.90 & 76.00 & 73.70 & 76.50 & 77.53 $\pm$ 0.38 & 78.97 $\pm$ 0.32 & 71.80 & 75.50 & 77.67 $\pm$ 0.42 & 67.67 $\pm$ 12.64 & 78.97 $\pm$ 1.22 & 77.90 $\pm$ 0.46\\
\midrule
\multirow{4}{*}{{\fontfamily{qcr}\selectfont Cora-F}} & AUROC & 85.39 & 49.88 & 49.93 & 86.15 & 82.79 & 85.88 & 92.76 $\pm$ 0.43 & 96.11 $\pm$ 0.14 & 81.83 & 88.15 & 95.22 $\pm$ 0.58 & 92.52 $\pm$ 4.91 & 97.88 $\pm$ 0.24 & 98.23 $\pm$  0.14 \\
                            & AUPR  & 73.70 & 26.96 & 31.95 & 74.42 & 66.52 & 73.79 & 87.85 $\pm$ 1.07 & 91.87 $\pm$ 0.75 & 70.84 & 75.99 & 90.27 $\pm$ 1.26 & 88.00 $\pm$ 4.94 & 94.67 $\pm$ 0.56 & 95.74 $\pm$  0.48\\
                            & FPR95   & 64.88 & 100.00 & 99.93 & 65.81 & 68.24 & 56.17 & 48.56 $\pm$ 4.52 & 15.50 $\pm$ 1.48 & 83.79 & 47.53 & 29.01 $\pm$ 3.95 & 46.51 $\pm$ 37.56 & 8.13 $\pm$ 1.32 & 7.08 $\pm$  0.85\\
                            & ID ACC & 75.30 & 75.00 & 74.90 & 76.10 & 74.80 & 77.00 & 77.50 $\pm$ 0.26 & 78.73 $\pm$ 0.40 & 73.30 & 75.30 & 77.67 $\pm$ 0.51 & 66.23 $\pm$ 8.94 & 78.10 $\pm$ 0.89 & 78.67 $\pm$ 0.70\\
\midrule
\multirow{4}{*}{{\fontfamily{qcr}\selectfont Cora-L}} & AUROC & 91.36 & 49.80 & 67.62 & 91.40 & 57.23 & 90.34 & 93.19 $\pm$ 0.11 & 91.68 $\pm$ 1.09 & 89.47 & 91.36 & 93.99 $\pm$ 0.79 & 91.86 $\pm$ 0.54 & 93.70 $\pm$ 0.04 & 93.73 $\pm$  0.11\\
                            & AUPR  & 78.03 & 24.27 & 42.31 & 78.14 & 27.50 & 77.40 & 82.99 $\pm$ 0.45 & 79.73 $\pm$ 2.31 & 77.01 & 78.49 & 84.19 $\pm$ 2.05 & 79.37 $\pm$ 0.16 & 84.03 $\pm$ 0.08 & 83.86 $\pm$  0.23\\
                            & FPR95   & 34.99 & 100.00 & 90.77 & 41.08 & 88.95 & 37.42 & 29.55 $\pm$ 2.24 & 34.14 $\pm$ 6.75 & 46.55 & 37.83 & 26.03 $\pm$ 3.76 & 35.80 $\pm$ 8.21 & 28.84 $\pm$ 0.36 & 28.23 $\pm$ 0.26 \\
                            & ID ACC & 88.92 & 88.92 & 88.92 & 88.92 & 89.87 & 91.46 & 89.66 $\pm$ 0.66 & 90.29 $\pm$ 0.48 & 87.97 & 90.51 & 91.14 $\pm$ 0.32 & 90.93 $\pm$ 0.80 & 90.61 $\pm$ 0.36 & 90.03 $\pm$ 0.41\\
\specialrule{.1em}{.05em}{.05em} 
\specialrule{.1em}{.05em}{.05em} 
    \end{tabular}} \label{Appendix:cora_full}
\end{table}

\begin{table}[H] 
    \centering
    \caption{\textbf{\texttt{Citeseer}} Extended OOD detection performance with three types of OOD (\textbf{S}tructure manipulation, \textbf{F}eature interpolation, and \textbf{L}abel-leave-out). \textsc{GS++} is short for \textsc{GNNSafe++} and \textsc{NS++} is short for \textsc{NODESafe++}. OE indicates OOD exposure.}
    \resizebox{1\linewidth}{!}{
    \begin{tabular}{c|c|cccccccc|cccc|cc}
    \specialrule{.1em}{.05em}{.05em} 
    \specialrule{.1em}{.05em}{.05em} 
    \multirow{2}{*}{\textbf{Dataset}} & \multirow{2}{*}{\textbf{Metrics}} & \multicolumn{8}{c|}{\textbf{Non-OOD Exposure}} & \multicolumn{4}{c|}{\textbf{Real OOD Exposure}} & \multicolumn{2}{c}{\textsc{Tide}}\\
    & & \textbf{MSP} & \textbf{ODIN} & \textbf{Mahalanobis} & \textbf{Energy} & \textbf{GKDE} & \textbf{GPN} & \textbf{\textsc{GNNSafe}} & \textbf{\textsc{NODESafe}} & \textbf{OE} & \textbf{Energy FT} & \textbf{\textsc{GS++}} & \textbf{\textsc{NS++}} & \textbf{w/o OE} & \textbf{w/ OE}\\
    \midrule
    \midrule
\multirow{4}{*}{{\fontfamily{qcr}\selectfont Citeseer-S}} & AUROC & 66.34 & 49.23 & 45.26 & 65.62 & 61.48 & 70.55 & 79.87 $\pm$ 0.56 & 87.70 $\pm$ 2.27 & 58.74 & 68.87 & 81.30 $\pm$ 0.43 & 77.32 $\pm$ 8.74 & 91.89 $\pm$ 0.36& 91.90 $\pm$ 0.12 \\
                            & AUPR  & 34.78 & 23.07 & 21.20 & 33.63 & 31.55 & 41.12 & 61.44 $\pm$ 1.50 & 76.81 $\pm$ 1.13 & 30.07 & 36.01 & 64.59 $\pm$ 0.73 & 59.74$\pm$ 6.25 & 80.11 $\pm$ 0.89& 79.86 $\pm$  0.96 \\
                            & FPR95   & 85.03 & 100.00 & 99.13 & 87.59 & 93.71 & 78.26 & 74.45 $\pm$ 0.59 & 65.99 $\pm$ 12.52 & 95.37 & 76.44 & 71.33 $\pm$ 2.10 & 72.31 $\pm$ 10.55 & 38.41 $\pm$ 2.29 & 37.09 $\pm$ 3.07\\
                            & ID ACC & 65.60 & 66.10 & 60.70 & 65.20 & 64.70 & 65.80 & 65.20 $\pm$ 0.50 & 69.47 $\pm$ 0.85 & 59.00 & 63.00 & 64.93 $\pm$ 1.29 & 52.17 $\pm$ 11.42 & 70.03 $\pm$ 0.78& 69.63 $\pm$ 0.51 \\
\midrule
\multirow{4}{*}{{\fontfamily{qcr}\selectfont Citeseer-F}} & AUROC & 78.32 & 49.86 & 49.92 & 79.19 & 74.69 & 78.46 & 84.07 $\pm$ 0.41 & 85.11 $\pm$ 11.68 & 72.06 & 79.23 & 84.38 $\pm$ 0.31 & 84.12 $\pm$ 0.41 & 93.96 $\pm$ 0.73& 93.79 $\pm$ 0.16 \\
                            & AUPR  & 55.48 & 23.11 & 31.20 & 55.94 & 50.25 & 53.21 & 68.22 $\pm$ 0.73 & 75.22 $\pm$ 11.74 & 48.80 & 55.69 & 68.77 $\pm$ 0.84 & 68.84 $\pm$ 0.76 & 84.67 $\pm$ 2.35 & 83.61 $\pm$ 0.42 \\
                            & FPR95   & 71.27 & 100.00 & 99.73 & 69.67 & 71.22 & 73.14 & 67.75 $\pm$ 1.69 & 60.72 $\pm$ 34.01 & 81.09 & 64.08 & 64.64 $\pm$ 0.32 & 65.69 $\pm$ 3.13 & 23.13 $\pm$ 3.55 & 23.32 $\pm$  1.02\\
                            & ID ACC & 66.20 & 65.80 & 53.30 & 64.50 & 64.20 & 63.20 & 64.70 $\pm$ 0.44 & 68.73 $\pm$ 0.47 & 60.50 & 64.40 & 65.03 $\pm$ 1.25 & 64.97 $\pm$ 1.35 & 68.40 $\pm$ 1.93 & 69.33 $\pm$  0.21 \\
\midrule
\multirow{4}{*}{{\fontfamily{qcr}\selectfont Citeseer-L}} & AUROC & 88.42 & 51.33 & 53.46 & 89.98 & 82.69 & 85.65 & 90.73 $\pm$ 0.17 & 90.59 $\pm$ 0.69 & 89.44 & 90.34 & 91.37 $\pm$ 0.29 & 91.44 $\pm$ 0.34 & 90.97 $\pm$ 0.23 & 90.77  $\pm$ 0.15 \\
                            & AUPR  & 64.03 & 17.97 & 35.47 & 64.10 & 61.21 & 62.32 & 64.93 $\pm$ 0.58 & 65.01 $\pm$ 1.71 & 62.74 & 66.66& 64.32 $\pm$ 1.55 & 66.00 $\pm$ 0.49 & 64.92 $\pm$ 1.49 & 64.24 $\pm$ 1.75 \\
                            & FPR95   & 51.97 & 100.00 & 86.32 & 38.76 & 50.61 & 41.37 & 40.36 $\pm$ 1.19 & 33.55 $\pm$ 5.29 & 45.99 & 31.60 &28.32 $\pm$ 0.95 & 30.35 $\pm$ 2.55 & 30.84 $\pm$ 0.90 & 27.24 $\pm$ 2.76 \\
                            & ID ACC & 89.36 & 89.36 & 72.51 & 90.58 & 89.16 & 89.30 & 89.46 $\pm$ 0.76 & 89.97 $\pm$ 0.30 & 87.23 & 90.58 & 88.25 $\pm$ 0.17 & 88.66 $\pm$ 0.50 & 88.55 $\pm$ 0.63 & 88.45 $\pm$ 0.34\\
\specialrule{.1em}{.05em}{.05em} 
\specialrule{.1em}{.05em}{.05em} 
    \end{tabular}} \label{Appendix:citeseer_full}
\end{table}

\begin{table}[H] 
    \centering
    \caption{\textbf{\texttt{Pubmed}} Extended OOD detection performance with two types of OOD (\textbf{S}tructure manipulation, \textbf{F}eature interpolation). \textsc{GS++} is short for \textsc{GNNSafe++} and \textsc{NS++} is short for \textsc{NODESafe++}. Label-leave-out was left out as discussed in~\ref{Appendix:Dataset Details}. OE indicates OOD exposure.}
    \resizebox{1\linewidth}{!}{
    \begin{tabular}{c|c|cccccccc|cccc|cc}
    \specialrule{.1em}{.05em}{.05em} 
    \specialrule{.1em}{.05em}{.05em} 
    \multirow{2}{*}{\textbf{Dataset}} & \multirow{2}{*}{\textbf{Metrics}} & \multicolumn{8}{c|}{\textbf{Non-OOD Exposure}} & \multicolumn{4}{c|}{\textbf{Real OOD Exposure}} & \multicolumn{2}{c}{\textsc{Tide}}\\
    & & \textbf{MSP} & \textbf{ODIN} & \textbf{Mahalanobis} & \textbf{Energy} & \textbf{GKDE} & \textbf{GPN} & \textbf{\textsc{GNNSafe}} & \textbf{\textsc{NODESafe}} & \textbf{OE} & \textbf{Energy FT} & \textbf{\textsc{GS++}} & \textbf{\textsc{NS++}} & \textbf{w/o OE} & \textbf{w/ OE}\\
    \midrule
    \midrule
\multirow{4}{*}{{\fontfamily{qcr}\selectfont Pubmed-S}} & AUROC & 74.31 & 49.76 & 55.28 & 74.33 & 74.02 & 74.96 & 92.45 $\pm$ 1.15 & 93.17 $\pm$ 1.15 & 74.41 & 73.54 & 92.84 $\pm$ 0.28 & 95.32 $\pm$ 0.17 & 97.25 $\pm$ 0.58 & 98.46 $\pm$ 0.10\\
                            & AUPR  & 17.44 & 4.83 & 8.38 & 17.32 & 16.89 & 17.54 & 65.47 $\pm$ 2.38 & 69.31 $\pm$ 4.81 & 16.74 & 18.00 & 70.41 $\pm$ 0.33 & 68.07 $\pm$ 0.81 & 79.76 $\pm$ 2.17& 85.51 $\pm$ 0.38 \\
                            & FPR95   & 84.08 & 100.00 & 97.59 & 78.90 & 81.52 & 80.33 & 47.81 $\pm$ 6.10 & 49.54 $\pm$ 15.93 & 83.52 & 92.04 & 47.67 $\pm$ 6.87 & 20.32 $\pm$ 1.83 & 12.97 $\pm$ 3.87& 7.55 $\pm$ 0.57 \\
                            & ID ACC & 75.10 & 75.30 & 69.30 & 75.60 & 75.20 & 75.80 & 77.00 $\pm$ 0.44 & 77.30 $\pm$ 0.62 & 72.90 & 75.80 & 77.63 $\pm$ 0.31 & 74.20 $\pm$ 0.10 & 75.93 $\pm$ 0.40& 76.80 $\pm$ 0.52 \\
\midrule
\multirow{4}{*}{{\fontfamily{qcr}\selectfont Pubmed-F}} & AUROC & 83.28 & 49.67 & 69.12 & 84.16 & 82.25 & 82.56 & 95.18 $\pm$ 0.13 & 89.00 $\pm$ 11.30 & 82.34 & 78.94 & 94.70 $\pm$ 0.51 & 94.96 $\pm$ 0.50 & 97.44 $\pm$ 0.59 & 98.05 $\pm$ 0.25 \\
                            & AUPR  & 39.29 & 4.83 & 15.09 & 39.10 & 32.41 & 39.75 & 74.41 $\pm$ 1.56 & 67.81 $\pm$ 19.39 & 38.60 & 37.21 & 77.71 $\pm$ 0.31 & 76.34 $\pm$ 2.06 & 81.70 $\pm$ 2.92& 85.21 $\pm$ 0.85\\
                            & FPR95   & 69.38 & 100.00 & 84.93 & 62.47 & 68.56 & 61.79 & 27.62 $\pm$ 2.55 & 50.80 $\pm$ 41.88 & 74.58 & 90.00 & 31.49 $\pm$ 5.20 & 29.43 $\pm$ 2.90 & 12.53 $\pm$ 1.63 & 10.12 $\pm$ 2.35 \\
                            & ID ACC & 75.00 & 75.30 & 73.00 & 75.50 & 74.10 & 74.50 & 76.57 $\pm$ 0.35 & 77.33 $\pm$ 1.04 & 73.10 & 75.30 & 78.13 $\pm$ 0.06 & 74.13 $\pm$ 1.16 & 76.40 $\pm$ 0.60 & 76.53 $\pm$  0.38\\
\specialrule{.1em}{.05em}{.05em} 
\specialrule{.1em}{.05em}{.05em} 
    \end{tabular}} \label{Appendix:pubmed_full}
\end{table}

\begin{table}[H]
    \centering
    \caption{\textbf{\texttt{Amazon-Photo:}} Extended OOD detection performance with three types of OOD (\textbf{S}tructure manipulation, \textbf{F}eature interpolation, and \textbf{L}abel-leave-out).}
    \resizebox{0.8\linewidth}{!}{
    \begin{tabular}{c|c|cccccccc|c}
    \specialrule{.1em}{.05em}{.05em} 
    \specialrule{.1em}{.05em}{.05em}
    \multirow{2}{*}{\textbf{Dataset}} & \multirow{2}{*}{\textbf{Metrics}} & \multicolumn{8}{c|}{\textbf{Non-OOD Exposure}} & \multicolumn{1}{c}{}\\
    & & \textbf{MSP} & \textbf{ODIN} & \textbf{Mahalanobis} & \textbf{Energy} & \textbf{GKDE} & \textbf{GPN} & \textbf{\textsc{GNNSafe}} & \textbf{\textsc{NODESafe}} & \textsc{Tide}\\
    \midrule
     \midrule
\multirow{4}{*}{{\fontfamily{qcr}\selectfont Amazon-S}} & AUROC & 98.27 & 93.24 & 71.69 & 98.51 & 76.39 & 97.17  & 98.60 $\pm$ 0.09 & 98.54 $\pm$ 0.05 & 98.97 $\pm$ 0.14 \\
 & AUPR & 98.54 & 95.26 & 79.01 & 98.72 & 81.58 & 96.39  & 99.22 $\pm$ 0.05  & 99.18 $\pm$ 0.03  & 99.42 $\pm$ 0.08  \\
 & FPR95 & 6.13 & 65.44 & 99.91 & 4.97 & 99.25 & 11.65  & 0.00 $\pm$ 0.00 & 0.00 $\pm$ 0.00 & 0.00 $\pm$ 0.00 \\
 & ID ACC & 92.84 & 92.84 & 92.79 & 92.86 & 87.57 & 88.51  & 92.64 $\pm$ 0.44 & 91.36 $\pm$ 0.02 & 92.95 $\pm$ 0.38 \\

\midrule
\multirow{4}{*}{{\fontfamily{qcr}\selectfont Amazon-F}} & AUROC & 97.31 & 81.15 & 76.50 & 97.87 & 58.96 & 87.91 & 98.46 $\pm$ 0.01 & 98.42 $\pm$ 0.03 & 98.65 $\pm$ 0.10 \\
 & AUPR & 95.16 & 78.47 & 71.14 & 95.64 & 66.76 & 84.77 & 98.90 $\pm$ 0.15 & 98.77 $\pm$ 0.17 & 99.04 $\pm$ 0.22 \\
 & FPR95 & 8.72 & 100.0 & 76.12 & 6.00 & 99.28 & 49.11  & 0.44 $\pm$ 0.28 & 0.30 $\pm$ 0.05 & 0.30 $\pm$ 0.06 \\
 & ID ACC & 92.89 & 92.71 & 92.86 & 92.96 & 86.18 & 90.05 & 92.65 $\pm$ 0.55 & 91.55 $\pm$ 0.41 & 92.70 $\pm$ 0.33 \\

\midrule
\multirow{4}{*}{{\fontfamily{qcr}\selectfont Amazon-L}} & AUROC & 93.97 & 65.97 & 73.25 & 93.81 & 65.58 & 92.72 & 96.92 $\pm$ 0.33 & 96.56 $\pm$ 0.35 & 96.86 $\pm$ 0.35 \\
 & AUPR & 91.32 & 57.80 & 66.89 & 91.13 & 65.20 & 90.34  & 96.38 $\pm$ 0.70 & 95.35 $\pm$ 0.58 & 95.79 $\pm$ 0.39 \\
 & FPR95 & 26.65 & 90.23 & 74.30 & 28.48 & 96.87 & 37.16  & 9.29 $\pm$ 1.15 & 10.78 $\pm$ 0.99 & 8.13 $\pm$ 3.55 \\
 & ID ACC & 95.76 & 96.08 & 95.76 & 95.72 & 89.37 & 90.07 & 96.10 $\pm$ 0.29 & 95.20 $\pm$ 0.11  & 96.07 $\pm$ 0.20 \\
\specialrule{.1em}{.05em}{.05em} 
\specialrule{.1em}{.05em}{.05em} 
    \end{tabular}} \label{Appendix:amazon_full}
\end{table}

\begin{table}[H]
    \centering
    \caption{\textbf{\texttt{Coauthor:}} Extended OOD detection performance with three types of OOD (\textbf{S}tructure manipulation, \textbf{F}eature interpolation, and \textbf{L}abel-leave-out).}
    \resizebox{0.8\linewidth}{!}{
    \begin{tabular}{c|c|cccccccc|c}
    \specialrule{.1em}{.05em}{.05em} 
    \specialrule{.1em}{.05em}{.05em}
    \multirow{2}{*}{\textbf{Dataset}} & \multirow{2}{*}{\textbf{Metrics}} & \multicolumn{8}{c|}{\textbf{Non-OOD Exposure}} & \multicolumn{1}{c}{}\\
    & & \textbf{MSP} & \textbf{ODIN} & \textbf{Mahalanobis} & \textbf{Energy} & \textbf{GKDE} & \textbf{GPN} & \textbf{\textsc{GNNSafe}} & \textbf{\textsc{NODESafe}} & \textsc{Tide}\\
    \midrule
     \midrule
\multirow{4}{*}{{\fontfamily{qcr}\selectfont Coauthor-S}} & AUROC & 95.30 & 52.14 & 80.46 & 96.18 & 65.87 & 34.67 & 99.80 $\pm$ 0.17 & 99.85 $\pm$ 0.07 & 99.95 $\pm$ 0.02 \\
 & AUPR & 94.37 & 48.83 & 76.65 & 95.25 & 72.65 & 40.21 & 99.82 $\pm$ 0.11 & 99.85 $\pm$ 0.06 & 99.94 $\pm$ 0.02  \\
 & FPR95 & 24.75 & 99.92 & 70.75 & 18.02 & 99.48 & 99.57 & 0.26 $\pm$ 0.02 & 0.23 $\pm$ 0.06 & 0.13 $\pm$ 0.03 \\
 & ID ACC & 92.47 & 92.34 & 92.33 & 92.75 & 88.62 & 89.45 & 92.81 $\pm$ 0.10 & 92.11 $\pm$ 0.26 & 92.72 $\pm$ 0.11 \\
\midrule
\multirow{4}{*}{{\fontfamily{qcr}\selectfont Coauthor-F}} & AUROC & 97.05 & 51.54 & 93.23 & 97.88 & 80.69 & 81.77 & 99.80 $\pm$ 0.14 & 99.86 $\pm$ 0.07 & 99.94 $\pm$ 0.02 \\
 & AUPR & 96.93 & 45.50 & 90.88 & 97.69 & 86.47 & 80.56 & 99.78 $\pm$ 0.11 & 99.82 $\pm$ 0.07 & 99.92 $\pm$ 0.04 \\
 & FPR95 & 15.55 & 100.0 & 28.10 & 9.75 & 96.57 & 74.46 & 0.39 $\pm$ 0.11 & 0.30 $\pm$ 0.03 & 0.19 $\pm$ 0.05 \\
 & ID ACC & 92.45 & 92.39 & 92.34 & 92.75 & 84.72 & 87.05 & 95.38 $\pm$ 0.16 & 95.35 $\pm$ 0.15 & 95.47 $\pm$ 0.07 \\
\midrule
\multirow{4}{*}{{\fontfamily{qcr}\selectfont Coauthor-L}} & AUROC & 94.88 & 51.44 & 85.36 & 95.87 & 61.15 & 93.24 & 97.35 $\pm$ 0.26 & 97.34 $\pm$ 0.27 & 97.92 $\pm$ 0.04 \\
 & AUPR & 97.99 & 74.79 & 93.61 & 98.34 & 81.39 & 97.55 & 99.03 $\pm$ 0.10 & 99.03 $\pm$ 0.10 & 99.24 $\pm$ 0.02 \\
 & FPR95 & 23.81 & 100.0 & 45.41 & 18.69 & 94.60 & 34.78& 12.22 $\pm$ 1.09 & 12.46 $\pm$ 1.22 & 9.60 $\pm$ 0.60 \\
 & ID ACC & 95.18 & 95.15 & 95.19 & 95.20 & 89.05 & 91.68 & 95.38 $\pm$ 0.16 & 95.35 $\pm$ 0.15 & 95.47 $\pm$ 0.07 \\
\specialrule{.1em}{.05em}{.05em} 
\specialrule{.1em}{.05em}{.05em} 
    \end{tabular}} \label{Appendix:coauthor_full}
\end{table}

\begin{table}[H]
    \centering
    \caption{\textbf{\texttt{Twitch}:} Extended OOD detection performance on OOD \texttt{Twitch} sub-graphs ES, FR and RU. \textsc{GS++} is short for \textsc{GNNSafe++} and \textsc{NS++} is short for \textsc{NODESafe++}. OE indicates OOD exposure.}
    \resizebox{1\linewidth}{!}{
    \begin{tabular}{c|c|cccccccc|cccc|cc}
    \specialrule{.1em}{.05em}{.05em} 
    \specialrule{.1em}{.05em}{.05em} 
    \multirow{2}{*}{\textbf{Dataset}} & \multirow{2}{*}{\textbf{Metrics}} & \multicolumn{8}{c|}{\textbf{Non-OOD Exposure}} & \multicolumn{4}{c|}{\textbf{Real OOD Exposure}} & \multicolumn{2}{c}{\textsc{Tide}}\\
    & & \textbf{MSP} & \textbf{ODIN} & \textbf{Mahalanobis} & \textbf{Energy} & \textbf{GKDE} & \textbf{GPN} & \textbf{\textsc{GNNSafe}} & \textbf{\textsc{NODESafe}} & \textbf{OE} & \textbf{Energy FT} & \textbf{\textsc{GS++}} & \textbf{\textsc{NS++}} & \textbf{w/o OE} & \textbf{w/ OE}\\
    \midrule
    \midrule
    \multirow{4}{*}{{\fontfamily{qcr}\selectfont Twitch-ES}} & AUROC & 37.72 & 83.83 & 45.66 & 38.80 & 48.70 & 53.00  & 70.20 $\pm$ 18.91  & 70.66 $\pm$ 18.90  & 55.97 & 80.73 & 94.53 $\pm$ 2.47  & 95.48 $\pm$ 1.20  & 95.97 $\pm$ 1.31 &  96.80 $\pm$  1.33\\
                            & AUPR  & 53.08 & 80.43 & 58.82 & 54.26 & 61.05 & 64.24  & 75.88 $\pm$ 17.90  & 76.13 $\pm$ 17.89  & 69.49 & 87.56  & 97.14 $\pm$ 1.33  & 97.42 $\pm$ 0.47  & 96.30 $\pm$ 0.57 & 97.10 $\pm$  2.47 \\
                            & FPR95   & 98.09 & 33.28 & 95.48 & 95.70 & 95.37 & 95.05 & 90.48 $\pm$ 4.55  & 89.56 $\pm$ 5.20  & 94.94 & 76.76  & 40.36 $\pm$ 22.32  & 27.99 $\pm$ 17.57  & 17.70 $\pm$ 13.41 & 7.38 $\pm$  7.06  \\
                            & ID ACC & 68.72 & 70.79 & 70.51 & 70.40 & 67.44 & 68.09  & 70.75 $\pm$ 0.69  & 70.75 $\pm$ 0.69  & 70.73 & 70.52  & 70.36 $\pm$ 0.30  & 70.07 $\pm$ 0.33  & 68.63 $\pm$ 1.41 & 68.74 $\pm$  0.79 \\
    \midrule
    \multirow{4}{*}{{\fontfamily{qcr}\selectfont Twitch-FR}} & AUROC & 21.82 & 59.82 & 40.40 & 57.21 & 49.19 & 51.25 & 49.44 $\pm$ 28.72  & 49.38 $\pm$ 29.23  & 45.66 & 79.66  & 93.91 $\pm$ 1.73  & 37.28 $\pm$ 52.74  & 86.80 $\pm$ 12.59 & 84.22 $\pm$  16.97 \\
                            & AUPR  & 38.27 & 64.63 & 46.69 & 61.48 & 52.94 & 55.37  & 57.81 $\pm$ 20.56  & 57.90 $\pm$ 20.87  & 54.03 & 81.20  & 95.94 $\pm$ 0.99  & 56.13 $\pm$ 36.96  & 89.05 $\pm$ 11.22 & 88.71 $\pm$  12.74 \\
                            & FPR95   & 99.25 & 92.57 & 95.54 & 91.57 & 95.04 & 93.92  & 94.69 $\pm$ 7.17  & 94.62 $\pm$ 7.25  & 95.48 & 76.39  & 45.82 $\pm$ 13.57  & 67.09 $\pm$ 55.37  & 52.43 $\pm$ 40.23 & 65.89 $\pm$  27.51 \\
                            & ID ACC & 68.72 & 70.79 & 70.51 & 70.40 & 67.44 & 68.09  & 70.75 $\pm$ 0.69  & 70.75 $\pm$ 0.69 & 70.73 & 70.52  & 70.36 $\pm$ 0.30  & 70.07 $\pm$ 0.33  & 68.63 $\pm$ 1.41 & 68.74 $\pm$  0.79 \\
    \midrule
    \multirow{4}{*}{{\fontfamily{qcr}\selectfont Twitch-RU}} & AUROC & 41.23 & 58.67 & 55.68 & 57.72 & 46.48 & 50.89  & 79.34 $\pm$ 16.34  & 79.39 $\pm$ 16.59  & 55.72 & 93.12 & 98.79 $\pm$ 0.92  & 97.60 $\pm$ 0.57  & 86.38 $\pm$ 0.77 & 93.53 $\pm$  0.79\\
                            & AUPR  & 56.06 & 72.58 & 66.42 & 66.68 & 62.11 & 65.14  & 84.07 $\pm$ 9.53  & 84.09 $\pm$ 9.73  & 70.18 & 95.36  & 99.28 $\pm$ 0.59  & 98.36 $\pm$ 0.64  & 90.41 $\pm$ 2.50 & 94.95 $\pm$  4.94 \\
                            & FPR95   & 95.01 & 93.98 & 90.13 & 87.57 & 95.62 & 99.93  & 57.37 $\pm$ 34.19  & 57.67 $\pm$ 34.76  & 95.07 & 30.72  & 3.25 $\pm$ 3.97  & 8.29 $\pm$ 5.27  & 69.67 $\pm$ 7.40 & 26.29 $\pm$  18.01 \\
                            & ID ACC & 68.72 & 70.79 & 70.51 & 70.40 & 67.44 & 68.09  & 70.75 $\pm$ 0.69  & 70.75 $\pm$ 0.69 & 70.73 & 70.52  & 70.36 $\pm$ 0.30  & 70.07 $\pm$ 0.33  & 68.63 $\pm$ 1.41 & 68.74 $\pm$  0.79 \\
    \specialrule{.1em}{.05em}{.05em} 
    \specialrule{.1em}{.05em}{.05em} 
\end{tabular}} \label{Appendix:twitch_full}
\end{table}

\begin{table}[H]
    \centering
    \caption{\textbf{\texttt{Arxiv}:} Extended OOD detection performance on OOD dataset of papers published in 2018, 2019, and 2020. \textsc{GS++} is short for \textsc{GNNSafe++} and \textsc{NS++} is short for \textsc{NODESafe++}. OE indicates OOD exposure.}
    \resizebox{1\linewidth}{!}{
    \begin{tabular}{c|c|cccccccc|cccc|cc}
    \specialrule{.1em}{.05em}{.05em} 
    \specialrule{.1em}{.05em}{.05em} 
    \multirow{2}{*}{\textbf{Dataset}} & \multirow{2}{*}{\textbf{Metrics}} & \multicolumn{8}{c|}{\textbf{Non-OOD Exposure}} & \multicolumn{4}{c|}{\textbf{Real OOD Exposure}} & \multicolumn{2}{c}{\textsc{Tide}}\\
    & & \textbf{MSP} & \textbf{ODIN} & \textbf{Mahalanobis} & \textbf{Energy} & \textbf{GKDE} & \textbf{GPN} & \textbf{\textsc{GNNSafe}} & \textbf{\textsc{NODESafe}} & \textbf{OE} & \textbf{Energy FT} & \textbf{\textsc{GS++}} & \textbf{\textsc{NS++}} & \textbf{w/o OE} & \textbf{w/ OE}\\
    \midrule
    \midrule
        \multirow{4}{*}{{\fontfamily{qcr}\selectfont Arxiv-2018}} & AUROC & 61.66 & 53.49 & 57.08 & 61.75 & 56.29 & OOM  & 65.94 $\pm$ 0.38  & 66.73 $\pm$ 0.16  & 67.72 & 69.58 & 69.47 $\pm$ 0.63 & 69.81 $\pm$ 0.58 & 67.97 $\pm$ 0.10 & 70.5 $\pm$  0.20 \\
        & AUPR & 70.63 & 63.06 & 65.09 & 70.41 & 66.78 & OOM  & 74.37 $\pm$ 0.29  & 75.20 $\pm$ 0.27  & 75.74 & 76.31  & 77.63 $\pm$ 0.64  & 77.68 $\pm$ 0.48  & 76.42 $\pm$ 0.20 & 78.59 $\pm$  0.21  \\
        & FPR95 & 91.67 & 100.0 & 93.69 & 91.74 & 94.31 & OOM  & 89.88 $\pm$ 0.67  & 88.80 $\pm$ 0.37 & 86.67 & 82.10 & 83.43 $\pm$ 1.24  & 81.51 $\pm$ 1.19  & 86.88 $\pm$ 0.48 & 81.57 $\pm$  0.25  \\
        & ID ACC & 53.78 & 51.39 & 51.59 & 53.36 & 50.76 & OOM  & 53.21 $\pm$ 0.16  & 53.10 $\pm$ 0.79  & 52.39 & 53.26  & 53.28 $\pm$ 0.35  & 51.32 $\pm$ 0.21  & 52.44 $\pm$ 0.24 & 52.41 $\pm$  0.29  \\
        \midrule
        \multirow{4}{*}{{\fontfamily{qcr}\selectfont Arxiv-2019}} & AUROC & 63.07 & 53.95 & 56.76 & 63.16 & 57.87 & OOM  & 67.90 $\pm$ 0.37  & 68.76 $\pm$ 0.16  & 69.33 & 70.58  & 71.32 $\pm$ .63  & 71.87 $\pm$ 0.59  & 70.09 $\pm$ 0.21 & 72.30 $\pm$  0.23 \\
        & AUPR & 66.00 & 56.07 & 57.85 & 65.78 & 62.34 & OOM  & 70.97 $\pm$ 0.29  & 71.98 $\pm$ .33  & 72.15 & 72.03  & 74.45 $\pm$ 0.77  & 74.74 $\pm$ 0.68  & 73.24 $\pm$ 0.24 & 75.56 $\pm$  0.30 \\
        & FPR95 & 90.82 & 100.0 & 94.01 & 90.96 & 93.97 & OOM  & 88.83 $\pm$ 0.72  & 87.44 $\pm$ 0.23 & 85.52 & 81.30  & 81.80 $\pm$ 1.37  & 79.31 $\pm$ 1.13  & 85.10 $\pm$ 0.75 & 79.39 $\pm$  0.38 \\
        & ID ACC & 53.78 & 51.39 & 51.59 & 53.36 & 50.76 & OOM  & 53.21 $\pm$ 0.16  & 53.10 $\pm$ 0.79  & 52.39 & 53.26  & 53.28 $\pm$ 0.35  & 51.32 $\pm$ 0.21  & 52.44 $\pm$ 0.24 & 52.41 $\pm$  0.29 \\
        \midrule
        \multirow{4}{*}{{\fontfamily{qcr}\selectfont Arxiv-2020}} & AUROC & 67.00 & 55.78 & 56.92 & 67.70 & 60.79 & OOM  & 77.90 $\pm$ 0.32  & 78.59 $\pm$ 0.10  & 72.35 & 74.53  & 81.15 $\pm$ 0.51  & 81.82 $\pm$ 0.50  & 80.10 $\pm$ 0.23 & 82.03 $\pm$  0.23 \\
        & AUPR & 90.92 & 87.41 & 85.95 & 91.15 & 88.74 & OOM  & 94.64 $\pm$ 0.08  & 94.83 $\pm$ 0.03  & 92.57 & 93.08  & 95.43 $\pm$ .12  & 92.23 $\pm$ 5.66  & 95.19 $\pm$ 0.07 & 95.59 $\pm$  0.06 \\
        & FPR95 & 89.28 & 100.0 & 95.01 & 89.69 & 93.31 & OOM  & 85.00 $\pm$ 0.98  & 83.11 $\pm$ 0.42  & 83.28 & 78.36  & 74.75 $\pm$ 1.99  & 71.78 $\pm$ 1.32  & 78.98 $\pm$ 0.56 & 71.15 $\pm$  0.62 \\
        & ID ACC & 53.78 & 51.39 & 51.59 & 53.36 & 50.76 & OOM  & 53.21 $\pm$ 0.16  & 53.10 $\pm$ 0.79  & 52.39 & 53.26  & 53.28 $\pm$ 0.35  & 51.32 $\pm$ 0.21  & 52.44 $\pm$ 0.24 & 52.41 $\pm$  0.29 \\
    \specialrule{.1em}{.05em}{.05em}
    \specialrule{.1em}{.05em}{.05em}  
    \end{tabular}}\label{Appendix:arxiv_full}
\end{table}

\begin{table}[H]
    \centering
    \caption{\textbf{\texttt{Cora:}} Extended ablation performance of \textsc{Tide}.}
    \resizebox{1\linewidth}{!}{
    \begin{tabular}{c|cccc|cccc|cccc}
    \specialrule{.1em}{.05em}{.05em}
    \specialrule{.1em}{.05em}{.05em} 
    \multirow{2}{*}{Model} & \multicolumn{4}{c|}{{\fontfamily{qcr}\selectfont Cora-S}}& \multicolumn{4}{c|}{{\fontfamily{qcr}\selectfont Cora-F}}& \multicolumn{4}{c}{{\fontfamily{qcr}\selectfont Cora-L}}\\
    & AUROC & AUPR & FPR95 & ID Acc & AUROC & AUPR & FPR95 & ID Acc& AUROC & AUPR & FPR95 & ID Acc\\
    \midrule
    \midrule
    \textsc{GNNSafe} &
    87.64 $\pm$ 0.39 & 77.93 $\pm$ 0.59 & 73.49 $\pm$ 1.91 & 77.53 $\pm$ 0.38 & 
    92.76 $\pm$ 0.43 & 87.85 $\pm$ 1.07 & 48.56 $\pm$ 4.52 & 77.50 $\pm$ 0.26 &
    93.19 $\pm$ 0.11 & 82.99 $\pm$ 0.45 & 29.55 $\pm$ 2.24 & 89.66 $\pm$ 0.66 \\
    $\mathcal{L}_{\text{VIB}}$ & 
    94.79 $\pm$ 0.22 & 88.41 $\pm$ 0.49 & 26.22 $\pm$ 2.55 & 78.73 $\pm$ 0.50 & 
    97.69 $\pm$ 0.13 & 93.94 $\pm$ 0.36 & 9.45 $\pm$ 0.78 & 77.43 $\pm$ 0.58 &
    93.06 $\pm$ 0.11 & 83.12 $\pm$ 0.11 & 30.93 $\pm$ 0.57 & 90.30 $\pm$ 0.37 \\
    $\mathcal{L}_{\text{VIB}}$ \& $\mathcal{L}_{\text{VCInd}}$ & 
    94.86 $\pm$ 0.29 & 88.48 $\pm$ 0.54 & 24.84 $\pm$ 0.65 & 78.60 $\pm$ 0.40 & 
    97.72 $\pm$ 0.10 & 94.26 $\pm$ 0.52 & 9.40 $\pm$ 1.06 & 77.47 $\pm$ 0.45 &
    93.11 $\pm$ 0.16 & 83.20 $\pm$ 0.18 & 30.43 $\pm$ 0.11 & 90.30 $\pm$ 0.18 \\
    \midrule
    \textsc{Tide} & 
    95.15 $\pm$ 0.37 & 89.33 $\pm$ 0.57 & 23.31 $\pm$ 1.04 & 78.97 $\pm$ 1.22 & 
    97.88 $\pm$ 0.24 & 94.67 $\pm$ 0.56 & 8.13 $\pm$ 1.32 & 78.10 $\pm$ 0.89 & 
    93.70 $\pm$ 0.04 & 84.03 $\pm$ 0.08 & 28.84 $\pm$ 0.36 & 90.61 $\pm$ 0.36 \\
    \specialrule{.1em}{.05em}{.05em}
    \specialrule{.1em}{.05em}{.05em} 
    \end{tabular}} \label{Appendix:cora_ablation_full}
\end{table}

\begin{table}[H]
    \centering
    \caption{\textbf{\texttt{Citeseer:}} Extended ablation performance of \textsc{Tide}.}
    \resizebox{1\linewidth}{!}{
    \begin{tabular}{c|cccc|cccc|cccc}
    \specialrule{.1em}{.05em}{.05em}
    \specialrule{.1em}{.05em}{.05em} 
    \multirow{2}{*}{Model} & \multicolumn{4}{c|}{{\fontfamily{qcr}\selectfont Citeseer-S}}& \multicolumn{4}{c|}{{\fontfamily{qcr}\selectfont Citeseer-F}}& \multicolumn{4}{c}{{\fontfamily{qcr}\selectfont Citeseer-L}}\\
    & AUROC & AUPR & FPR & ID Acc & AUROC & AUPR & FPR & ID Acc& AUROC & AUPR & FPR & ID Acc\\
    \midrule
    \midrule
    \textsc{GNNSafe} &
    79.87 $\pm$ 0.56 & 61.44 $\pm$ 1.50 & 74.45 $\pm$ 0.59 & 65.20 $\pm$ 0.50 & 
    84.07 $\pm$ 0.41 & 68.22 $\pm$ 0.73 & 67.75 $\pm$ 1.69 & 64.70 $\pm$ 0.44 &
    90.73 $\pm$ 0.17 & 64.93 $\pm$ 0.58 & 40.36 $\pm$ 1.19 & 89.46 $\pm$ 0.76 \\
    $\mathcal{L}_{\text{VIB}}$ & 
    90.84 $\pm$ 0.57 & 78.76 $\pm$ 0.47 & 45.62 $\pm$ 3.06 & 69.17 $\pm$ 0.49 & 
    93.01 $\pm$ 0.64 & 82.83 $\pm$ 1.64 & 31.77 $\pm$ 5.79 & 69.27 $\pm$ 0.49 &
    90.66 $\pm$ 0.10 & 64.06 $\pm$ 0.67 & 34.87 $\pm$ 3.12 & 87.74 $\pm$ 0.35 \\
    $\mathcal{L}_{\text{VIB}}$ \& $\mathcal{L}_{\text{VCInd}}$ & 
    91.85 $\pm$ 0.31 & 80.29 $\pm$ 0.69 & 39.61 $\pm$ 2.68 & 69.13 $\pm$ 0.21 & 
    93.25 $\pm$ 0.87 & 82.94 $\pm$ 2.15 & 26.96 $\pm$ 4.64 & 67.83 $\pm$ 1.42 &
    90.88 $\pm$ 0.25 & 64.78 $\pm$ 1.62 & 31.14 $\pm$ 1.19 & 88.75 $\pm$ 0.31 \\
    \midrule
    \textsc{Tide} & 
    91.89 $\pm$ 0.36 & 80.11 $\pm$ 0.89 & 38.41 $\pm$ 2.29 & 70.03 $\pm$ 0.78& 
    93.96 $\pm$ 0.73 & 84.67 $\pm$ 2.35 & 23.13 $\pm$ 3.55 & 68.40 $\pm$ 1.93& 
    90.97 $\pm$ 0.23 & 64.92 $\pm$ 1.49 & 30.84 $\pm$ 0.90 & 88.55 $\pm$ 0.63\\
    \specialrule{.1em}{.05em}{.05em}
    \specialrule{.1em}{.05em}{.05em} 
    \end{tabular}} 
\label{Appendix:citeseer_ablation_full}
\end{table}

\begin{table}[H]
    \centering
    \caption{\textbf{\texttt{Pubmed:}} Extended ablation performance of \textsc{Tide}.}
    \resizebox{0.8\linewidth}{!}{
    \begin{tabular}{c|cccc|cccc}
    \specialrule{.1em}{.05em}{.05em}
    \specialrule{.1em}{.05em}{.05em} 
    \multirow{2}{*}{Model} & \multicolumn{4}{c|}{{\fontfamily{qcr}\selectfont Pubmed-S}} & \multicolumn{4}{c}{{\fontfamily{qcr}\selectfont Pubmed-F}} \\
    & AUROC & AUPR & FPR95 & ID Acc & AUROC & AUPR & FPR95 & ID Acc \\
    \midrule
    \midrule
    \textsc{GNNSafe} &
    92.45 $\pm$ 1.15 & 65.47 $\pm$ 2.38 & 47.81 $\pm$ 6.10 & 77.00 $\pm$ 0.44 & 
    95.18 $\pm$ 0.13 & 74.41 $\pm$ 1.56 & 27.62 $\pm$ 2.55 & 76.57 $\pm$ 0.35 \\
    $\mathcal{L}_{\text{VIB}}$ & 
    96.18 $\pm$ 0.44 & 75.74 $\pm$ 1.60 & 17.64 $\pm$ 3.69 & 74.60 $\pm$ 1.71 & 
    96.98 $\pm$ 0.41 & 78.94 $\pm$ 3.33 & 16.05 $\pm$ 1.64 & 75.30 $\pm$ 1.22 \\
    $\mathcal{L}_{\text{VIB}}$ \& $\mathcal{L}_{\text{VCInd}}$ & 
    96.45 $\pm$ 0.64 & 76.46 $\pm$ 2.31 & 15.66 $\pm$ 4.88 & 74.77 $\pm$ 1.86 & 
    97.42 $\pm$ 0.51 & 81.07 $\pm$ 3.23 & 13.08 $\pm$ 2.41 & 75.60 $\pm$ 0.46 \\
    \midrule
    \textsc{Tide} & 
    97.25 $\pm$ 0.58 & 79.76 $\pm$ 2.17 & 12.97 $\pm$ 3.87 & 75.93 $\pm$ 0.40 & 
    97.44 $\pm$ 0.59 & 81.70 $\pm$ 2.92 & 12.53 $\pm$ 1.63 & 76.40 $\pm$ 0.60 \\
    \specialrule{.1em}{.05em}{.05em}
    \specialrule{.1em}{.05em}{.05em} 
    \end{tabular}} 
\label{Appendix:pubmed_ablation_full}
\end{table}

\begin{table}[H]
    \centering
    \caption{\textbf{\texttt{Twitch:}} Extended ablation performance of \textsc{Tide}.}
    \resizebox{1\linewidth}{!}{
    \begin{tabular}{c|cccc|cccc|cccc}
    \specialrule{.1em}{.05em}{.05em}
    \specialrule{.1em}{.05em}{.05em} 
    \multirow{2}{*}{Model} & \multicolumn{4}{c|}{{\fontfamily{qcr}\selectfont Twitch-ES}} & \multicolumn{4}{c|}{{\fontfamily{qcr}\selectfont Twitch-FR}} & \multicolumn{4}{c}{{\fontfamily{qcr}\selectfont Twitch-RU}} \\
    & AUROC & AUPR & FPR95 & ID Acc & AUROC & AUPR & FPR95 & ID Acc & AUROC & AUPR & FPR95 & ID Acc \\
    \midrule
    \midrule
    \textsc{GNNSafe} &
    70.20 $\pm$ 18.91 & 75.88 $\pm$ 17.90 & 90.48 $\pm$ 4.55 & 70.75 $\pm$ 0.69 & 
    49.44 $\pm$ 28.72 & 57.81 $\pm$ 20.56 & 94.69 $\pm$ 7.17 & 70.75 $\pm$ 0.69 & 
    79.34 $\pm$ 16.34 & 84.07 $\pm$ 9.53 & 58.37 $\pm$ 34.19 & 70.75 $\pm$ 0.69 \\
    $\mathcal{L}_{\text{VIB}}$ & 
    83.29 $\pm$ 11.53 & 86.36 $\pm$ 10.78 & 56.60 $\pm$ 33.50 & 70.36 $\pm$ 0.35 & 
    57.97 $\pm$ 14.37 & 58.80 $\pm$ 12.38 & 88.70 $\pm$ 13.46 & 70.36 $\pm$ 0.35 & 
    79.23 $\pm$ 13.76 & 84.86 $\pm$ 8.38 & 62.48 $\pm$ 36.82 & 70.36 $\pm$ 0.35 \\
    $\mathcal{L}_{\text{VIB}}$ \& $\mathcal{L}_{\text{VCInd}}$ & 
    95.04 $\pm$ 0.11 & 96.05 $\pm$ 0.13 & 19.35 $\pm$ 2.60 & 70.33 $\pm$ 0.63 & 
    75.14 $\pm$ 1.03 & 80.85 $\pm$ 0.58 & 90.69 $\pm$ 5.46 & 70.33 $\pm$ 0.63 & 
    92.67 $\pm$ 2.40 & 94.93 $\pm$ 0.93 & 39.68 $\pm$ 23.80 & 70.33 $\pm$ 0.63 \\
    \midrule
    \textsc{Tide} & 
    95.97 $\pm$ 1.31 & 96.30 $\pm$ 0.57 & 17.70 $\pm$ 13.41 & 68.63 $\pm$ 1.41 & 
    86.80 $\pm$ 12.59 & 89.05 $\pm$ 11.22 & 52.43 $\pm$ 40.23 & 68.63 $\pm$ 1.41 & 
    86.38 $\pm$ 0.77 & 90.41 $\pm$ 2.50 & 69.67 $\pm$ 7.40 & 68.63 $\pm$ 1.41 \\
    \specialrule{.1em}{.05em}{.05em}
    \specialrule{.1em}{.05em}{.05em} 
    \end{tabular}} 
\label{Appendix:twitch_ablation_full}
\end{table}

\begin{table}[H]
    \centering
    \caption{\textbf{\texttt{ArXiv:}} Extended ablation performance of \textsc{Tide}.}
    \resizebox{1\linewidth}{!}{
    \begin{tabular}{c|cccc|cccc|cccc}
    \specialrule{.1em}{.05em}{.05em}
    \specialrule{.1em}{.05em}{.05em} 
    \multirow{2}{*}{Model} & \multicolumn{4}{c|}{{\fontfamily{qcr}\selectfont ArXiv-18}} & \multicolumn{4}{c|}{{\fontfamily{qcr}\selectfont ArXiv-19}} & \multicolumn{4}{c}{{\fontfamily{qcr}\selectfont ArXiv-20}} \\
    & AUROC & AUPR & FPR95 & ID Acc & AUROC & AUPR & FPR95 & ID Acc & AUROC & AUPR & FPR95 & ID Acc \\
    \midrule
    \midrule
    \textsc{GNNSafe} &
    65.94 $\pm$ 0.38 & 74.37 $\pm$ 0.29 & 89.88 $\pm$ 0.67 & 53.21 $\pm$ 0.16 & 
    67.90 $\pm$ 0.37 & 70.97 $\pm$ 0.29 & 88.83 $\pm$ 0.72 & 53.21 $\pm$ 0.16 & 
    77.90 $\pm$ 0.32 & 94.64 $\pm$ 0.08 & 85.00 $\pm$ 0.98 & 53.21 $\pm$ 0.16 \\
    $\mathcal{L}_{\text{VIB}}$ & 
    67.41 $\pm$ 0.95 & 75.84 $\pm$ 1.02 & 87.72 $\pm$ 1.28 & 52.46 $\pm$ 0.24 & 
    69.49 $\pm$ 1.08 & 72.51 $\pm$ 1.09 & 86.35 $\pm$ 1.35 & 52.46 $\pm$ 0.24 & 
    79.53 $\pm$ 1.04 & 95.05 $\pm$ 0.27 & 80.72 $\pm$ 1.85 & 52.46 $\pm$ 0.24 \\
    $\mathcal{L}_{\text{VIB}}$ \& $\mathcal{L}_{\text{VCInd}}$ & 
    67.95 $\pm$ 0.11 & 76.40 $\pm$ 0.21 & 86.97 $\pm$ 0.46 & 52.43 $\pm$ 0.22 & 
    70.03 $\pm$ 0.25 & 73.20 $\pm$ 0.23 & 85.43 $\pm$ 0.89 & 52.43 $\pm$ 0.22 & 
    80.07 $\pm$ 0.25 & 95.12 $\pm$ 0.17 & 79.18 $\pm$ 0.65 & 52.43 $\pm$ 0.22 \\
    \midrule
    \textsc{Tide} & 
    67.97 $\pm$ 0.10 & 76.42 $\pm$ 0.20 & 86.88 $\pm$ 0.48 & 52.44 $\pm$ 0.24 & 
    70.09 $\pm$ 0.21 & 73.24 $\pm$ 0.24 & 85.10 $\pm$ 0.75 & 52.44 $\pm$ 0.24 & 
    80.10 $\pm$ 0.23 & 95.19 $\pm$ 0.07 & 78.98 $\pm$ 0.56 & 52.44 $\pm$ 0.24 \\
    \specialrule{.1em}{.05em}{.05em}
    \specialrule{.1em}{.05em}{.05em} 
    \end{tabular}} 
\label{Appendix:arxiv_ablation_full}
\end{table}

\section{Computational Cost} \label{Appendix:Computational Cost}
Table~\ref{Table:Computational_Time} presents a comparison of the computational cost of \textsc{Tide} against \textbf{GNNSafe} and NODESAFE, all evaluated using the same backbone. Evidently, \textbf{\textsc{Tide} consistently outperforms the baselines, achieving superior OOD detection performance at the cost of a 2-3x increase in training time, a marginal rise in memory usage, and a comparable inference time}. Despite the additional training cost, this \textbf{trade-off is deemed acceptable due to the significantly improved detection performance} and \textbf{competitive inference time}, as discussed in Appendix~\ref{Appendix:Limitations}. These results highlight the strong performance and practical applicability of \textsc{Tide} for node-level graph OOD detection.

\begin{table}[h!]
    \centering
    \caption{\textbf{Computation cost (single 48GB (49140MiB) NVIDIA RTX A6000 GPU) and OOD detection performance of \textsc{Tide} against SOTA Non-OOD exposure baselines.} The `Train' column is the training convergence time in seconds. The `In.' column is the inference time in seconds. The `Mem.' column is the maximum memory usage in Mebibytes (MiB). The `FPR95' column is the OOD detection performance in \%, the lower the better.}
    \resizebox{1\linewidth}{!}{
    \begin{tabular}{c|cccc|cccc|cccc|cccc|cccc}
        \toprule
        \toprule
        &\multicolumn{4}{c|}{\fontfamily{qcr}\selectfont Cora - Structure}&\multicolumn{4}{c|}{\fontfamily{qcr}\selectfont Citeseer - Structure}&\multicolumn{4}{c|}{\fontfamily{qcr}\selectfont Pubmed - Structure}&\multicolumn{4}{c|}{\fontfamily{qcr}\selectfont Twitch}&\multicolumn{4}{c}{\fontfamily{qcr}\selectfont Arxiv}\\
        &Train&In.&Mem.&FPR95$(\downarrow)$&Train&In.&Mem.&FPR95$(\downarrow)$&Train&In.&Mem.&FPR95$(\downarrow)$&Train&In.&Mem.&FPR95$(\downarrow)$&Train&In.&Mem.&FPR95$(\downarrow)$\\
        \midrule
        \midrule
        \textsc{GNNSafe} 
        & 2.05 & 0.01 & 625 & 73.49 
        & 2.25 & 0.02 & 693 & 74.45 
        & 4.22 & 0.02 & 827 & 47.81
        & 3.55 & 0.03 &  749 & 81.18
        & 19.72 & 0.11 &  3055 & 87.90\\     
        
        \midrule
        \textsc{NODESAFE}
        & 0.76 & 0.02 & 625 & 38.06 
        & 1.45 & 0.02 & 693 & 65.99 
        & 1.69 & 0.02 & 827 & 49.54
        & 1.35 & 0.03 &  749 & 80.62
        & 16.79 & 0.11 &  3055 & 86.45\\  
        \midrule
        \textsc{\textsc{Tide}}
        & 4.02 & 0.02 & 653 & 23.31
        & 9.85 & 0.02 & 780 & 38.41 
        & 9.96 & 0.024 & 906 & 12.97
        & 8.01 & 0.028 &  794 & 43.14
        & 39.7 & 0.13 &  3490 & 83.65\\  
        \bottomrule
        \bottomrule
        \end{tabular}
            }
        \label{Table:Computational_Time}
\end{table}

\end{document}